  \providecommand\BibTeX{{%
    \normalfont B\kern-0.5em{\scshape i\kern-0.25em b}\kern-0.8em\TeX}}}
\newcommand{\cal}[1]{\mathcal{#1}}
\theoremstyle{definition}
\newtheorem*{example*}{Example}
\tikzset{>=latex}
\tikzstyle{plate caption} = [caption, node distance=0, inner sep=0pt,
\setlist[enumerate]{nosep}
\setlist[itemize]{nosep}
\newtheoremstyle{exampstyle}
    {2pt} 
  {2pt} 
  {} 
  {} 
  {\bfseries} 
  {.} 
  {.5em} 
  {} 
\theoremstyle{exampstyle} 
\theoremstyle{exampstyle}
\newtheorem{definition}{Definition}
\newtheorem{proposition}{Proposition} 
\renewcommand{\u}{u}
\newcommand{\y}{y}
\newcommand{\Y}{Y}
\newcommand{\x}{\bm{x}}
\newcommand{\X}{\bm{X}}
\newcommand{\w}{\bm{w}}
\newcommand{\W}{\bm{W}}
\newcommand{\z}{\bm{z}}
\newcommand{\Z}{\bm{Z}}
\newcommand{\E}{\mathbb{E}}
\DeclareMathOperator*{\V}{\mathbb{V}}
\DeclareMathOperator*{\p}{\textit{P}}
\newcommand{\overbar}[1]{\mkern 1.5mu\overline{\mkern-1.5mu#1\mkern-1.5mu}\mkern 1.5mu}
\renewcommand{\cal}[1]{\mathcal{#1}}
\newcommand{\mathintitle}[1]{\texorpdfstring{#1}{TEXT}}
\begin{document}

\title{Marrying Fairness and Explainability in Supervised Learning}

\author{Przemyslaw Grabowicz}
\affiliation{%
  \institution{University of Massachusetts Amherst}
  \city{Amherst}
  \state{Massachusetts}
  \country{USA}}
\email{grabowicz@cs.umass.edu}

\author{Nicholas Perello}
\affiliation{%
  \institution{University of Massachusetts Amherst}
  \city{Amherst}
  \state{Massachusetts}
  \country{USA}}
\email{nperello@umass.edu}

\author{Aarshee Mishra}
\affiliation{%
  \institution{University of Massachusetts Amherst}
  \city{Amherst}
  \state{Massachusetts}
  \country{USA}}
\email{aarsheemishr@umass.edu}

\renewcommand{\shortauthors}{Grabowicz, et al.}

\begin{abstract}
  Machine learning algorithms that aid human decision-making may inadvertently discriminate against certain protected groups. Therefore, we formalize direct discrimination as a direct causal effect of the protected attributes on the decisions, while \textit{induced} discrimination as a change in the causal influence of non-protected features associated with the protected attributes. The measurements of marginal direct effect (MDE) and SHapley Additive exPlanations (SHAP) reveal that state-of-the-art fair learning methods can induce discrimination via association or reverse discrimination in synthetic and real-world datasets. To inhibit discrimination in algorithmic systems, we propose to nullify the influence of the protected attribute on the output of the system, while preserving the influence of remaining features. We introduce and study post-processing methods achieving such objectives, finding that they yield relatively high model accuracy, prevent direct discrimination, and diminishes various disparity measures, e.g., demographic disparity.
\end{abstract}

\begin{CCSXML}
<ccs2012>
   <concept>
       <concept_id>10010147.10010257.10010321</concept_id>
       <concept_desc>Computing methodologies~Machine learning algorithms</concept_desc>
       <concept_significance>500</concept_significance>
       </concept>
   <concept>
       <concept_id>10010405.10010455</concept_id>
       <concept_desc>Applied computing~Law, social and behavioral sciences</concept_desc>
       <concept_significance>100</concept_significance>
       </concept>
   <concept>
       <concept_id>10010147.10010257.10010258.10010259</concept_id>
       <concept_desc>Computing methodologies~Supervised learning</concept_desc>
       <concept_significance>500</concept_significance>
       </concept>
 </ccs2012>
\end{CCSXML}

\ccsdesc[500]{Computing methodologies~Machine learning algorithms}
\ccsdesc[100]{Applied computing~Law, social and behavioral sciences}
\ccsdesc[500]{Computing methodologies~Supervised learning}

\keywords{machine learning, explainability, algorithmic fairness, discrimination, supervised learning}

\maketitle


\section{Introduction}
Discrimination consists of treating somebody unfavorably because of their membership to a particular group, characterized by a \textit{protected attribute}, such as race or gender. 
Freedom from discrimination is outlined as a basic human right by the Universal Declaration of Human Rights.
In the legal~\cite{titlevii, fairhousing} and social science~\cite{Ture1968Black,Altman2016Discrimination,LippertRasmussen2012Badness} contexts, a key consideration serving as the basis for identifying discrimination is whether there is a disparate treatment or unjustified disparate impact on the members of some protected group.
To prevent disparate treatment, the law often forbids the use of certain protected attributes, $Z$, such as race or gender, in decision-making, e.g., in hiring. 
Thus, these decisions, $Y$, shall be based on a set of relevant attributes, $\X$, and should not depend on the protected attribute, $Z$, i.e., $\p(y|\x,z) = \p(y|\x,z')$ for any $z, z'$, ensuring that there is no \textit{disparate treatment}.\footnote{Throughout the manuscript we use a shorthand notation for probability: $\p(y|\x,z) \equiv \p(Y=y|\X=\x,Z=z)$, where $\X,Y,Z$ are random variables, $\x,y,z$ are their instances, and $\p$ is a probability distribution or density.}
We refer to this kind of discrimination as \textit{direct discrimination} (or lack of thereof), because of the direct use of the protected attribute~$Z$.

Historically, the prohibition of direct discrimination was sometimes circumvented by the use of variables correlated with the protected attribute as proxies. 
For instance, some banks systematically denied loans and services, intentionally or unintentionally, to certain racial groups based on the areas they lived in~\cite{Zenou2000Racial,Hernandez2009Redlining}, which is known as the phenomenon of ``redlining''.
In order to prevent such \textit{inducement of discrimination}, the legal system of the United States has established that the impact of a decision-making process should be the same across groups differing in protected attributes~\cite{LippertRasmussen2012Badness,Altman2016Discrimination}, that is $\p(y|z) = \p(y|z')$, unless there is a ``justified reason'' or ``business necessity'' for this \textit{disparate impact}~\cite{titlevii, fairhousing}. 

\begin{figure}[t]
\centering
 \centering
    \scalebox{0.7}{
    \begin{tikzpicture}
    \node[obs]                   		(x1b)      {$x_1$} ; %
    \node[obs, right=of x1b]        		(x2b)      {$x_2$} ; %
    \node[obs, right=of x2b]       		(zb)      {$z$} ; %
    \node[obs, below=of x2b]	(yb)      {$y$} ; %
    
    \draw [->] (3.5,2) -- (2.5,1);
    
    \node[obs, right=of zb, xshift=1.5cm]     	(x1a)      {$x_1$} ; %
    \node[obs, right=of x1a]        	(x2a)      {$x_2$} ; %
    \node[obs, right=of x2a]       		(za)      {$z$} ; %
    \node[obs, below=of x2a]	(ya)      {$y$} ; %
    
    \draw [->] (6.5,2) -- (7.5,1);
    
    \node[obs, above=of $(zb)!0.5!(x1a)$, yshift=0.5cm]	(y)      {$y$} ; %
    \node[obs, above=of y]        		(x2)      {$x_2$} ; %
    \node[obs, left=of x2]        		(x1)      {$x_1$} ; %
    \node[obs, right=of x2]       		(z)      {$z$} ; %
    
    \edge[-,dashed] {x2} {z} ; %
    \edge[-,dashed] {x2a} {za} ; %
    \edge[-,dashed] {x2b} {zb} ; %
    \edge {x1,z} {y} ; %
    \edge {x1b,x2b} {yb} ; %
    \edge {x1a} {ya} ; %
    
    \node[draw] at (5.1,4.5) {discriminatory process generating training data};
    \node[draw] at (1.8,-2.7) {standard learning dropping $Z$};
    \node[draw] at (8.5,-2.7) {infl. of $Z$ removed, infl. of $X_1$ preserved};
    
    \end{tikzpicture}
    }
 \caption{An illustration of the graphical models that result from applying different learning methods to the example scenario: standard learning dropping $\Z$ (bottom left), the removal of influence of $\Z$ while preserving the influence of $\X$ (bottom right). The directed edges correspond to causal relations, while the dashed edge to a potentially unknown relationship, e.g., a non-causal association.}
\label{fig:graphs}
\end{figure}
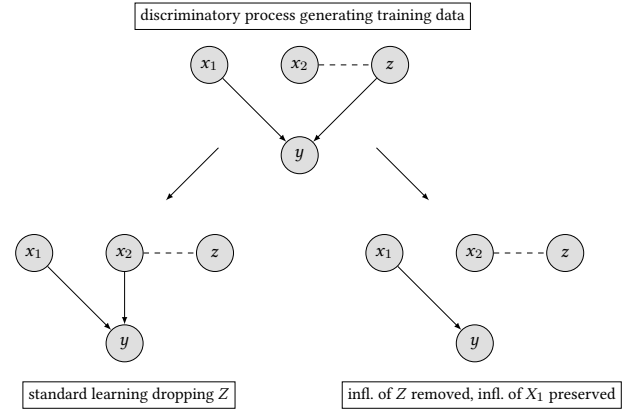
\textbf{Example.}
The following example runs through the manuscript.
Consider a hypothetical linear model of loan interest rate, $\Y$. Prior works suggest that interest rates differ by race, $Z$~\cite{Turner1999Mortgage, Bartlett2019Consumer}. Some loan-granting clerks may produce non-discriminatory decisions, $y=\beta_0-x_1$, while other clerks may discriminate directly, $y_\text{dir}=\beta_0-x_1-z$ (see the graphical model in the top of Figure~\ref{fig:graphs}), where $\beta_0$ is a fixed base interest rate, $x_1$ is a relative salary of a loan applicant, $x_2$ is an encoding of the zip code (positive for wealth neighbourhoods, negative otherwise), while $z$ encodes race and takes some positive (negative) value for White (non-White) applicants. 
If the protected attribute is not available (e.g., loan applications are submitted online), then a discriminating clerk may induce discrimination in the interest rate, by using a proxy for race, $y_\text{ind}=\beta_0-x_1-x_2$, where $x_2$ is the proxy. This case corresponds to the aforementioned real-world phenomenon of redlining.
%
If we trained a model on the dataset $D = \{( x_1, x_2, z, y_\text{dir} )\}$ without using the protected attribute, since it is prohibited by law, then we would induce indirect discrimination in the same way as redlining.
To see this point, assume for simplicity that all variables have a zero mean and 
there's no correlation between $X_1$ and $Z$ and a positive correlation, $r>0$, between $X_2$ and $Z$. If we applied standard supervised learning under the quadratic loss, then we would learn the model $\hat{y}_1 = \beta_0 - x_1 -z$. If we dropped the protected attribute, $Z$, before regressing $Y_\text{dir}$ on the attributes $X_1$ and $X_2$, then we would learn the model $\hat{y}_2 = \beta_0 - x_1 - r x_2$, that induces via $X_2$ indirect discrimination growing in proportion to $r$.

\textbf{Interdisciplinary challenge.}
There is a substantial and quickly growing literature on fairness in machine learning. However, its connection to the legal literature is underdeveloped, e.g., legal background is missing in the first textbook on fair machine learning (as of May 2022)~\cite{solonbarocas2019fairness}, and business necessity is often neglected, which may be slowing down the widespread adoption of fair machine learning methods~\cite{Madaio2021Assesing}.
In supervised learning, potentially any feature that improves model predictiveness on deployment could be claimed to fulfil a business necessity. 
However, how does one prevent such features from being used for unintentional inducement of discrimination?
This is a particularly acute problem for data-rich machine learning systems, since they often can find surprisingly accurate surrogates for protected attributes when a large enough set of legitimate-looking variables is available, resulting in discrimination via association~\cite{Wachter2019Affinity}. 
Causality-based research offers so-called \textit{path-specific counterfactual fairness} that enables designation of fair paths for business necessity~\cite{Nabi2019Learning, Chiappa2019Path,Wu2019PC}, but these approaches rely on causal assumptions, arbitrary reference interventions, achieve sub-optimal model accuracy, and do not formally prevent induced discrimination via fair paths.
Our study brings the concepts inspired by legal systems to supervised learning, which necessitates less assumptions and is used more widely than causal discovery, e.g., we make no assumptions about the relationship between $X_2$ and $Z$ (dashed line in Figure~\ref{fig:graphs}).
The big challenge in introducing non-discriminatory supervised learning algorithms is preventing direct discrimination without inducing indirect discrimination while enabling the necessity of businesses to maximizing model accuracy.

\textbf{Contributions.}
To the best of our knowledge, this is the first study that fills the gap between fair supervised learning and legal systems by bridging causal notions of fairness with the literature on explainable artificial intelligence. 
%
We propose methods for removing direct discrimination from models that allows a limited use of features that prevents their use as a proxy for the protected attribute (the bottom right part of Figure~\ref{fig:graphs}).
Specifically, first we define the concepts of direct, indirect, and induced discrimination via the measures of causal influence and tie them to legal instruments.
While doing so, we point out that induced discrimination can happen both for causal models of real-world decision-making processes and any other models that approximate such processes.
Second, we construct loss functions that aim to remove the influence of the protected attribute, $\Z$, while maintaining the influence of the remaining features, $\X$, using a novel measure of marginal direct effect (MDE) and a well-known input influence measure (SHAP).
Third, we show that dropping the protected attribute before training in standard supervised learning would result in increased influence 
of features associated with the protected attribute. 
Fourth, we introduce marginal interventional mixture models that drop $\Z$ while minimizing the inducement of discrimination through $\X$. We show that this method keeps influence of $\X$ and~$\Z$ close to the target values and, in addition, decreases popular disparity measures, while keeping high model accuracy.
Our methods are released publicly via an easy-to-use \texttt{FaX-AI} Python library (\url{https://github.com/social-info-lab/FaX-AI}).

\section{Related works}
In machine learning, discrimination is typically defined based on statistical independence~\cite{Pedreshi2008Discrimination, Feldman2014Certifying, Zafar2015Fairness,  Zafar2017Fairnessa, Zafar2017Fairness, Hardt2016Equality, Zafar2017Parity, Woodworth2017Learning, Pleiss2017Fairness, Donini2018Empirical, Aswani2019Optimization, Oneto2020General} or causal relations~\cite{Kilbertus2017Avoiding, Kusner2017Counterfactual, Zhang2018Fairness, Salimi2019Capuchin}.
Well-known fairness objectives, such as parity of impact and equalized odds, correspond or are related to the statistical independence between $\Z$ and $\Y$~\cite{Aswani2019Optimization}.
However, legal systems allow for exceptions from this independence through the business necessity clause, which permits usage of an attribute $\X$ associated with $\Z$ and results in the decisions $\Y$ depending on $\Z$ through $\X$ if it fulfils certain business necessity. 
Hence, the notions of discrimination based on the statistical independence between $Y$ and $Z$ are misaligned with their legal counterparts~\cite{Lipton2019Troubling}, which results in shortcomings. For instance, the algorithms that put constraints on the aforementioned disparities in treatment and impact~\cite{Pedreshi2008Discrimination,Feldman2014Certifying,Zafar2015Fairness} could negatively affect females with short hair and/or programming skills, because of those features' (fair or unfair) association with males~\cite{Lipton2018Does}.

A relevant line of research proposes to define direct and indirect discrimination as direct and indirect causal influence of $\Z$ on~$\Y$, respectively~\cite{Zhang2017causal,Zhang2018Fairness}. 
While this notion of direct discrimination is consistent with the concept of disparate treatment in legal systems, the corresponding indirect discrimination is not consistent with them, since the business necessity clause allows the use of an attribute that depends on the protected feature (causally or otherwise), if the attribute is judged relevant to the decisions made. For instance, the majority's view in the Supreme Court case of Ricci v.~DeStefano~\cite{ricci} argued that the defendants could not argue that the disputed promotion examinations results were inconsistent with business necessity.
\textit{Path-specific} notions of causal fairness address this issue to a limited extent~\cite{Nabi2019Learning, Chiappa2019Path,Wu2019PC}.
These methods introduce \textit{fair causal paths}, i.e., the paths through which the impact of the protected attribute is permitted, hence enabling business necessity. 
However, if there is no limit on the influence that can pass through such a path, then the path can be used for discrimination, as in the aforementioned case of \textit{redlining}. 
This limit is not a focus of prior works~\cite{Kilbertus2017Avoiding, Kusner2017Counterfactual, Zhang2018Fairness, Salimi2019Capuchin, Nabi2019Learning, Chiappa2019Path,Wu2019PC}, but it is crucial to prevent induced discrimination in machine learning.
In addition, for the removal of protected attributes these works rely on causal assumptions and a reference intervention, which is a standard technique in causality literature, but the reference intervention is arbitrary and may decrease model accuracy, as we show in Section~\ref{sec:vs-chiappa}.
To the best of our knowledge, this work is the first to define and inhibit induced discrimination in supervised learning on the grounds of causality and explainability research. 

\section{Problem formulation of fair and explainable learning}
\label{sec:fair}

Consider decisions $\Y$ that are outcomes of a process acting on non-protected variables $\X$ and protected variables $\Z$, where $\x\in \mathcal{X}$, $\z\in \mathcal{Z}$, $\y\in \mathcal{Y}$, i.e., the variables can take values from any set, e.g., binary or real.
Protected and non-protected features are indexed, e.g., $X_i$ corresponds to the $i$'th feature (component).
The decisions are generated via a function $\y=\y(\x,\z, \epsilon)$, where $\epsilon$ is an exogenous noise variable. Since the exogenous noise is unpredictable, we focus on the de-noised function $\y(\x,\z)=\E_{\epsilon} y(\x,\z,\epsilon)$ for notational simplicity.
The process generating decisions $\Y$ corresponds either to a real-world causal mechanism or its model,
while the inducement of indirect discrimination shall be prevented on legal grounds in either case (see Subsection~\ref{sec:legal-responsibility}). 
These decisions can represent any decision-making process, e.g.: i)~estimates of recidivism risk for a crime suspect, given some information about their prior offenses $\x$ and their race~$\z$, or ii)~credit score assignments for a customer, given their financial record $\x$ and their gender $z$.

The goal of standard supervised learning is to obtain a function $\hat{\y}: \mathcal{X} \times \mathcal{Z} \to \mathcal{Y}$ that minimizes an expected loss, e.g., $\E[\ell( \Y, \hat{\y}(\X,\Z) )]$, where the expectation is over the set of training samples $(\x,\z,\y)$ and $\ell$ is a loss function such as quadratic loss, $\ell(\y,\hat{y})=(\y-\hat{y})^2$.
If the dataset is tainted by discrimination, then a data science practitioner may desire, and, in principle, be obliged by law, to apply an algorithm that does not perpetuate this discrimination.
For example, $\Y$ could correspond to past hiring decisions, which we now want to automate with model $\hat{\Y}$. If historical discrimination in hiring took place, then $\Y$ would be tainted, and a suitable fair machine learning algorithm would be needed.
In this setting, $\hat{\Y}$ can be \textit{altered} w.r.t. the model of the original decisions $\Y$ to prevent discrimination. 
The crucial question is how to drop~$\Z$ from the model without inducing discrimination, that is, without increasing the impact of attributes $\X$ associated with $\Z$ in an unjustified and discriminatory way.


We propose that a non-discriminatory model shall remove the influence of the protected features $\Z$ on $\Y$, while preserving the influence of the remaining attributes $\X$ on $\Y$. This method allows addition of features to the model that increase model predictiveness, while preventing them from being used inadvertently as proxies for the protected features. 
%
To preserve influence of non-protected attributes, we define and minimize special loss functions. Such losses can be constructed on the grounds of causal influence (CDE, MDE), or model input influence or feature relevance measures (SHAP).
If there are many non-protected attributes, then the influence can be preserved for each of them separately or all of them together; we study both cases.

\subsection{Legal notions and responsibility for decision-making models}
Before we deep dive into mathematical definitions of respective loss functions, we first define a couple of abstractions of legal instruments by tying them to decision-making models and discuss legal responsibility for a model.

\subsubsection{Legality of the influence of protected features and their relationships with other attributes}
\label{sec:legal-influence}
We define unfair influence and fair relationship between protected attributes $\Z$ and decisions $\Y$ by tying them to legal instruments, i.e., legal terms of art that formally express a legally enforceable act.
\begin{definition}
\textbf{Unfair influence} is an influence of protected feature(s) $\Z$ on specified type of decisions $\Y$ that is judged illegal via some legal instrument. 
\end{definition}
For instance, the U.S. Civil Rights Act of 1968 (Titles VIII and IX, known as Fair Housing Act)~\cite{fairhousing} determines that decisions about sale, rental, and financing of housing shall not be influenced by race, creed, and national origin; the U.S. Civil Rights Act of 1964 (Title VII)~\cite{titlevii} determines that hiring decisions shall not be influenced by race, color, religion, sex, and national origin. 

In the context of making decisions $\Y$ using features $\X$, some of the features may be associated with, or affected by, the protected attribute $\Z$. Some of such features may be legally admissible for use in the decision-making if they are not \textit{unfairly influenced}, are relevant to decisions~$\Y$, and fulfil a business purpose.
\begin{definition}
\textbf{Fair relationship} of protected feature(s) $\Z$ with non-protected feature(s) $\X$ is a relationship in the context of making decisions $\Y$ that is judged legal via some legal instrument, e.g., business necessity clause. 
\end{definition}
For instance, in graduate admissions to University of California Berkeley it was found that females were less often admitted than males~\cite{bickel1975sex}. However, females applied to departments with lower admission rates than males and the overall admission process was judged legal. If we represent department choice with $X$, then we could use this feature in the model of admission decisions $\Y$, despite the fact that $X$ is causally influenced by gender.
Prior research shows that features perceived as fair tend to be volitional~\cite{grgichlaca2018human}, as in the above example.

From the perspective of supervised learning, the definitions of unfair and fair influence are exclusion and inclusion rules, respectively, determining which features are legally admissible in the model of $\Y$. Legal texts typically clearly define unfair influence, but fair relationships are determined on case-by-case basis. It is reasonable to assume that the purpose of business is to develop a model that on deployment is the most predictive possible. 
One could argue that any feature that is predictive of $\Y$ and different than $\Z$ fulfills business necessity and is fair to use. However, some of such features may be affected by \textit{unfair influence}. In such cases, one can remove $\Z$ from the unfairly influenced $\X$ and, then, from $\Y$, without inducing discrimination (see Section~\ref{sec:indirect-removal}).

\subsubsection{Legal responsibility for a decision-making model vs. its causal interpretation}
\label{sec:legal-responsibility}

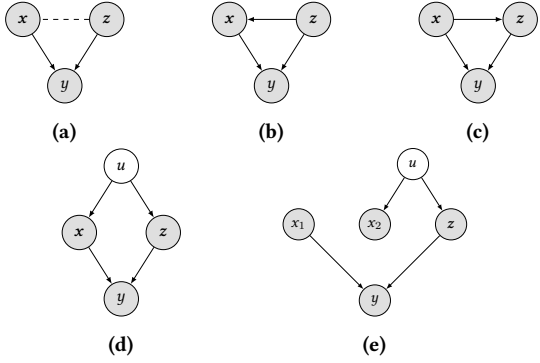
\begin{figure}[t]
\centering
\begin{subfigure}[b]{0.15\textwidth}
 \centering
    \scalebox{0.65}{
    \begin{tikzpicture}
    \node[obs]                   		(x)      {$\x$} ; %
    \node[obs, right=of x]       		(z)      {$\z$} ; %
    \node[obs, below=of $(x)!0.5!(z)$]	(y)      {$\y$} ; %
    \edge[dashed,-] {z} {x} ; %
    \edge {x,z} {y} ; %
    \end{tikzpicture}
    }
 \caption{}
 \label{fig:graph2a}
\end{subfigure}
\begin{subfigure}[b]{0.15\textwidth}
 \centering
    \scalebox{0.65}{
    \begin{tikzpicture}
    \node[obs]                   		(x)      {$\x$} ; %
    \node[obs, right=of x]       		(z)      {$\z$} ; %
    \node[obs, below=of $(x)!0.5!(z)$]	(y)      {$\y$} ; %
    \edge {z} {x} ; %
    \edge {x,z} {y} ; %
    \end{tikzpicture}
    }
 \caption{}
 \label{fig:graph2b}
\end{subfigure}
\begin{subfigure}[b]{0.15\textwidth}
 \centering
    \scalebox{0.65}{
    \begin{tikzpicture}
    \node[obs]                   		(x)      {$\x$} ; %
    \node[obs, right=of x]       		(z)      {$\z$} ; %
    \node[obs, below=of $(x)!0.5!(z)$]	(y)      {$\y$} ; %
    \edge {x} {z} ; %
    \edge {x,z} {y} ; %
    \end{tikzpicture}
    }
 \caption{}
 \label{fig:graph2c}
\end{subfigure}
\begin{subfigure}[b]{0.15\textwidth}
 \centering
    \scalebox{0.65}{
    \begin{tikzpicture}
    \node[obs]                   		(x)      {$\x$} ; %
    \node[obs, right=of x]       		(z)      {$\z$} ; %
    \node[latent, above=of $(x)!0.5!(z)$]                  		(u)      {$\u$} ; %
    \node[obs, below=of $(x)!0.5!(z)$]	(y)      {$\y$} ; %
    \edge {x,z} {y} ; %
    \edge {u} {x,z} ; %
    \end{tikzpicture}
    }
 \caption{}
 \label{fig:graph2d}
\end{subfigure}
\begin{subfigure}[b]{0.22\textwidth}
 \centering
    \scalebox{0.59}{
    \begin{tikzpicture}
    \node[obs]                   		(x1)      {$x_1$} ; %
    \node[obs, right=of x1]        		(x2)      {$x_2$} ; %
    \node[obs, right=of x2]       		(z)      {$\z$} ; %
    \node[obs, below=of x2]	(y)      {$\y$} ; %
    \node[latent, above=of $(x2)!0.5!(z)$]                  		(u)      {$\u$} ; %
    \edge {x1,z} {y} ; %
    \edge {u} {x2,z} ; %
    \end{tikzpicture}
    }
 \caption{}
 \label{fig:graph2e}
\end{subfigure}

\caption{The consi\-dered setting. 
We make no assumptions about the relations between $\x$ and $\z$ (marked with a dashed edge), nor their components. Hence, the graph (a) includes all exemplary cases (b-e). The graph (e) depicts the data-generating process from Example and shows no relationship between the components $x_1$ and $x_2$ of $\x$. The random variable $\u$ is an exogenous noise, i.e., an unmeasured independent random variable.}
\label{fig:graph2}
\end{figure}


%
To determine responsibility for potentially harmful decisions, legal systems consider the epistemic state of decision-makers~\cite{shafer2001causality, chockler2003responsibility}, 
e.g., whether an employer knew about discrimination in company's hiring process, and their intentions~\cite{Altman2016Discrimination}, i.e., the employer may be expected to do their due diligence to identify discrimination and to correct their hiring process given their knowledge.
In the context of decision-making models, the epistemic state corresponds to a potentially discriminatory model~$\Y$ of the respective real-world decision-making, whereas intentions correspond to learning objectives, methods, and feature selection that result in a discriminatory model~$\Y$ and a desired non-discriminatory~$\hat{Y}$.
The first step towards developing non-discriminatory models is finding accurate and robust, potentially causal~\cite{forster1994how, Spirtes2016Causal, Constantinou2020Large-scale}, models of discriminatory decisions in close collaboration with domain experts.
Machine learning models are developed in best faith to maximize accuracy, but often are not causal and not robust to covariate shifts~\cite{koh2020wilds, sagawa2020investigation}, i.e., they constitute an inaccurate epistemic state. 
Unfortunately, in practice it may be impossible to test causal validity of model $y(\x,\z)$, because of limited and unobserved data, privacy concerns, and the infeasibility or prohibitive costs of causal experimentation. 
In such situations, legal systems may acquit model developers if the intentions and reasoning behind the development process of models of $\Y$ and $\hat{Y}$ were legally admissible, despite the incorrect epistemic state.
%
Either way, whether the model at hand does or does not represent causal relations between variables in the real world, the model is causal w.r.t. its own predictions and the parents of these predictions are $\X$ and, possibly, $\Z$, as detailed in the causal explainability literature~\cite{Janzing2019Feature}. That model can suffer the effects of training on discriminatory data. In the remainder of this paper, we use $Y$ to refer both to the causal process and its model, since the two are the same in the former "ideal" causal setting, but our reasoning and approach is applicable to the latter "practical" non-causal settings as well, since the induction of indirect discrimination is questionable on legal grounds, i.e., decision-maker's epistemic state may be incorrect, but their intentions shall be good (to identify and prevent discrimination using reasonable methods).

\subsection{Problem formulation based on causal influence measures}

Formal frameworks for causal models include classic potential outcomes (PO) and structural causal models (SCM)~\cite{pearl2009causality}. Other frameworks, such as segregated graphs~\cite{Shpitser2015Segregated} and extended conditional independence~\cite{Dawid2008Beware} generalize the classic frameworks, e.g., they introduce undirected and bidirectional causal relationships.
The methods proposed here rely only on the notion of intervention, which tends to have a consistent meaning across causal frameworks. 

The following formulas are for the graphs depicted in Figure~\ref{fig:graph2}, where all variables are observed.
We assume that there are direct causal links from $\X$ and $\Z$ to $\Y$. If this assumption does not hold, e.g., because supervised learning is used for nowcasting instead of forecasting, then the following methodology may suffer collider bias (Berkson's paradox)~\cite{Spirtes2016Causal, Constantinou2020Large-scale}. For instance, if the underlying causal graph is $\Y\rightarrow X \leftarrow Z$, then conditioning on $X$ makes $\Y$ and $Z$ depend on each other, despite the fact that $Z$ does not causally influence $Y$, so supervised learning based on samples $(x,z,y)$ would yield a model in which $Z$ unfaithfully (w.r.t. the causal graph) influences the model of $Y$.
We make no assumptions about the relations between $\X$ and $\Z$ and their components (Figure~\ref{fig:graph2a}), e.g., these relations may be direct causal links (Figure~\ref{fig:graph2b}-\ref{fig:graph2d}) or associations (Figure~\ref{fig:graph2e}). Finally, it is assumed that there are no unmeasured confounders. 

%
In the notation of SCM and PO, the potential outcome for variable $\Y$ after intervention $do(\X=\x,\Z=\z)$ is written as $Y_{\x,\z}$, which is the outcome we would have observed had the variables $\X$ and $\Z$ been set to the values $\x$ and $\z$ via an intervention.
The causal \textit{controlled direct effect} on $\Y$ of changing the value of $\Z$ from a reference value $\z$ to $\z'$ given that $\X$ is set to $\x$~\cite{pearl2009causality} is
\begin{align}
    \text{CDE}_{\Y}(\z',\z | \x) = \E[ \Y_{\x,\z'} - \Y_{\x,\z}].
\end{align}
%


Next, we define direct, indirect, and induced discrimination by tying the causal concept of controlled direct effect to the notions of \textit{fair influence} and \textit{unfair relationship}, which are abstractions of respective legal concepts.



%
\begin{definition}
\textbf{Direct discrimination} is an \textit{unfair influence} of protected attribute(s) $\Z$ on the decisions $\Y$ and \mbox{$\exists_{z,z'} \exists_{\x} \text{CDE}_{\Y}(\z,\z' | \x) \neq 0$}.
\end{definition}



\begin{definition}
\textbf{Indirect discrimination} is an influence on the decisions $\Y$ of feature(s)~$\X$ whose \textit{relationship} with $\Z$ is not \textit{fair} and $\exists_{x,x'} \exists_{\z} \text{CDE}_{\Y}(\x,\x' | \z) \neq 0$.
\end{definition}

To remove direct discrimination, one can construct a model~$\hat{Y}$ that does not use $\Z$.
However, the removal of direct discrimination may induce discrimination via the attributes $X_i$ associated with the protected attributes $\Z$, even if there is no causal link from $\Z$ to $X_i$.

\begin{definition}
\textbf{Discrimination induced} via~$X_i$ is a transformation of the process generating $\Y$ not affected by direct and indirect discrimination into a new process $\hat{Y}$ that modifies the influence of certain $X_i$ depending on $\Z$ between the processes $Y$ and $\hat{Y}$ in the sense that 
$\exists_{\z} \exists_{\x,\x'} 
\text{CDE}_{\Y}(\x,\x' | \z) \neq \text{CDE}_{\hat{Y}}(\x,\x' | \z)$ given that 
$P(\x|\z) \neq 
P(\x)$ or $P(\x'|\z) \neq 
P(\x')$.
\end{definition}

\begin{example*}
Consider the aforementioned linear models of loan interest rate, $\hat{y}_1$ and $\hat{y}_2$. Note that $CDE_{\hat{y}_1}(\x,\x' | \z) - CDE_{\hat{y}_2}(\x,\x' | \z) = r*(x_2-x_2') $, since $\x$ has two components $\x_1$ and $\x_2$ and the first component is reduced, so the model~$\hat{y}_2$, that drops the protected attribute, induces indirect discrimination via $X_2$, because $X_2$ serves as a proxy of $Z$.
\end{example*}

Following causal inference literature~\cite{pearl2009causality}, to measure the extent of induced discrimination, we introduce natural direct and indirect effects using nested counterfactuals, $\Y_{\X_{\z},\z'}$, denoting the outcome that would have been observed if $\Z$ were set to $\z'$ and $\X$ were set to the value it would have taken if $\Z$ were set to $\z$.
\textit{Natural direct effect} of changing the value of $\Z$ from a reference value $\z$ to $\z'$ is
\begin{align}
    \text{NDE}_{\Y}(\z',\z ) = \E[ \Y_{\X_{\z},\z'} - \Y_{\X_{\z},\z}].
    \label{eq:nde}
\end{align}
However, the measure NDE faces some challenges: to see this, consider the graphs in Figure~\ref{fig:graph2}. For the graph in Figure~\ref{fig:graph2b} the \textit{causal mediation formula}~\cite{pearl2009causality} yields
\begin{equation}
\begin{split}
    \text{NDE}_{\Y}(\z',\z ) &= 
    \E[ \Y_{\X_{\z},\z'} - \Y_{\X_{\z},\z}]\\&= 
    \E_{\X\sim P(\X|\z)}[\Y|\X,\z'] - \E_{\X\sim P(\X|\z)}[\Y|\X,\z].
\end{split}
\end{equation}
For the graphs in Figures~\ref{fig:graph2c} and~\ref{fig:graph2d} it yields a different value,
\begin{align}
    \text{NDE}_{\Y}(\z',\z ) = 
    \E_{\X\sim P(\X)}[\Y|\X,\z'] - \E_{\X\sim P(\X)}[\Y|\X,\z],
\end{align}
because in this case $\X$ is not causally affected by $\Z$ and, hence, here the expectations are over $P(\X)$ instead of $P(\X|\z)$. These expectations come from the nested interventions, i.e., $\X_{\z}$.
We argue that the direct effect of $\Z$ on $\Y$ shall not depend on the direction of the causal link between $\X$ and $\Z$. Furthermore, the choice to use $\X_{\z}$ as a reference value in the definitions of natural direct effects is arbitrary, e.g., one could use $\X_{\z'}$ instead. To address these two issues, we introduce a \textit{marginal direct effect} as
\begin{equation}
\begin{split}
    \text{MDE}_{\Y}(\z',\z ) 
    &= \E[ \Y_{\X'',\z'} - \Y_{\X''_,\z}]
    = \E_{\X\sim P(\X)} \text{CDE}_{\Y}(\z',\z | \X)\\
    &= \E_{\X\sim P(\X)}[\Y|\X,\z'] - \E_{\X\sim P(\X)}[\Y|\X,\z],
\end{split}
\end{equation}
which takes an expectation over the probabilistic interventions on $\X$, setting its value to random samples of $\X''$, where $\X''$ is a variable independent from all other variables, but has the same marginal distribution as $\X$. 
This measure yields the same value for all graphs in Figure~\ref{fig:graph2}.
Then, to preserve influence of non-protected attributes we can minimize the following loss
\begin{align}
L_\text{MDE}(\X) = 
\E_{\X'',\X} \ell( \text{MDE}_{\Y}(\X,\X''), \text{MDE}_{\hat{\Y}}(\X,\X'') ).
\label{eq:loss-mde}
\end{align}
or its feature-specific version, which computes the loss separately for each component of $\X$, 
\begin{equation}
\begin{split}
L_\text{MDE}^\text{IND}(\X) &=
\sum_i  L_\text{MDE}(X_i) \\ &=
\sum_i  \E_{X_i'',X_i} \ell( \text{MDE}_{\Y}(X_i,X_i''), \text{MDE}_{\hat{\Y}}(X_i,X_i'') ) .
\label{eq:loss-mde-feature-specific}
\end{split}
\end{equation}


A similar loss could be constructed based on the comparison between $\text{CDE}_{\Y}(\X,\X'' | \Z)$ and $\text{CDE}_{\hat{\Y}}(\X,\X'' | \Z)$. In this paper we focus on losses based on MDE or the SHAP input influence measure described next.

\subsection{Problem formulation based on input influence measures}

Alternatively, influence can be measured on the grounds of input influence measures introduced to explain black-box AI models.
%
%
For the purpose of this section, we introduce a concatenation of variables $\X$ and $\Z$ as $\W=\X\Z$, i.e., samples of $\W$ are tuples $\w = (\x,\z)$ and $\w \in \mathcal{X} \times \mathcal{Z} = \mathcal{W}$.
Components of $\W$ are indexed, e.g., $W_i$ is the i-th feature among the set $\mathcal{F}$ of all protected and non-protected features, i.e., $i \in \mathcal{F}$.
To measure the influence of a certain feature $W_i$ prior works suggest to make a probabilistic intervention on that variable by replacing it with an independent random variable~\cite{Datta2016Algorithmic, Lundberg2017unified, Janzing2019Feature}. 
In particular, let primed variables have the same joint distribution as the non-primed variables, $\forall_{\w \in \cal{W}} P(\W'=\w)=P(\W=\w)$, while being independent from them, $\W' \perp \W$. 
Let double primed variables have the same marginal distributions as the non-primed variables, $\forall_{i\in \cal{F}} \forall_{w \in \cal{W}_i} P(W''_i=w)=P(W_i=w)$, and be independent from each other and the non-primed variables, i.e., $\forall_{i\in \cal{F}} \forall_{j \neq i} W''_i \perp W''_j$, $\W'' \perp \W'$ and $\W'' \perp \W$.

For any subset of features $T$ that does not contain $i$, prior works define a marginal influence (MI) using $\W'$ as a random baseline~\cite{Datta2016Algorithmic, Janzing2019Feature},
\begin{align*}
    \text{MI}_{\Y}(W_i | \w, T ) = 
    \E_{\W'} \left[Y_{\w_{T\cup \{i\}} \W'_{-(T\cup \{i\})}} - Y_{\w_T \W'_{-T}} \right],
\end{align*}
where the random variable $\W_T \W'_{-T}$ represents a concatenation of random variables $\W_T$ and $\W'_{-T}=\W'_{\cal{F} \setminus T}$, which amounts to a modified $\W$ with its components $W_i$, for each $i\in {\cal{F} \setminus T}$, replaced by the respective components of $\W'$; likewise $\w_T \W'_{-T}$ is a concatenation of sample $\w_T$ and random variable  $\W'_{-T}$.

A popular measure of the influence of input $w_i$ is based on the Shapley value (SHAP), which averages the marginal influence over all possible subsets $T$ of all features except for~$i$~\cite{Datta2016Algorithmic, Lundberg2017unified},
\begin{equation}
    \text{SHAP}_{\Y}(w_i | \w ) =  \sum_{T\subseteq \cal{F}\setminus \{i\}} \frac{\text{MI}_{\Y}(W_i | \w, T )}{ |\cal{F}| \binom{|\cal{F}|-1}{|T|} }. 
    \label{eq:shap}
\end{equation}

For instance, for the case of two variables,
\begin{equation}
    \text{SHAP}_{\Y}(\x |\x,\z) =  \E_{\X',\Z'} [(
Y_{\x,\z} - Y_{\X',\z} + Y_{\x,\Z'} - Y_{\X',\Z'})/2].
\end{equation}

Then, to preserve influence of non-protected attributes we can minimize the respective loss,
\begin{align}
L_\text{SHAP}(\X) = 
\E_{\X} \ell( \E_{\Z''} \text{SHAP}_{\Y}(\X |\X\Z''), \E_{\Z''} \text{SHAP}_{\hat{\Y}}(\X|\X\Z'') ),
\end{align}
or its feature-specific version,
\begin{equation}
\begin{split}
L_\text{SHAP}^\text{IND}(\X) =
\sum_i L_\text{SHAP}(X_i) \\ =  
\sum_i \E_{\X} \ell( \E_{\Z''} \text{SHAP}_{\Y}(X_i|\X\Z''), \E_{\Z''} \text{SHAP}_{\hat{\Y}}(X_i|\X\Z'') ).
\label{eq:loss-shap-feature-specific}
\end{split}
\end{equation}




While here we have constructed loss functions based on SHAP, other input influence measures, such as PFI or SAGE, can be used as well~\cite{Altmann2010Permutation, Ribeiro2016Why, sundararajan2017axiomatic, Marx2019Disentangling, Covert2020Understanding}. We leave the exploration of other losses for future works.

\section{Learning fair and explainable models}
We seek models $\hat{\Y}$ that remove the influence of the protected attributes $\Z$, while preserving the influence of non-protected attributes $\X$ by minimizing 
$L_\text{MDE}(\X)$ or $L_\text{SHAP}(\X)$, which lead to a simple closed-form solution, or their feature-specific versions, i.e., $L^\text{IND}_\text{MDE}(\X)$ or $L^\text{IND}_\text{SHAP}(\X)$, which we solve via transfer learning.
Either of these approaches can be used to remove direct or indirect discrimination (see example in Subsection~\ref{sec:indirect-removal}).

\subsection{Minimizing \mathintitle{$L_\text{MDE}(\X)$ or $L_\text{SHAP}(\X)$}}

\begin{definition}
\textbf{Interventional mixture} of a model $y(\x,\z)$ w.r.t. attribute $\Z$ is a model $\hat{y}_\pi(\x) = \E_{\tilde{\Z}} [\hat{y}(\x,\tilde{\Z})]$, where $\tilde{\Z}$ is a random variable independent from all other variables, has the same support as $\Z$, and a distribution $\pi(\tilde{\Z})$. \\
\textbf{Marginal interventional mixture} (MIM) is  $\hat{y}_\text{MIM}(\x) = \E_{\Z'} [\hat{y}(\x,\Z')]$. 
\end{definition}

\begin{proposition}
For variable $Y$, the objective $L_\text{MDE}(\X)$ is minimized by the MIM.  
\end{proposition}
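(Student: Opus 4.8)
The plan is to exploit the fact that the MIM, by construction, is a function of $\x$ alone, so evaluating its marginal direct effect is immediate, and then to show that this effect coincides \emph{pointwise} with that of the original process $\Y$, which drives every term of $L_\text{MDE}(\X)$ down to the minimum value of $\ell$. Throughout I read the $\text{MDE}$ appearing in the loss as the marginal direct effect of intervening on $\X$ (with $\Z$ marginalized out), i.e.\ the transpose of the definition stated for $\Z$, so that for any model $M$,
\[
\text{MDE}_{M}(\x,\x'') = \E_{\Z\sim P(\Z)}[M | \x,\Z] - \E_{\Z\sim P(\Z)}[M | \x'',\Z].
\]

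First I would specialize this to the two models entering the loss. For the original decisions this gives $\text{MDE}_{\Y}(\x,\x'') = \E_{\Z}[y(\x,\Z)] - \E_{\Z}[y(\x'',\Z)]$. For the MIM I would use that $\hat{y}_\text{MIM}(\x) = \E_{\Z'}[\hat{y}(\x,\Z')]$ is independent of the value of $\Z$, so the conditional expectations over $\Z$ collapse and $\text{MDE}_{\hat{\Y}_\text{MIM}}(\x,\x'') = \hat{y}_\text{MIM}(\x) - \hat{y}_\text{MIM}(\x'')$. The crux is then the identity $\hat{y}_\text{MIM}(\x) = \E_{\Z}[y(\x,\Z)]$, which holds because $\Z'$ is independent of $\X$ and shares the marginal distribution of $\Z$ (from the definition of the primed variables), so the interventional average defining the MIM equals the marginal average appearing in $\text{MDE}_{\Y}$. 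Combining these gives $\text{MDE}_{\hat{\Y}_\text{MIM}}(\x,\x'') = \text{MDE}_{\Y}(\x,\x'')$ for every pair $(\x,\x'')$.

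Substituting this equality into $L_\text{MDE}(\X) = \E_{\X'',\X}\,\ell(\text{MDE}_{\Y}(\X,\X''),\text{MDE}_{\hat{\Y}}(\X,\X''))$ makes the two arguments of $\ell$ coincide at every sample, so each term equals $\ell(a,a)$, the smallest value a discrepancy loss attains; since the quadratic loss is nonnegative and vanishes exactly on its diagonal, $L_\text{MDE}$ reaches its global lower bound $0$ at the MIM, which is therefore a minimizer. I would also note that this minimum is attained over \emph{all} models while the MIM itself uses no $\Z$, so it simultaneously removes direct discrimination and preserves the $\X$-influence.

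I expect the main obstacle to be bookkeeping rather than depth: carefully justifying the collapse of the conditional/interventional expectations via the independence and equal-marginal properties of $\Z'$, and making explicit the assumption on $\ell$ (minimized on its diagonal, as for quadratic loss, which is presumably what "for variable $\Y$" signals). A secondary point worth flagging is that the minimizer is not unique — any additive constant shift of $\hat{y}_\text{MIM}$ leaves the $\X$-MDE, and hence the loss, unchanged — so the statement should be read as: the MIM is \emph{a} minimizer.
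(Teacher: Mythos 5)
Your proof is correct and takes essentially the same route as the paper: unpack $L_\text{MDE}$ via the definition of MDE with $\Z$ marginalized out, observe that the MIM's marginal direct effect on $\X$ coincides pointwise with that of $\Y$ (both equal $\E_{\Z'}[y(\x,\Z')] - \E_{\Z'}[y(\x'',\Z')]$), and conclude the loss attains its global minimum of zero. Your added remarks on the required diagonal-minimality of $\ell$ and the non-uniqueness of the minimizer are sound refinements of what the paper states in one line.
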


\begin{proof}
$L_\text{MDE}(\X) = \E_{\X'',\X} \ell( \E_{\Z'} [ Y_{\X,\Z'} - Y_{\X'',\Z'}], \E_{\Z'} [ \hat{\Y}_{\X,\Z'} - \hat{\y}_{\X'',\Z'} ] )$, so for $\hat{y}_\text{MIM}(\x)=\E_{\Z} y(\x,Z)$ it is zero.
\end{proof}

\begin{proposition}
For a real-valued and analytic $y(x,z)$, the MIM is an interventional mixture that minimizes the objective $L_\text{SHAP}(\X)$.
\end{proposition}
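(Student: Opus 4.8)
The statement asserts two things: that the MIM is an interventional mixture, and that it minimizes $L_\text{SHAP}(\X)$. The first is immediate, since the MIM is the interventional mixture $\hat{y}_\pi$ whose mixing distribution $\pi$ is the marginal law of $\Z$, so I would dispatch it in one line. For the second, my plan is to evaluate both arguments of $\ell$ in $L_\text{SHAP}(\X)$ in closed form and show that the MIM makes them coincide, so that $\ell$ attains its minimum for every $\X$. Treating $\X$ and $\Z$ as the two coalitions, exactly as in the two-variable example preceding the proposition, I would first rewrite $\E_{\Z''}\,\text{SHAP}_\Y(\X | \X\Z'')$ purely in terms of interventional evaluations of $y$; the averaging over the independent copy $\Z''$ collapses every occurrence of the sample's protected value into an expectation over the marginal of $\Z$, that is, into $\hat{y}_\text{MIM}$.

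The core is a linearity-and-decomposition step, and this is where real-valuedness and analyticity are used. Because $y$ is analytic I would expand it into a convergent power series $y(\x,\z)=\sum_{\alpha,\beta}c_{\alpha,\beta}\,\x^{\alpha}\z^{\beta}$ and exploit that both the Shapley functional and the mixture operator $\E_{\Z'}[\,\cdot\,]$ are linear in the model, reducing the claim to a check on a single monomial $\x^{\alpha}\z^{\beta}$. For such a monomial the MIM replaces $\z^{\beta}$ by the moment $m_{\beta}=\E[\Z^{\beta}]$, yielding the $\Z$-independent model $m_{\beta}\x^{\alpha}$; I would then verify that the $\X$-coalition SHAP of $m_{\beta}\x^{\alpha}$ equals the $\Z''$-averaged $\X$-coalition SHAP of $\x^{\alpha}\z^{\beta}$. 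Summing back over $(\alpha,\beta)$, with analyticity supplying the dominated-convergence estimate needed to interchange the expectations with the infinite sum, yields that the MIM preserves the SHAP influence of $\X$ and hence minimizes $L_\text{SHAP}(\X)$.

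The delicate point I expect is reconciling the two different resamplings that occur: inside $\text{SHAP}$ the protected feature is drawn from the joint baseline $\W'$, whereas the evaluation point has its protected value replaced by the \emph{independent} copy $\Z''$. At the level of a monomial this is the gap between $\E[\X^{\alpha}\Z^{\beta}]$ and $\E[\X^{\alpha}]\,m_{\beta}$, namely a $\mathrm{Cov}(\X^{\alpha},\Z^{\beta})$ contribution that is constant in $\x$ and disappears when $\X$ and $\Z$ are uncorrelated. I would resolve this by observing that any such term is independent of $\x$, so it shifts the two SHAP expressions by the same constant and leaves their $\x$-dependent parts — the actual influence of $\X$ — matched exactly by the MIM; equivalently, among the interventional mixtures $\hat{y}_\pi$, the choice $\pi=P(\Z)$ aligns $\hat{y}_\pi$ with $\hat{y}_\text{MIM}$ and therefore attains the minimum. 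The rest is routine: checking that the relevant moments exist and that Fubini applies, both guaranteed by the real-valued analytic hypothesis.
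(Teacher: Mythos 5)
Your skeleton --- expand $y$ in a power series, use linearity of SHAP and of the mixture operator to reduce to a single monomial $\alpha x^k z^l$, and pin down the mixing distribution through its moments $\E\tilde{Z}^l$ --- is the same as the paper's (the paper aggregates the monomials by an induction with explicit cross terms rather than by a linearity appeal, but the skeleton coincides). The genuine gap is in your central mechanism. The two arguments of $\ell$ do \emph{not} coincide at the MIM, so the plan to "make them coincide, so that $\ell$ attains its minimum for every $\X$" cannot be carried out. For the monomial $\alpha x^k z^l$, the $Z''$-averaged SHAP of the original model equals $\alpha\bigl(X^k\overbar{Z^l} - \tfrac{1}{2}\overbar{X^k}\,\overbar{Z^l} - \tfrac{1}{2}\overbar{X^k Z^l}\bigr)$: the joint moment $\overbar{X^k Z^l}$ enters through the baseline term $Y_{\X',\Z'}$, because the primed baseline is drawn from the \emph{joint} law of $(\X,\Z)$ --- your claim that averaging over $Z''$ collapses every occurrence of the protected value into the MIM misses exactly this term. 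The SHAP of any interventional mixture, by contrast, is $\beta(X^k - \overbar{X^k})$ with $\beta = \alpha\E\tilde{Z}^l$ and contains no joint moment at all. Hence at the MIM the two sides differ by $\tfrac{\alpha}{2}(\overbar{X^k}\,\overbar{Z^l} - \overbar{X^k Z^l}) = -\tfrac{\alpha}{2}\mathrm{Cov}(X^k,Z^l)$, which is nonzero whenever $X^k$ and $Z^l$ are correlated. So your statement that the covariance term "shifts the two SHAP expressions by the same constant" is false --- it shifts only the $Y$-side --- and the loss at the MIM is the strictly positive constant $\tfrac{\alpha^2}{4}\mathrm{Cov}(X^k,Z^l)^2$, not zero.

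What is then missing is the reason this residual is \emph{irreducible} within the class of interventional mixtures, i.e., why no other choice of $\pi$ can trade part of the constant against the $x$-dependent mismatch; your "equivalently, the choice $\pi=P(\Z)$ \ldots\ therefore attains the minimum" is circular rather than an argument. The repair is short but essential: the mixture's SHAP is mean-zero in $X$ for \emph{every} $\pi$, so the difference decomposes as the centered term $(\alpha\overbar{Z^l}-\beta)(X^k-\overbar{X^k})$ plus a constant that does not depend on $\pi$, the cross term vanishes in expectation, and under $\ell_2$
\begin{equation*}
L_\text{SHAP}(X) \;=\; (\alpha\overbar{Z^l}-\beta)^2\,\V[X^k] \;+\; \tfrac{\alpha^2}{4}\bigl(\overbar{X^k}\,\overbar{Z^l}-\overbar{X^k Z^l}\bigr)^2 ,
\end{equation*}
which is minimized exactly at $\beta=\alpha\overbar{Z^l}$, i.e., $\E\tilde{Z}^l=\overbar{Z^l}$, a condition met for all $l$ simultaneously only by matching the marginal of $Z$. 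This orthogonality step is precisely what the paper's computation of $\partial L_\text{SHAP}/\partial\beta$ accomplishes: the covariance constants drop out of the derivative because they multiply the centered factor $(X^k-\overbar{X^k})$, leaving the unique stationary point $\beta=\alpha\overbar{Z^l}$ with positive second derivative $\V[X^k]$. The same observation is also what legitimizes your monomial-by-monomial reduction despite the loss being quadratic rather than linear (the paper instead handles this coupling by induction, checking the cross-term stationarity conditions). With these two repairs your argument closes and becomes, in substance, the paper's proof.
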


\renewcommand*{\proofname}{Proof sketch}
\begin{proof}
Without loss of generality, for simplicity let us consider the case of two variables $X$ and $Z$. 
Let us expand $y(x,z)$ into a Taylor series around the point $x=0,z=0$. The series is a sum of components $C x^k z^l$, where $C$ is a constant and $k$ and $l$ are integers from $1$ to $\infty$. Then, we replace $Y$ in the definition of $L_\text{SHAP}$ with the Taylor series and make a proof by induction. Minimizing this objective gives a potentially infinite set of conditions $\E[\tilde{Z}^l] = \E [Z^l]$ for the respective moments of $\tilde{Z}$. Since $l$ can be any positive integer, these conditions are met if $P(\tilde{Z}=z)=P(Z=z)$. The full proof is in Appendix~A.
\end{proof}

\begin{example*}
In the loan interest rate example, the full model is $y(\x,z) = \beta_0 - x_1 - z$. The MIM is $\hat{y}_\text{MIM} = \beta_0 - x_1 -  \E_Z Z$. 
\end{example*}

\subsection{Minimizing \mathintitle{$L^\text{IND}_\text{MDE}(\X)$} and \mathintitle{$L^\text{IND}_\text{SHAP}(\X)$} via transfer learning}
%
The minimization of the feature-specific losses, \mathintitle{$L^\text{IND}_\text{MDE}(\X)$} and \mathintitle{$L^\text{IND}_\text{SHAP}(\X)$}, does not result in closed-form solutions, so we apply a respective gradient descent.
First, we drop the protected attribute(s) $Z$ from the data. We then obtain the ``Trad. w/o $Z$'' model by minimizing the cross entropy loss, $H(\hat{y}, y) = -\sum_i y_i \log \hat{y_i}$. Next, we optimize for either $L^\text{IND}_\text{MDE}(\X)$ or $L^\text{IND}_\text{SHAP}(\X)$. For both objectives we use $\ell_2$ loss. We refer to these two-stage optimization-based methods as OPT-MDE and OPT-SHAP, respectively. 
The training is done using momentum based\ gradient optimizer ADAM \cite{kingma2017adam} via batch gradient descent. We fine-tune two hyper-parameters: learning rate~($\alpha$) and number of epochs ($N$). During fine-tuning we pick the values for which we get the best performance on the validation set. In our datasets, $\alpha$ is from $10^{-3}$ to  $10^{-2}$ and $N$ is from $20$ to $100$. Our implementations of the methods are released publicly via \texttt{FaX-AI} Python library.

\subsection{Removal of indirect discrimination via nested use of proposed methods}
\label{sec:indirect-removal}
Potentially any feature that is predictive of $\Y$ and different than $\Z$ could fulfill business necessity, as we pointed in Subsection~\ref{sec:legal-influence}.
However, a feature $X_i$ can be unfairly and illegally influenced by $\Z$. If decisions $Y$ used such $X_i$, then $Y$ would be indirectly discriminatory. We have two options to prevent that: i)~not include feature $X_i$ in the model of $\Y$ or, ii)~create a model of $X_i$, remove from it the impact of $\Z$, then use the corrected $\hat{X}_i$ in the model of $\Y$, and finally drop the impact of $\Z$ on $\hat{Y}$, while using either of the proposed methods for removing the impact of $\Z$ from the models of $X_i$ and $\Y$. 
The latter solution can be improved via a counterfactual anaylsis in situations where we know the value of a variable for which we apply MIM to, i.e., $X_i$.

\begin{example*}
In the loan example, the annual salary $x_1$ of a loan applicant could have been affected by discrimination, e.g., $x_1=s+z$, where $s$ stands for job-related skills. In such case, a bank shall first debias the salary, either by developing a model of $X_1$ using available information about $S$ and applying our methods, or by retrieving a debiased $\hat{x}_1$ from another source, e.g., the applicant's employer, who is better positioned (and is obliged by law) to debias the salary. In this case, $\hat{x}_{1,\text{MIM}}=s+\bar{z}$ and $\hat{y}_\text{MIM} = \beta_0 - \hat{x}_{1,\text{MIM}} - \bar{z} = \beta_0 -s - 2 \bar{z}$, where $\bar{z}$ is the mean of $Z$, so skills determine interest rate.
\end{example*}

\subsubsection{Counterfactual mixtures. }
\label{sec:counterfactual}

Causality literature posits a causal hierarchy and distinguishes between interventional and counterfactual estimates~\cite{Pearl2016Causal}. The latter differ from former in that they assume that everything stays the same, including any exogenous noise values, when estimating the effect of an intervention.
Note that the interventional mixture calculates the value of $\hat{X}_i$ had the casual influence of $\Z$ been removed from it given the values of all \textit{observed} variables, but not the values of exogenous noise. 
However, each variable can contain \textit{exogenous} noise, i.e., unobserved intrinsic noise not associated with any other variable. In the situations where we know the value of the variable for which we want to develop a fair model, we can use that value to infer that variable's exogenous noise.
For such situations, we propose an \textit{marginal counterfactual mixture} (MCM), which merges the three canonical counterfactual reasoning steps with the MIM step: (\textit{abduction}) infer exogenous noise for a variable, (\textit{intervention}) apply the MIM to remove the influence of the protected attribute on that variable, and (\textit{counterfactual prediction}) estimate the counterfactual value of the variable given the exogenous noise and intervention.

\begin{figure*}[t]
\centering
\begin{subfigure}{\linewidth}
  \centering
  \includegraphics[width=0.65\linewidth]{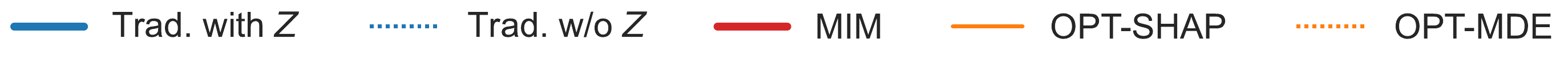}
\end{subfigure}
\begin{subfigure}{\linewidth}
  \centering
  \includegraphics[width=.195\linewidth]{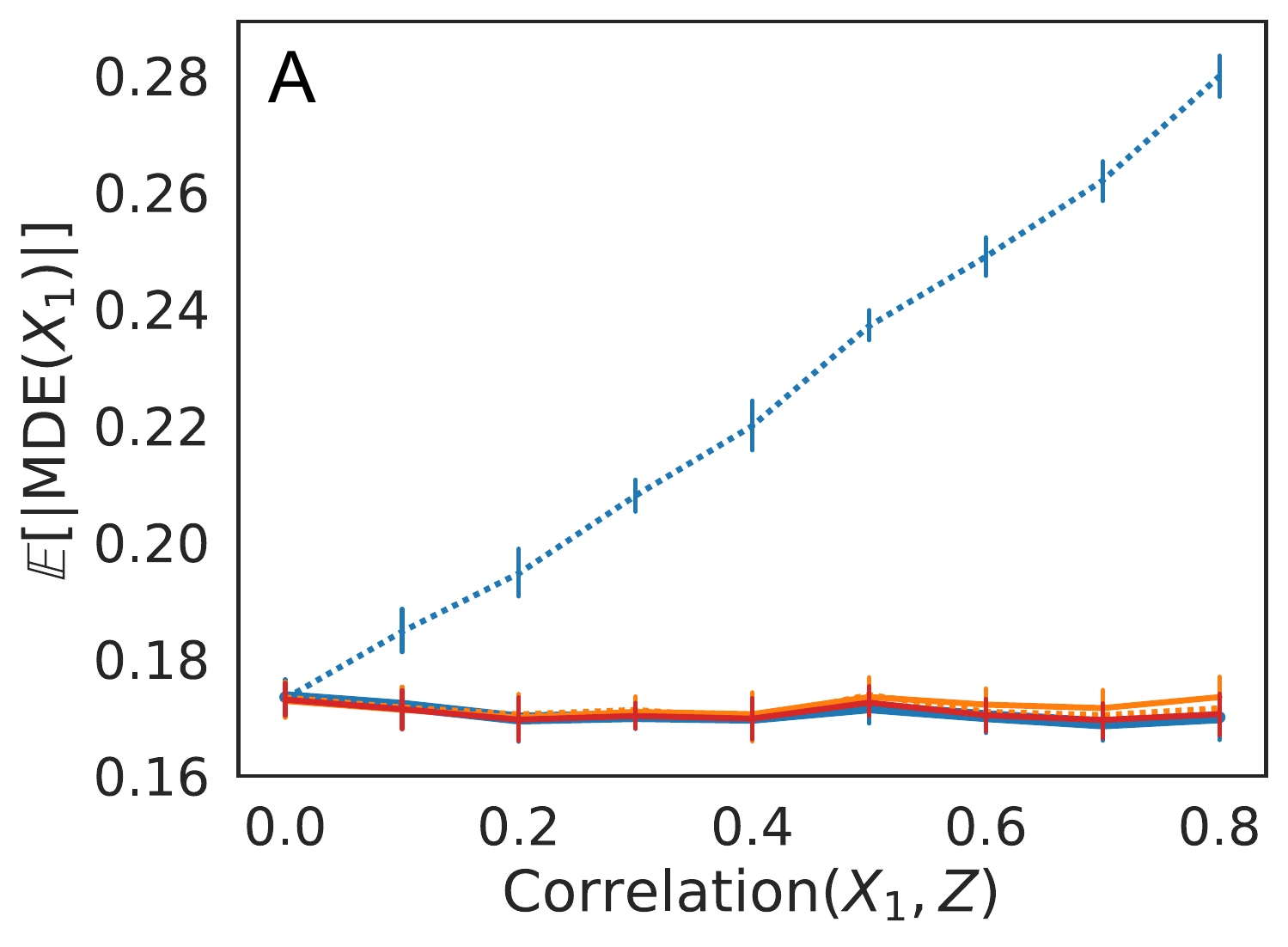}
  \includegraphics[width=.195\linewidth]{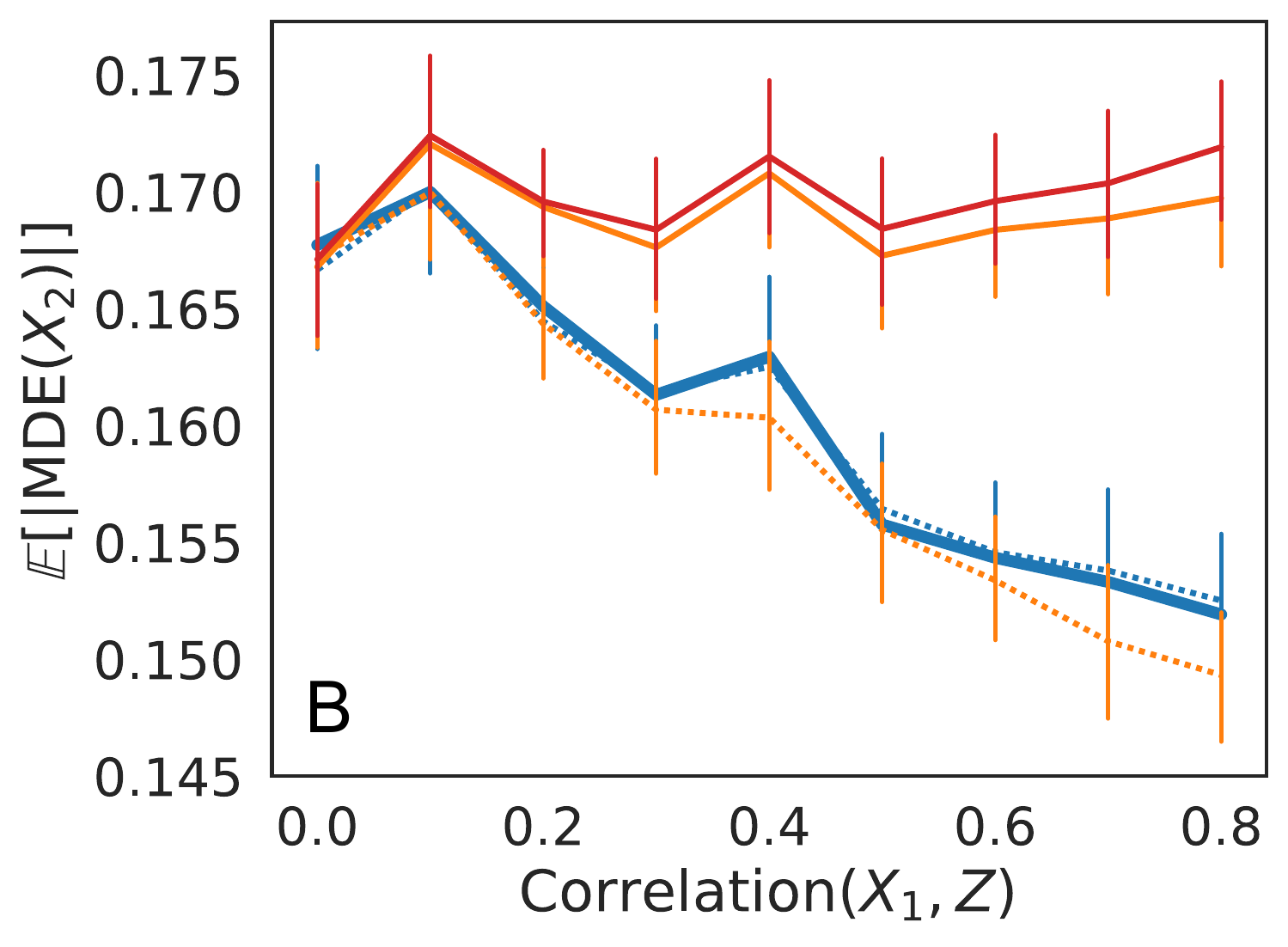}
  \includegraphics[width=.195\linewidth]{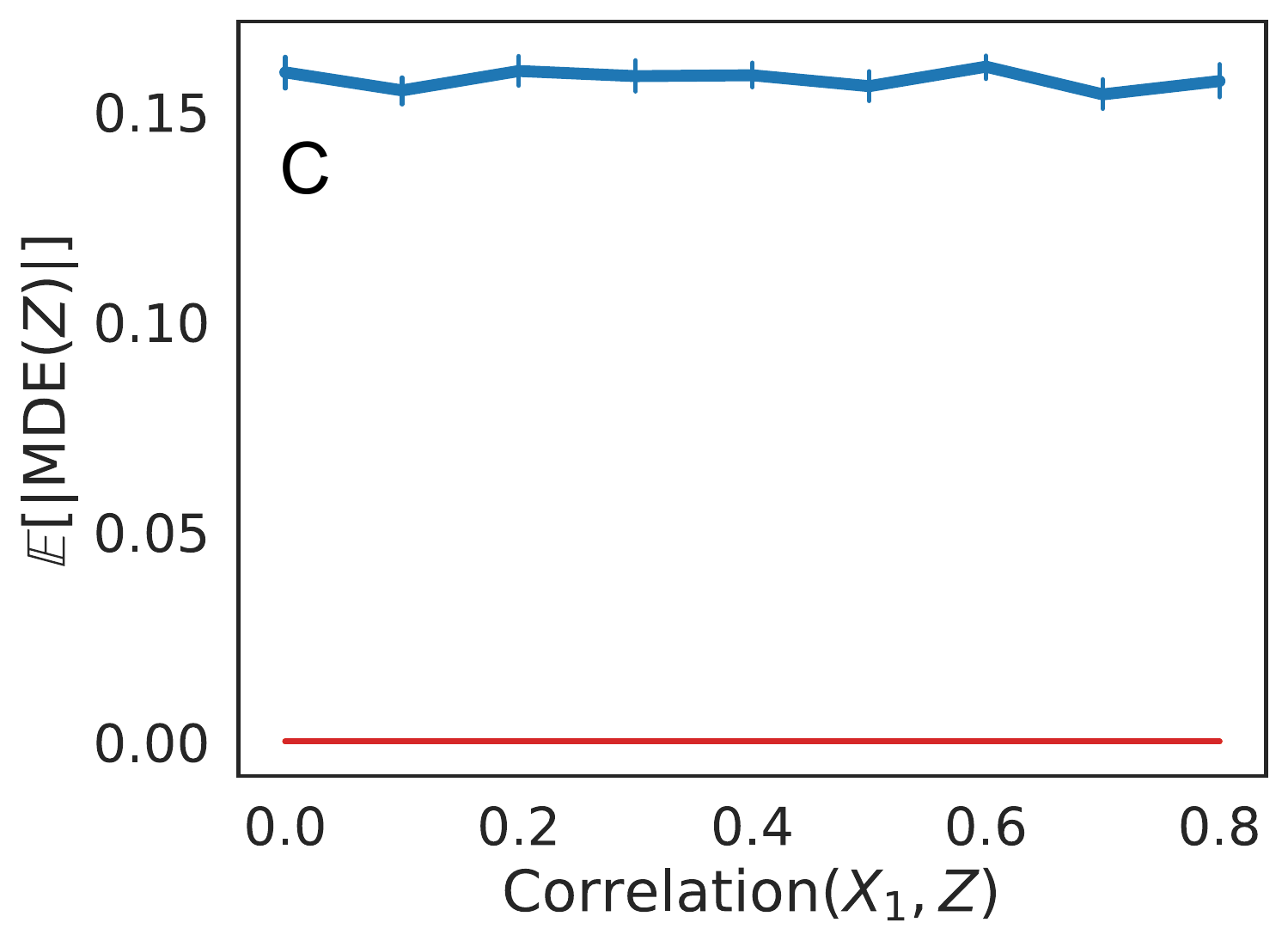}
  \includegraphics[width=.195\linewidth]{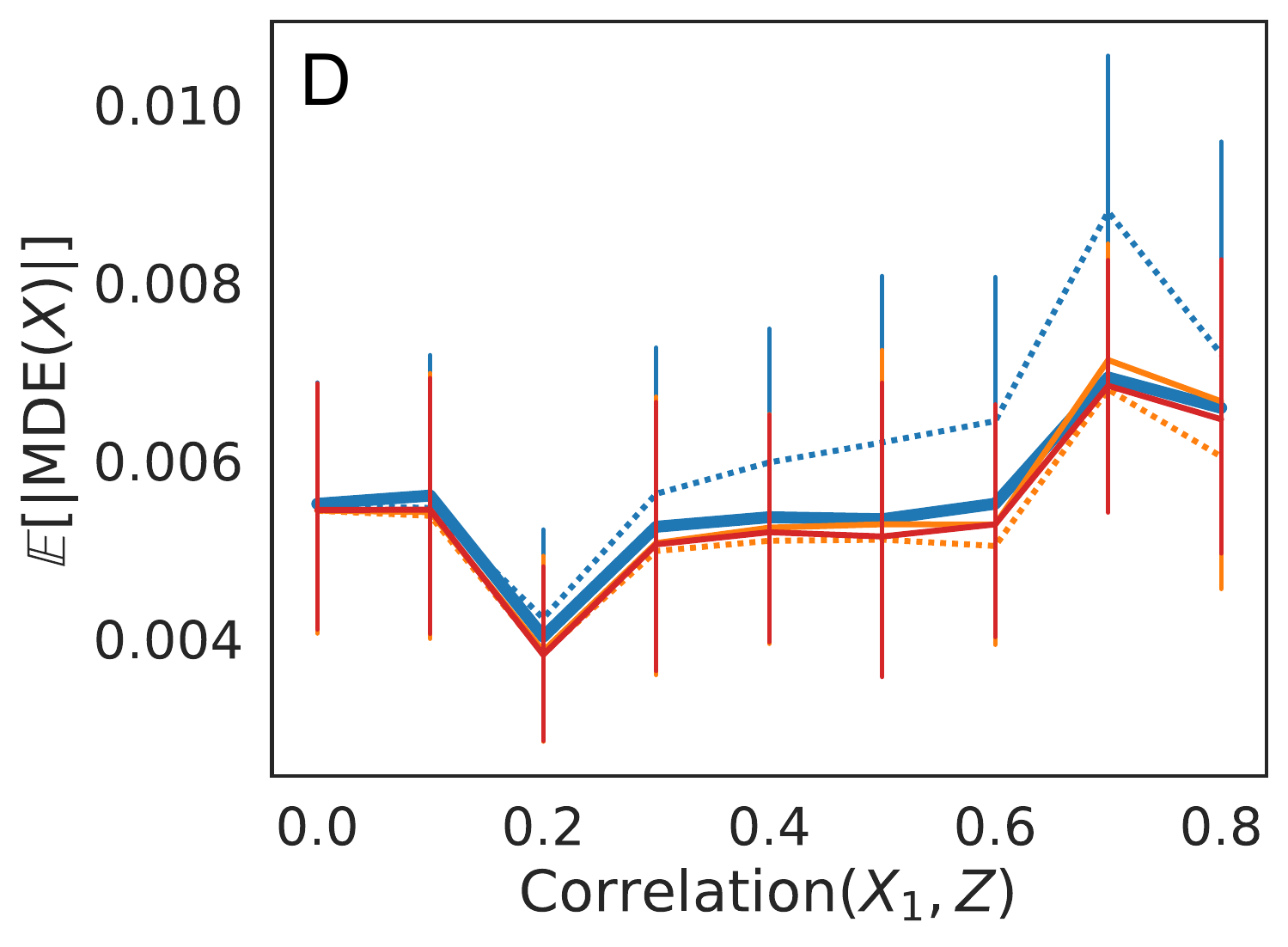}
  \includegraphics[width=.195\linewidth]{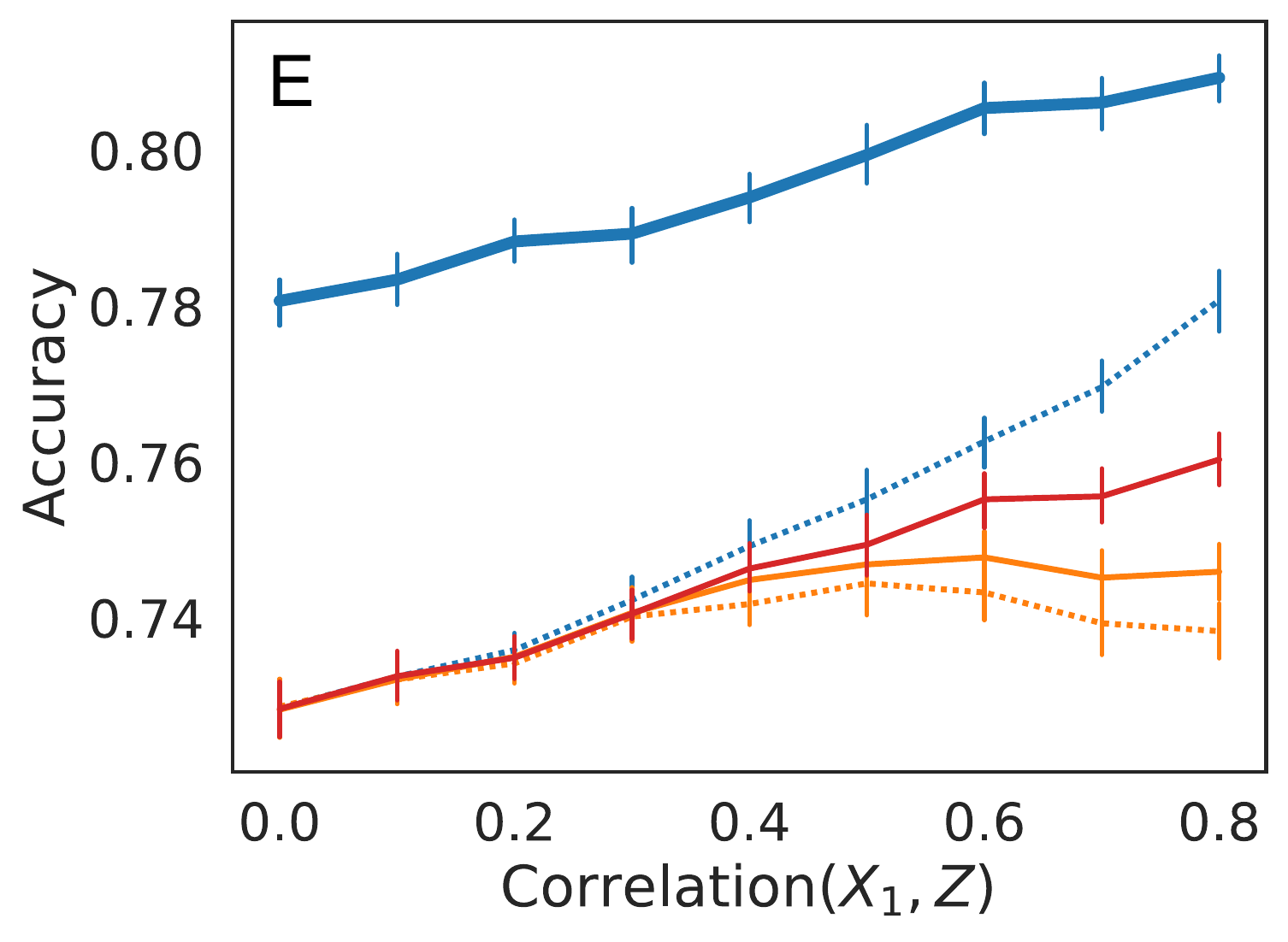}
  \includegraphics[width=.195\linewidth]{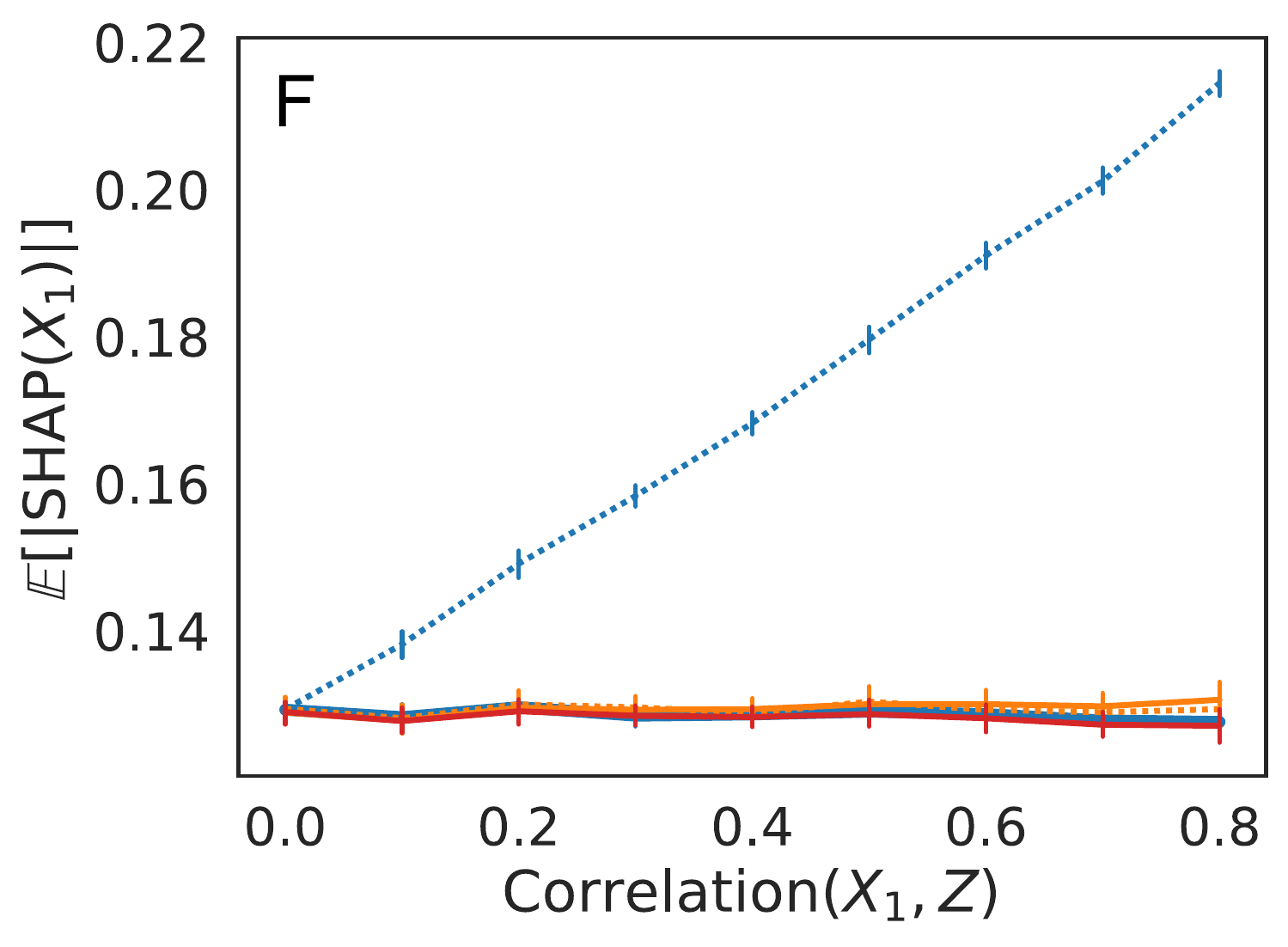}
  \includegraphics[width=.195\linewidth]{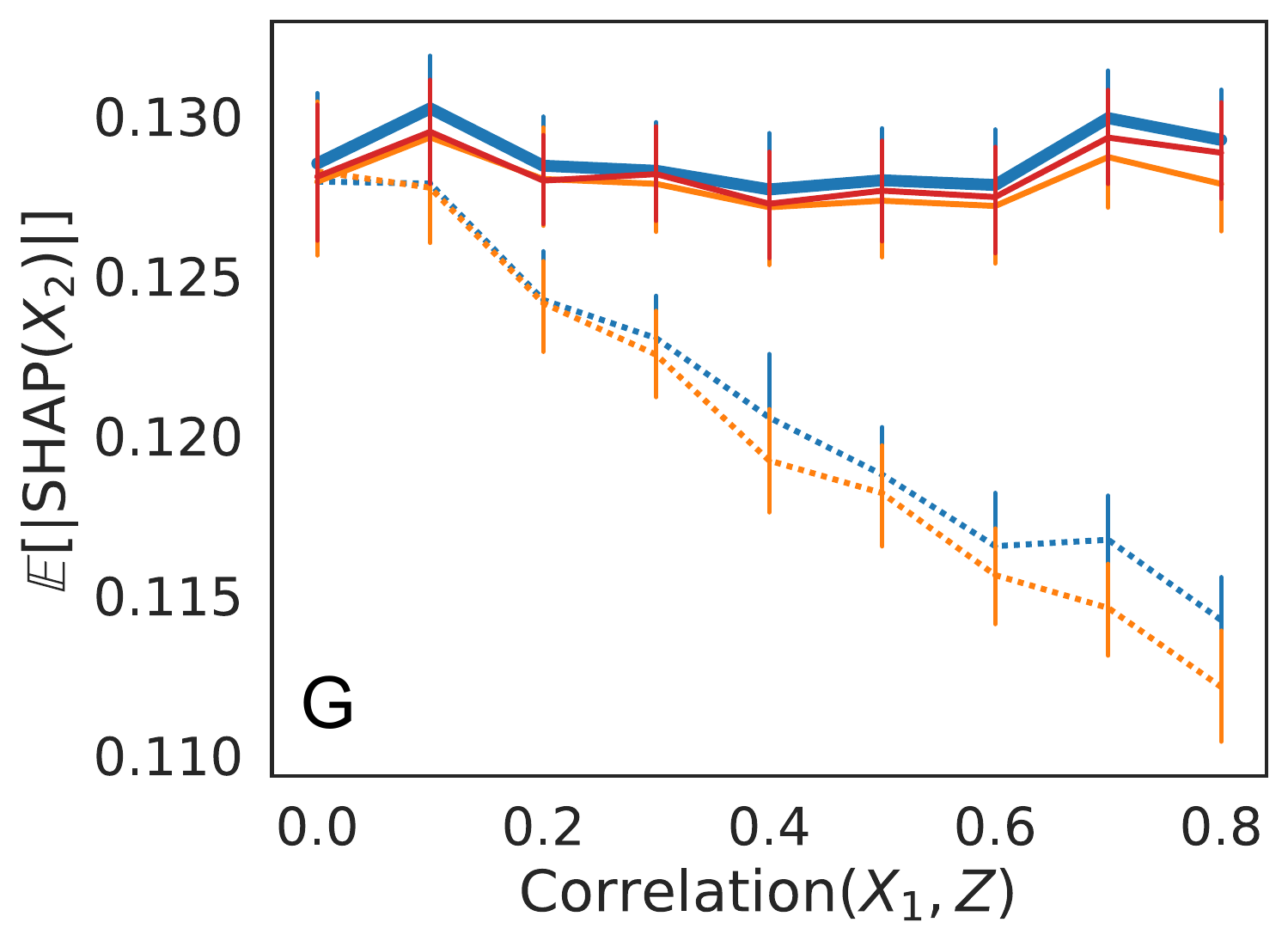}
  \includegraphics[width=.195\linewidth]{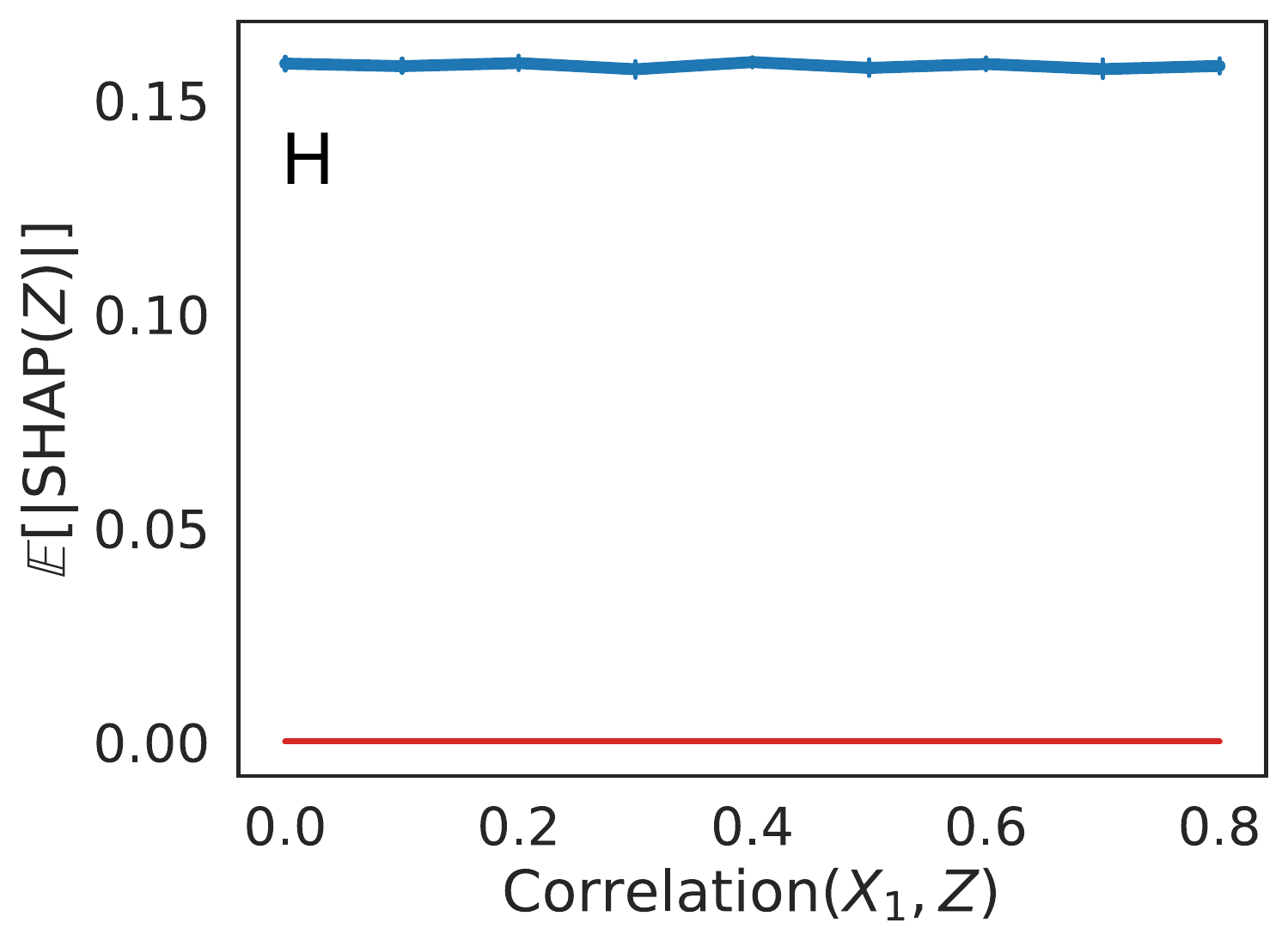}
  \includegraphics[width=.195\linewidth]{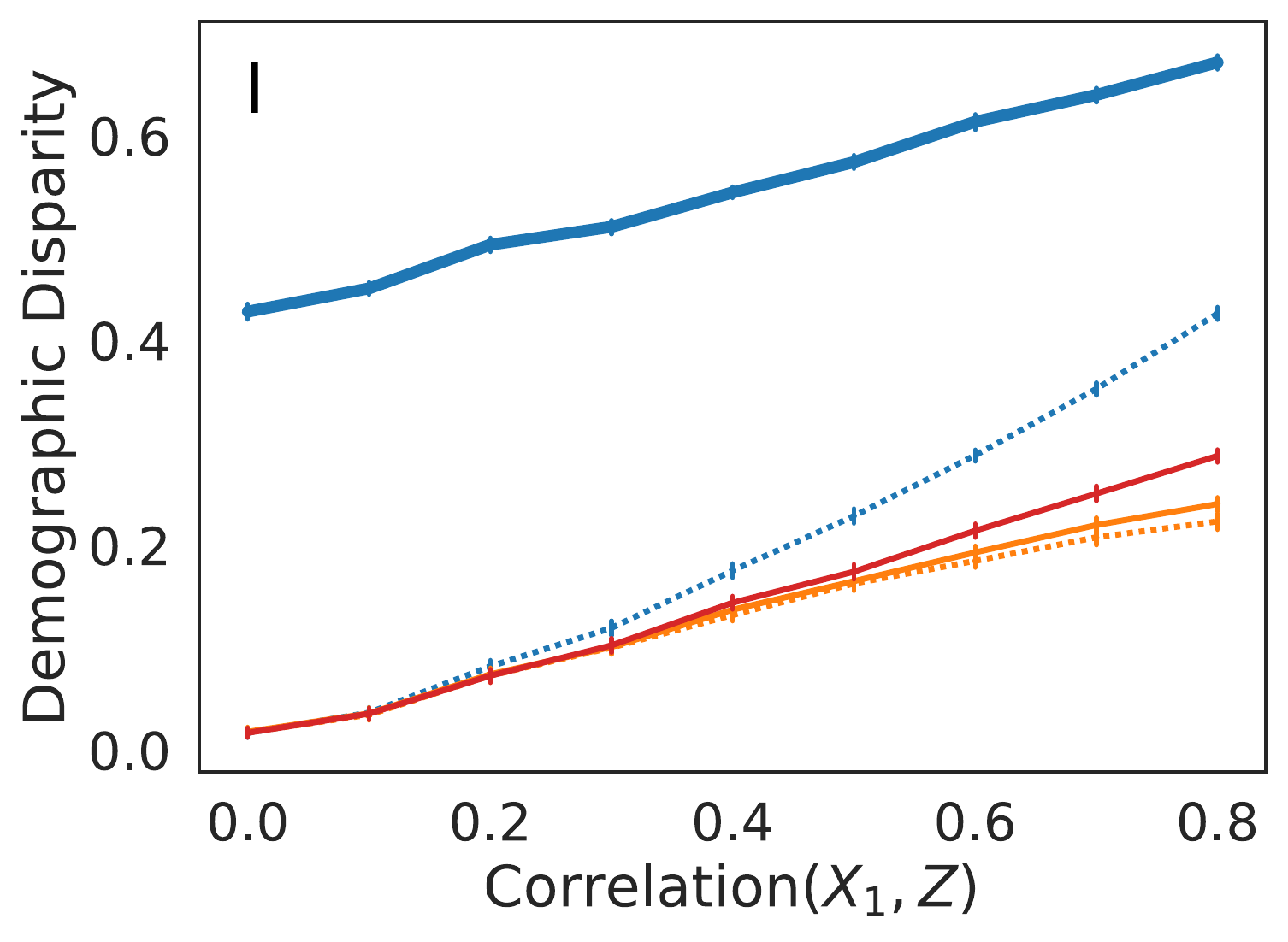}
  \includegraphics[width=.195\linewidth]{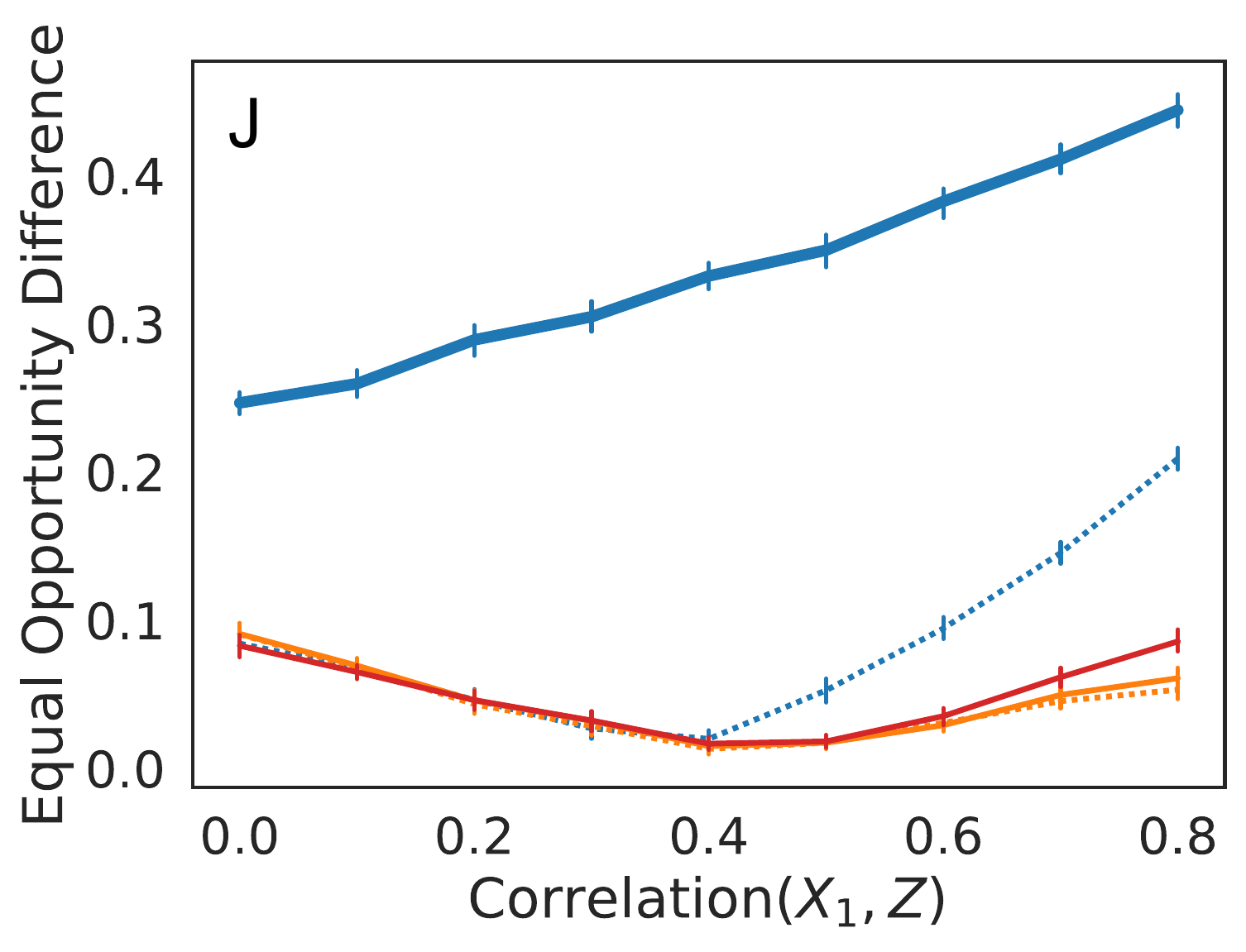}
\end{subfigure}
\caption{
Average absolute input influence, measured via MDE (panels A-D) or SHAP (F-H), model accuracy (E), demographic disparity (I), and accuracy disparity (J) versus the Pearson correlation between $X_1$ and $Z$, 
under Scenario~A, where $Y=\sigma(X_1 + X_2 + Z + 1)$.
}
\label{fig:aifsynset1}
\end{figure*}

\subsubsection{Comparison of counterfactual mixtures.}
\label{sec:vs-chiappa}

In contrast to our proposed methods, path-specific counterfactual fairness (PSCF) requires the knowledge of a full causal graph.
Hence, we study an exemplary linear model introduced in the PSCF paper~\cite{Chiappa2019Path}. We maintain the original notation:
%
%
%
\begin{align}
M &= \theta^m + \theta^m_z Z + \theta^m_c C + \epsilon_m,\\
L &= \theta^l + \theta^l_z Z + \theta^l_c C + \theta^l_l M + \epsilon_l, \\
Y &= \theta^y + \theta^y_z Z + \theta^y_c C + \theta^y_y M + \theta^y_l L + \epsilon_y,
\end{align}
where 
$C$, $M$, $L$ are components of~$\X$, while $\epsilon_c$, $\epsilon_m$, $\epsilon_l$ are exogenous noise variables. The causal influence of the protected attribute $Z$ on decisions $\Y$ and the mediator $M$ is assumed unfair and all other influences are fair. In other words, $Y$ is affected by direct discrimination via $Z$ and indirect discrimination via $M$. This means that the MIM needs to be applied first to $M$ and then to $Y$.

Without loss of generality, let us consider a scenario where we have enough samples to have perfect estimates of a well-specified model's parameters, so that the estimated model is $\hat{m}=\theta^m + \theta^m_z z + \theta^m_c c$.
In this scenario, the abduction step corresponds to computing $\epsilon_m = m-\hat{m}$, the intervention step to applying MIM to $\hat{m}$, yielding $\hat{m}^* = \theta^m + \theta^m_z \overline{z} + \theta^m_c c$, and the counterfactual prediction step to injecting the abducted noise into the estimated model, $\hat{m}^c = \theta^m + \theta^m_z \overline{z} + \theta^m_c c + \epsilon_m$.
 Similar to PCSF, the multi-stage MCM corrects the decision through a correction on all the variables that are influenced by the protected attribute along unfair pathways.
In the multi-stage MCM, we first apply the MCM to get a non-discriminatory counterfactual $\hat{m}^\text{c}$, then we propagate $\hat{m}^\text{c}$ to its descendants and apply the MCM to yield a fair counterfactual $\hat{l}^\text{c}$, and finally we propagate the two counterfactuals to $\hat{y}$ and apply the MIM to get~$\hat{y}^\text{c}$:
\begin{align}
\hat{m}^\text{c} &= \theta^m + \theta^m_z \overline{z} + \theta^m_c c + \epsilon_m= m - \theta^m_z(z-\overline{z}),\\
\hat{l}^\text{c} &= \theta^l + \theta^l_z z + \theta^l_c c + \theta^l_m \hat{m}^\text{c} + \epsilon_l = l - \theta_m^l(m-\hat{m}^\text{c}),\\
\hat{y}^\text{c} &= \theta^y + \theta^y_z \overline{z} + \theta^y_c c + \theta^y_m \hat{m}^\text{c} + \theta^y_l \hat{l}^\text{c},
\end{align}
where $\overline{z}$ stands for the mean of $Z$. If we applied only the MIM, we would not take advantage of estimating the noise terms and yield the following less-accurate estimators,
\begin{align}
\hat{m}_{\text{MIM}} &= \theta^m + \theta^m_z \overline{z} + \theta^m_c c,\\
\hat{l}_{\text{MIM}} &= \theta^l + \theta^l_z z + \theta^l_c c + \theta^l_m \hat{m}_{\text{MIM}},\\
\hat{y}_{\text{MIM}} &= \theta^y + \theta^y_z \overline{z} + \theta^y_c c + \theta^y_m \hat{m}_{\text{MIM}} + \theta^y_l \hat{l}_{\text{MIM}},
\end{align}
The difference in estimating $\epsilon_m$ yields $\hat{y}^\text{c} = \hat{y}_{\text{MIM}}+ \theta^y_m\epsilon_m + \theta^y_l \theta^l_m\epsilon_m$. Thus resulting in a larger error for the MIM than the MCM, i.e., $\E(Y-\hat{Y}_{\text{MIM}})^2
 = \E(Y-\hat{Y}^\text{c})^2 + (\theta^y_m\epsilon_m + \theta^y_l \theta^l_m\epsilon_m)^2$  

A comparison with PSCF reveals that $\hat{y}^\text{c} = \hat{y}_{\text{PSCF}} + \Delta$, where $\Delta = \overline{z}( \theta^y_z + \theta^y_m \theta^m_z + \theta^y_l \theta^l_m \theta^m_z )$. In fact, the mean squared error w.r.t. $Y$ is larger for PSCF than for MCM by the square of the difference, i.e., $\E(Y-\hat{Y}_{\text{PSCF}})^2 = \E(Y-\hat{Y}^\text{c})^2+\Delta^2$.
%
%
%
PSCF is based on NDE (Equation~\ref{eq:nde}), it was introduced for binary $Z$, and relies on a choice of reference value, $z'$, also known as baseline, which is assumed $z'=0$ in the above example. However, this choice is arbitrary and it is not clear what baseline should be for non-binary $Z$. By contrast, the MCM introduces a distribution $\pi(z')$ over the reference intervention, which mimics "probabilistic interventions" from explainability literature~\cite{Datta2016Algorithmic, Janzing2019Feature}. 
This difference between PSCF and MCM mirrors the difference between NDE and MDE, respectively, and it leads to $\Delta$.
Thanks to this, the MCM can be applied to continuous~$Z$ and it results in more accurate models.
The above result that MCM is at least as accurate as PSCF is true for any linear model and any choice of the reference~$z'$.

\section{Results of experiments}
We examine the performance of our method and other supervised learning methods addressing discrimination in binary classification on synthetic and real-world datasets. We measure $\E_{\X,\Z} \allowbreak|\text{SHAP}_{\Y}(X_i\allowbreak|\X, \Z)|$, following the measure of global feature influence proposed by \citet{Lundberg2017unified}, and $\E_{X_i,X'_i} |\text{MDE}_{\Y}(X_i,X'_i)|$, both of which are evaluated using outcome probabilities. 
Note that these measures are different than our loss functions, which make the results non-obvious, yet still intuitive.
To reduce computational costs, we use sub-sampling to compute the measures. 
In addition, we measure accuracy and demographic disparity ($|\p(\hat{y}=1|z=0)-\p(\hat{y}=1|z=1)|$). Results for other measures, such as equalized odds and equal opportunity difference, can be found in Appendix B. 
The datasets are partitioned into 20:80 test and train sets and all results, including model accuracy, are computed on the test set.





\begin{figure*}[t]
\centering
\begin{subfigure}{.9\linewidth}
  \centering
  \includegraphics[width=\linewidth]{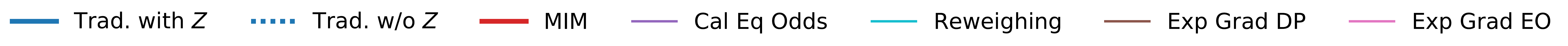}
\end{subfigure}
\begin{subfigure}{\linewidth}
  \centering
  \includegraphics[width=.195\linewidth]{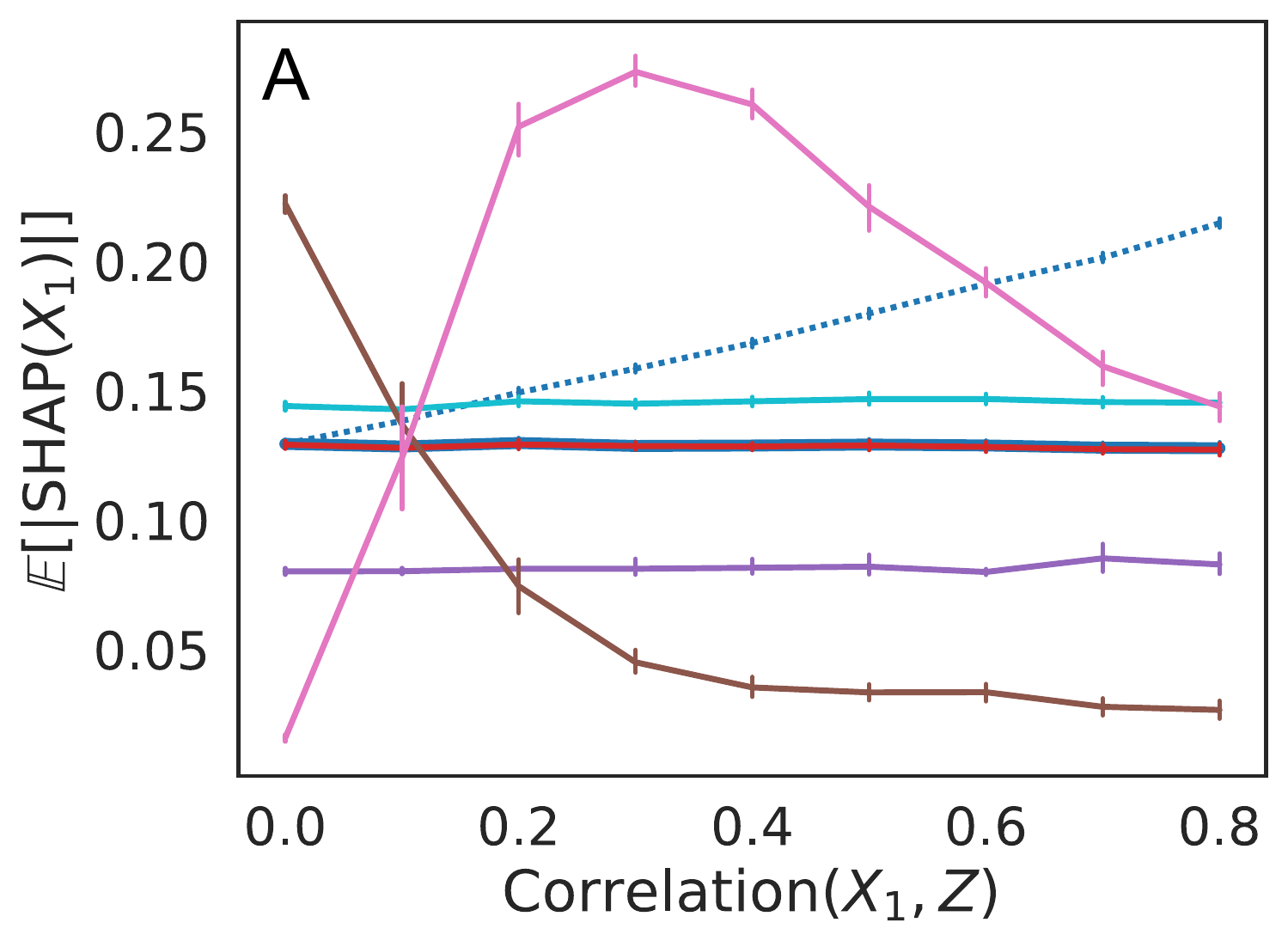}
  \includegraphics[width=.195\linewidth]{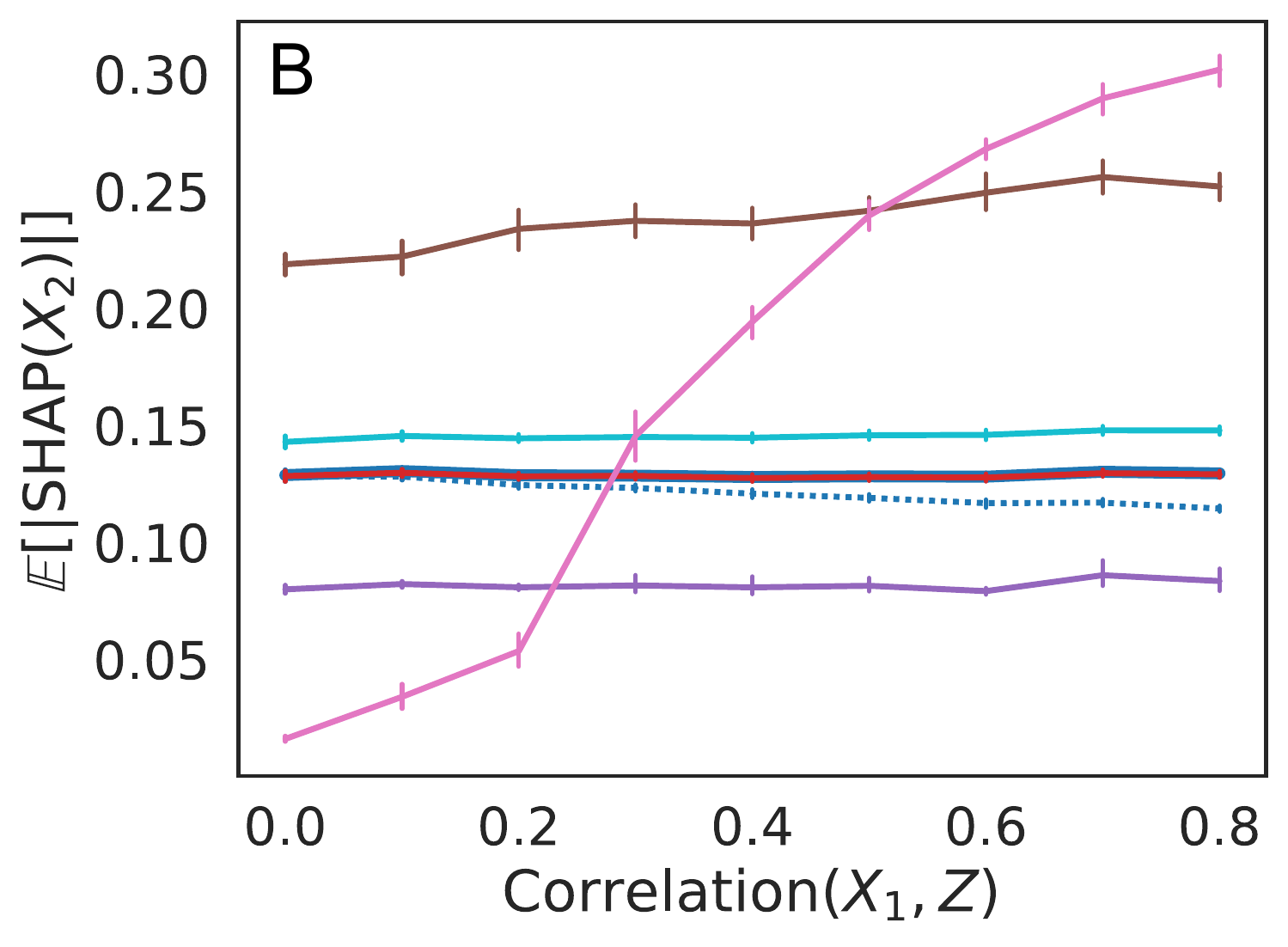}
  \includegraphics[width=.195\linewidth]{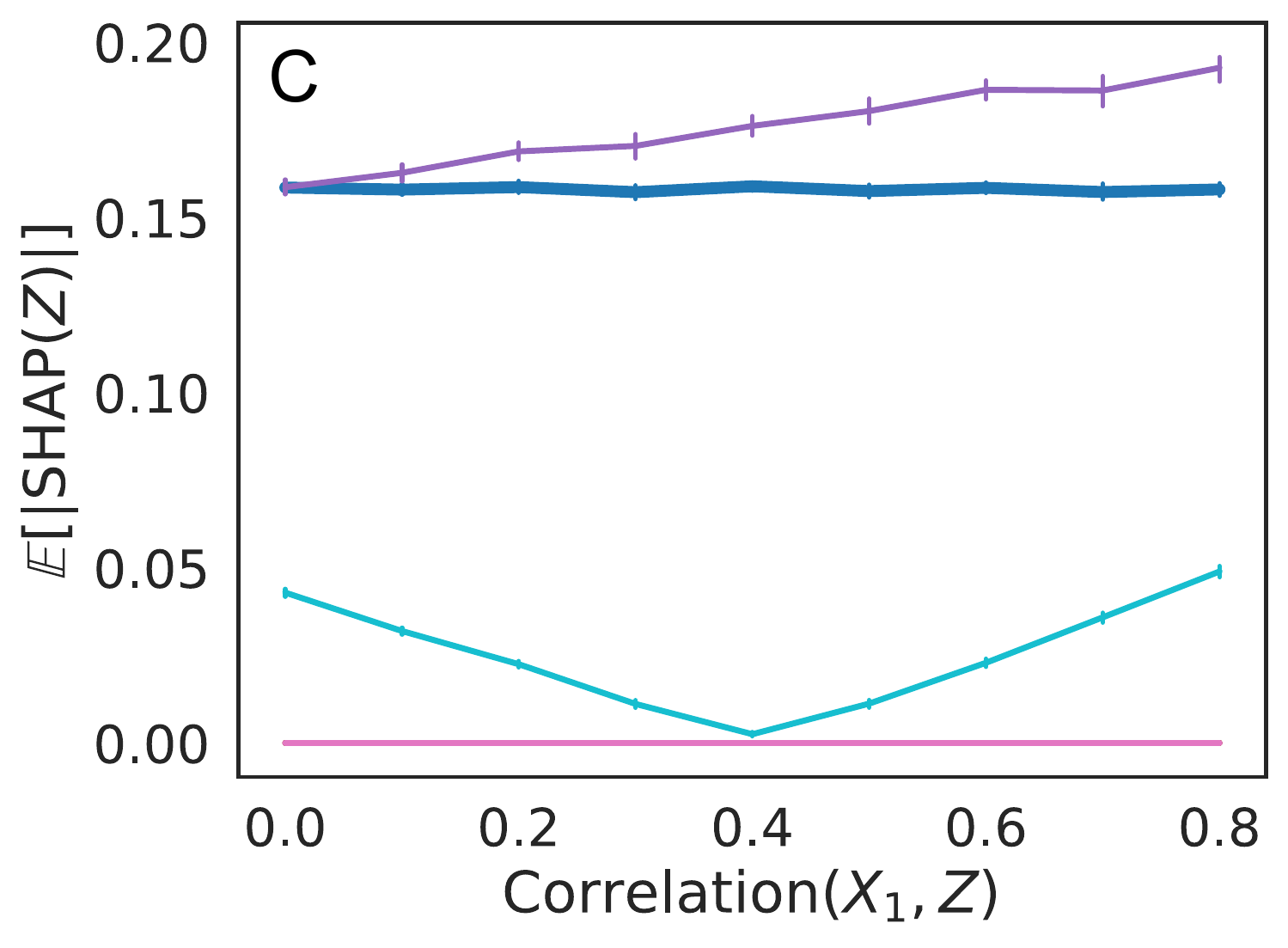}
  \includegraphics[width=.195\linewidth]{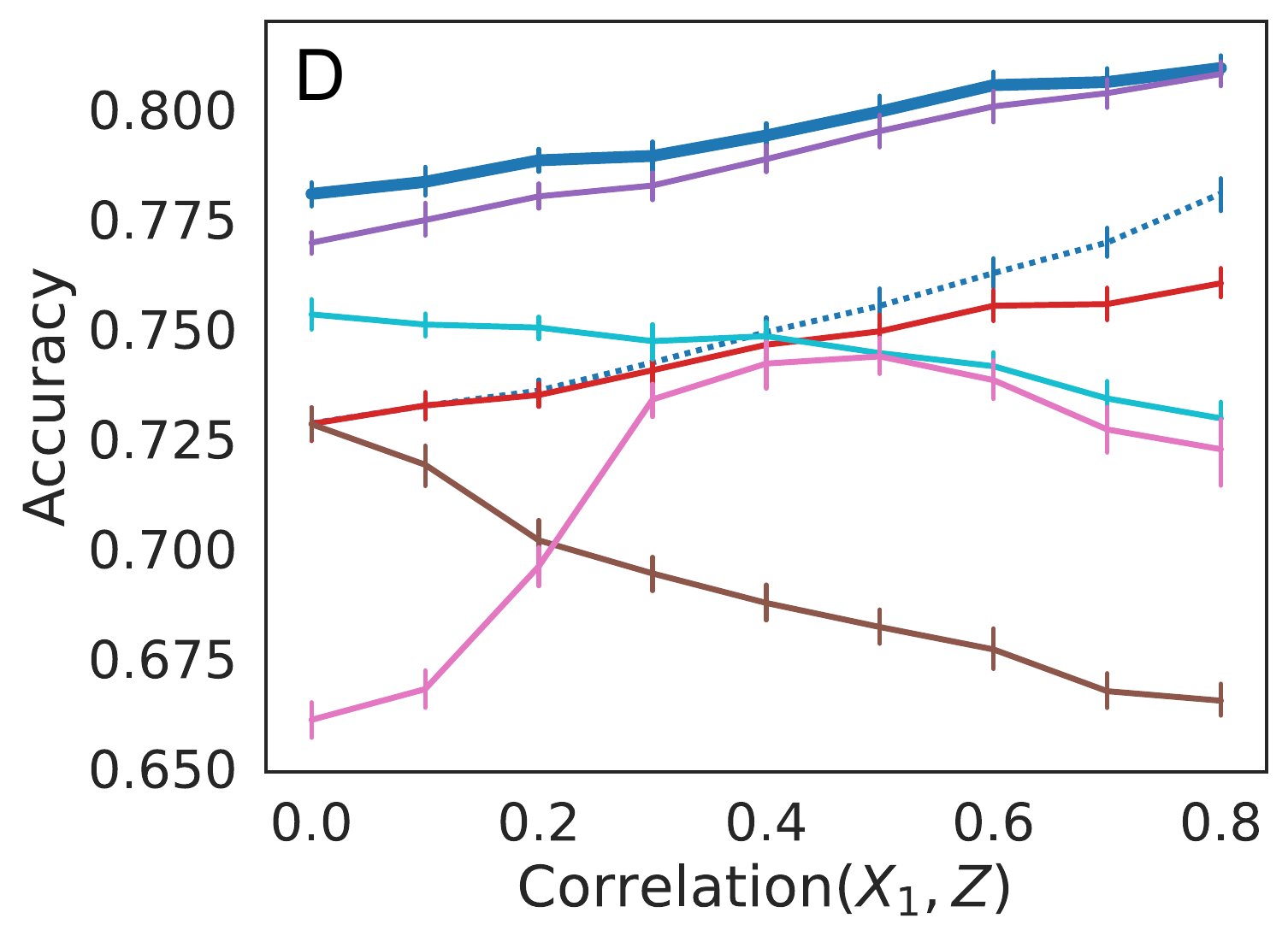}
  \includegraphics[width=.195\linewidth]{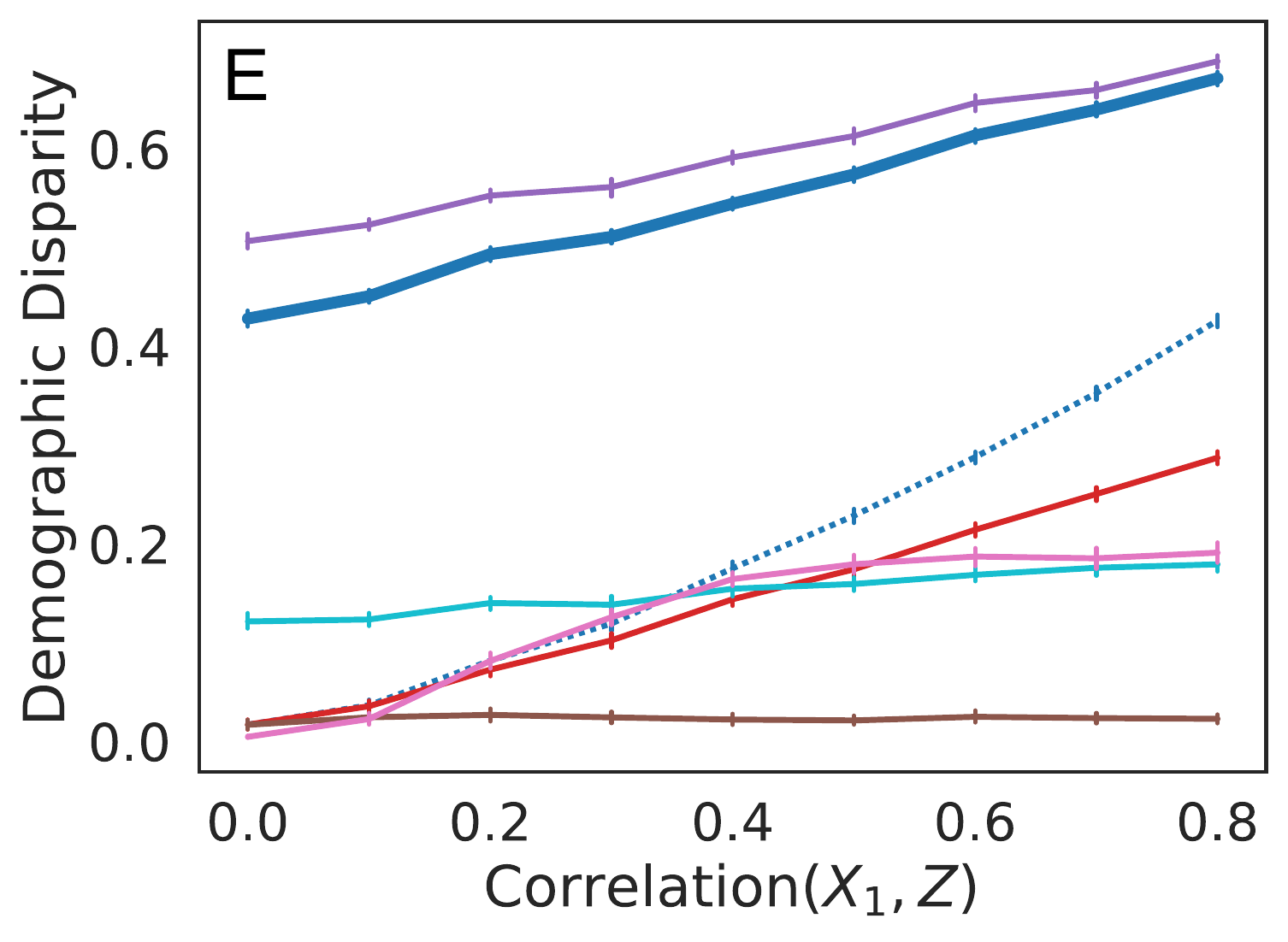}
\vspace{0.1cm}
\end{subfigure}
\begin{subfigure}{\linewidth}
  \centering
  \includegraphics[width=.195\linewidth]{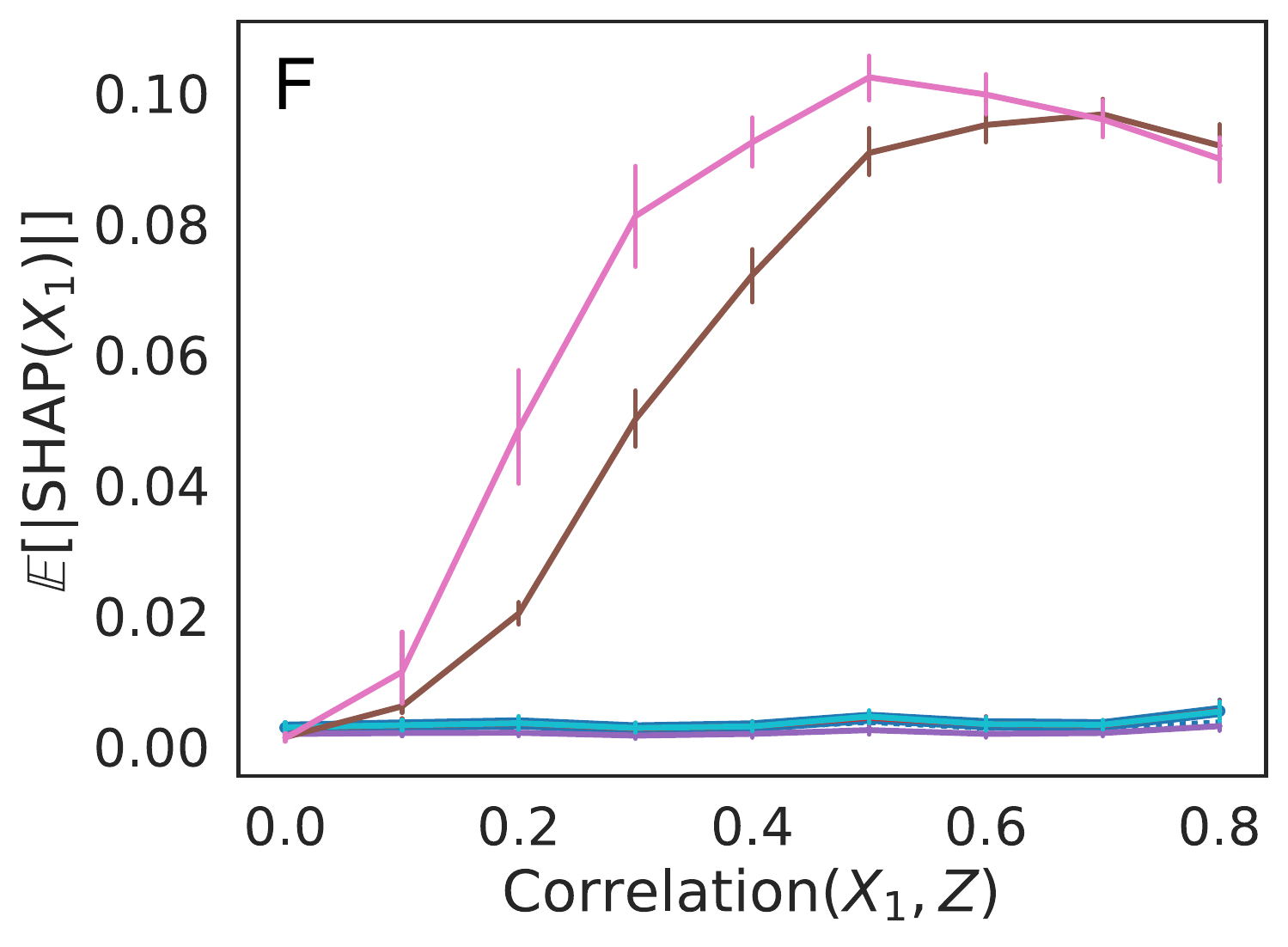}
  \includegraphics[width=.195\linewidth]{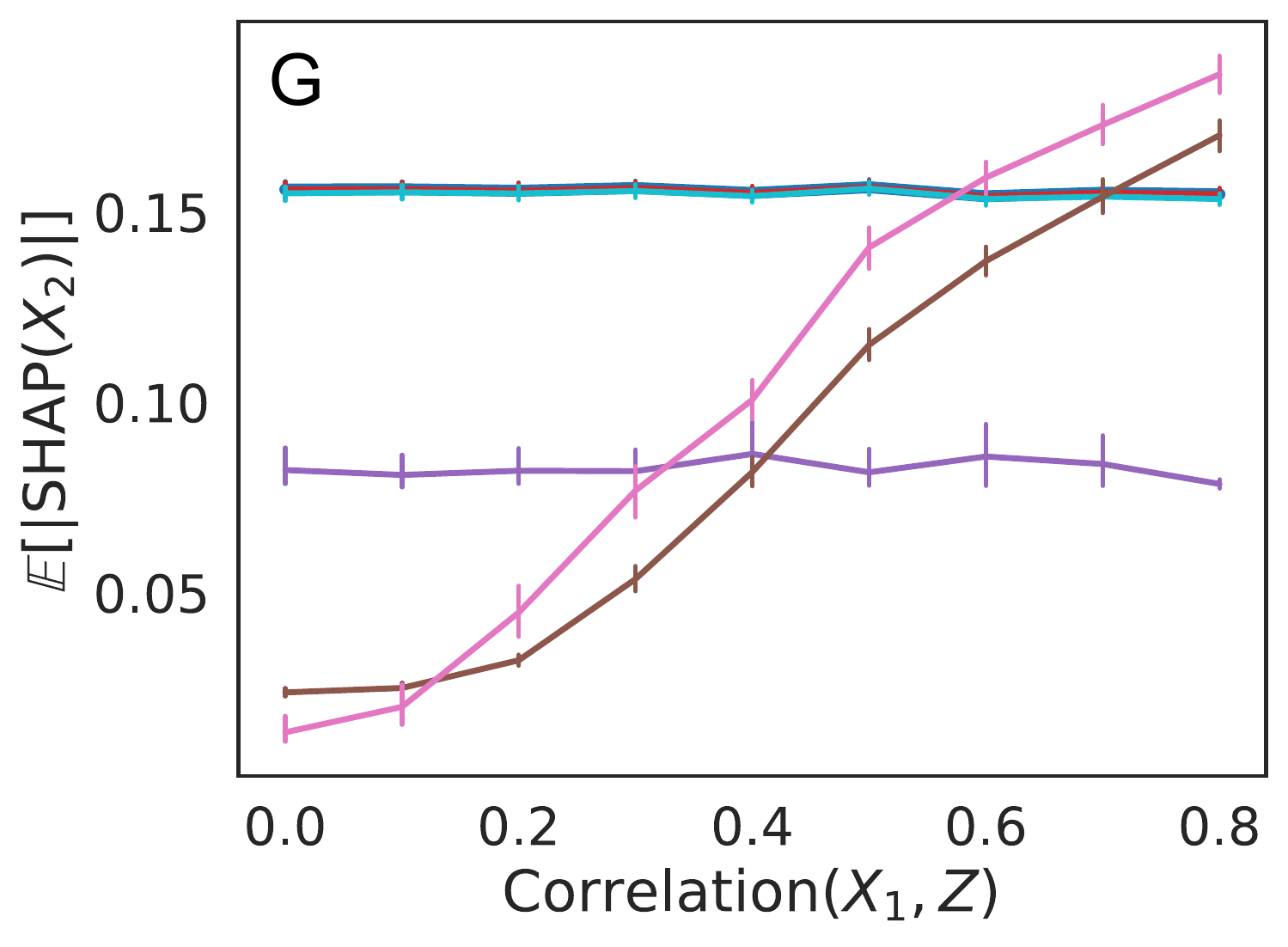}
  \includegraphics[width=.195\linewidth]{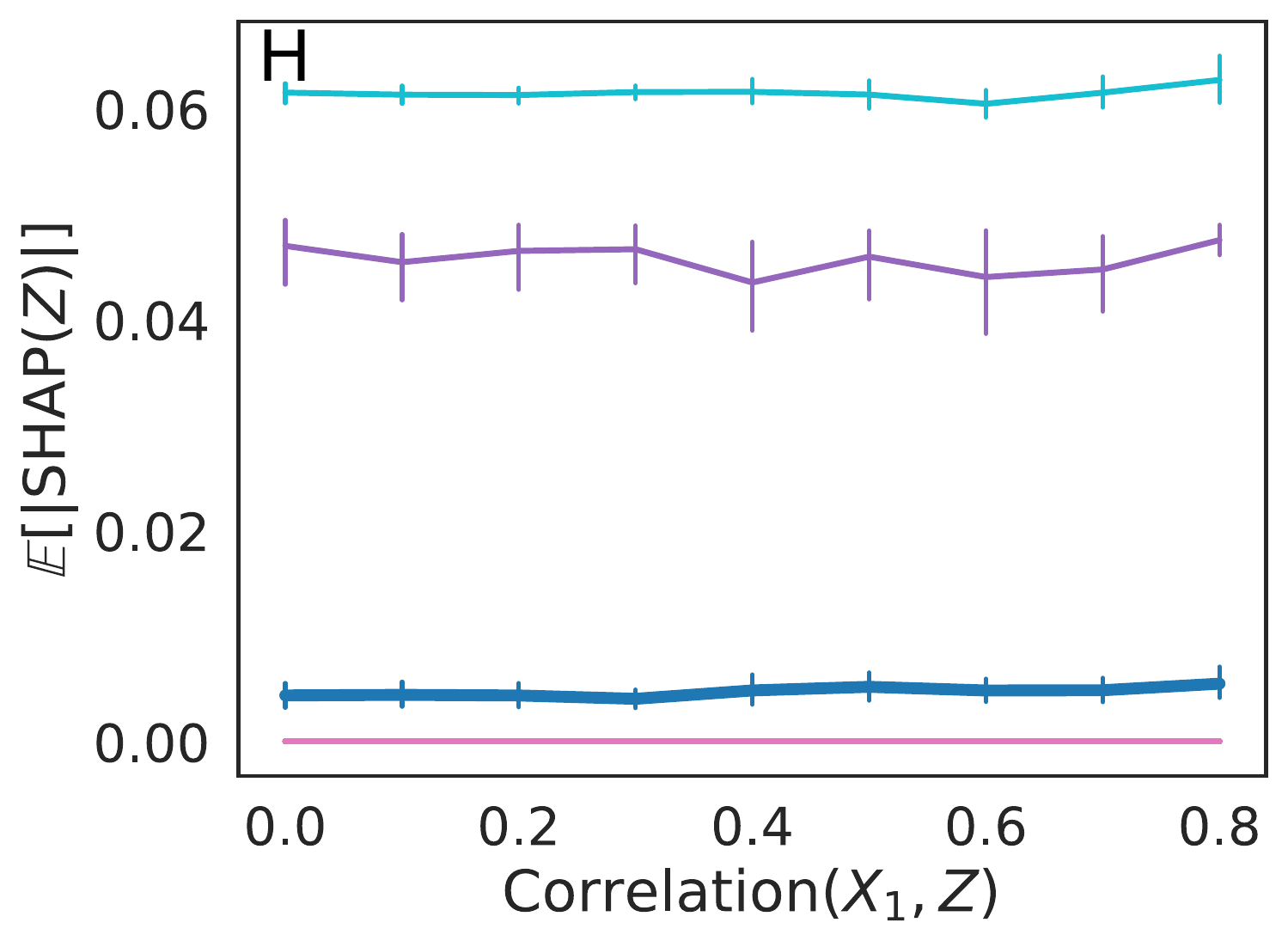}
  \includegraphics[width=.195\linewidth]{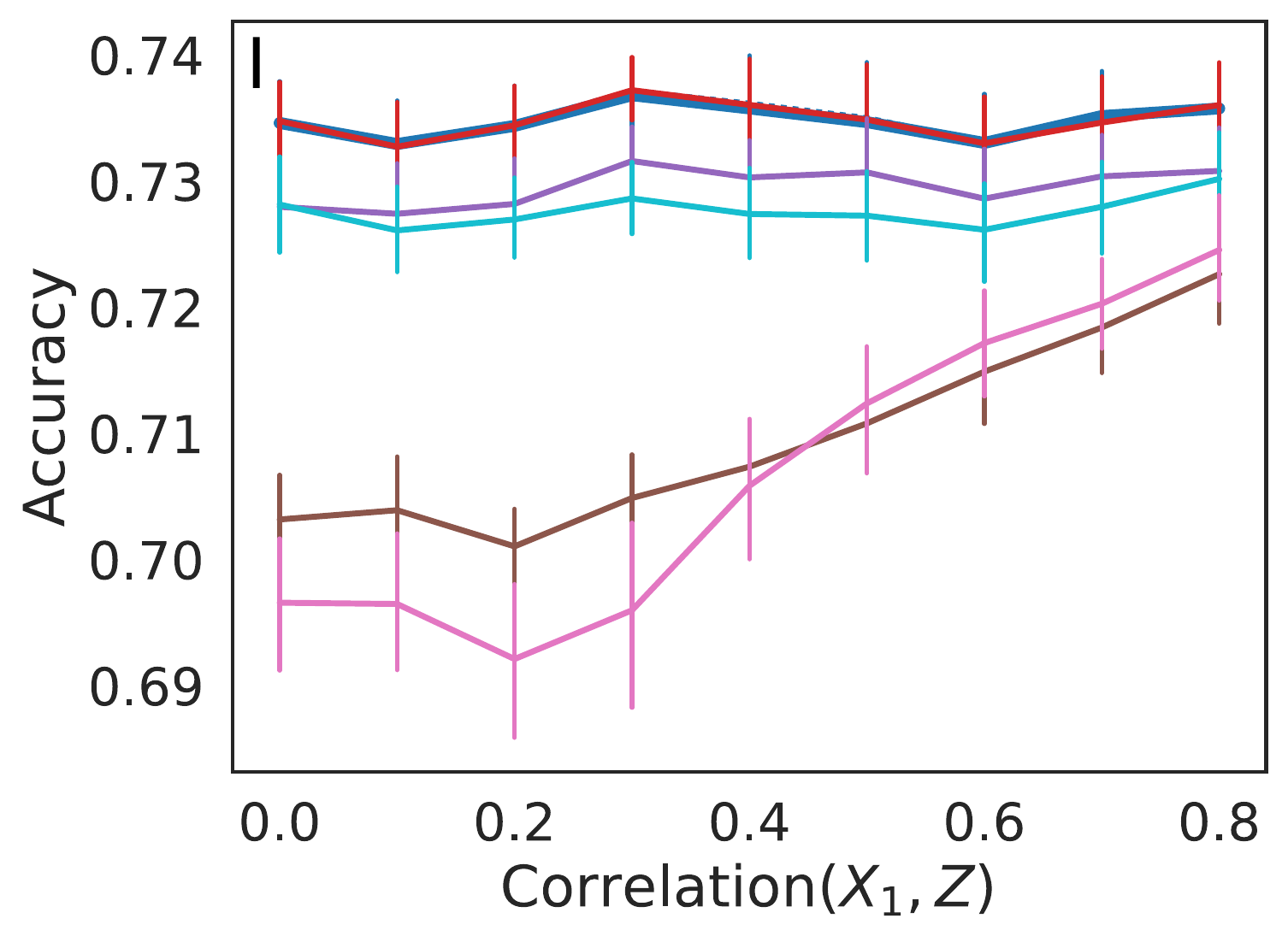}
  \includegraphics[width=.195\linewidth]{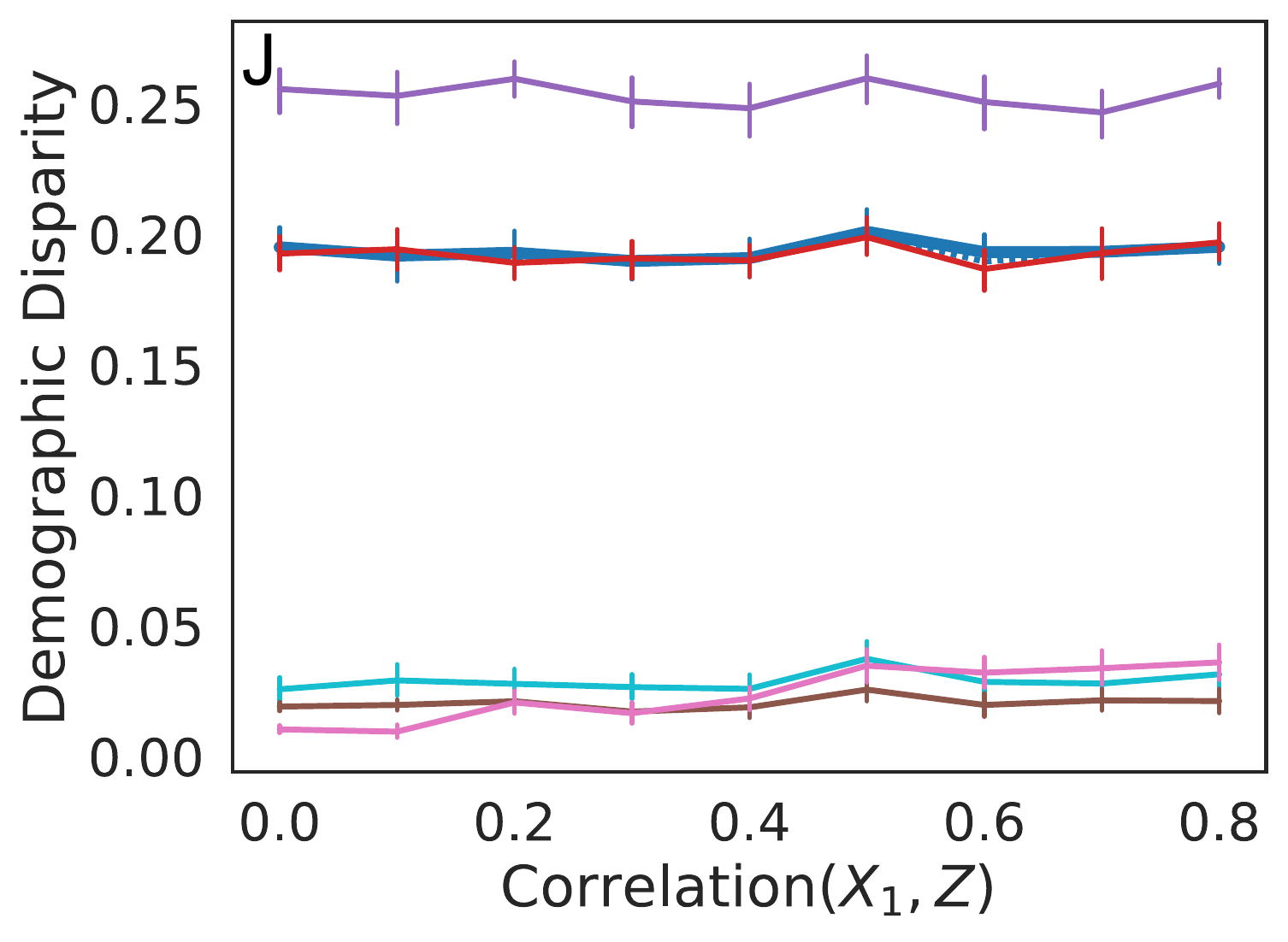}
\end{subfigure}

\caption{SHAP influence of $X_1$, $X_2$, and $Z$, model accuracy, and demographic disparity as we increase the correlation $r(X_1,Z)$ under the two synthetic scenarios: (top row) Scenario~A, $Y=\sigma(X_1 + X_2 + Z + 1)$, and (bottom row) Scenario~B, $Y=\sigma(0*X_1 + X_2+ 0*Z + 1)$. Error bars show 95\% confidence intervals based on 30 samples.}
\label{fig:synth}
\end{figure*}

\subsection{Evaluated learning methods}
\label{sec:rw-fair}

Several methods have been proposed to train machine learning models that prevent a combination of disparate treatment and impact~\cite{Pedreshi2008Discrimination, Feldman2014Certifying, Zafar2015Fairness}.
Such methods, however, can induce a discriminatory bias in model parameters~\cite{Lipton2018Does}.
Other studies propose novel mathematical notions of fairness, such as equalized opportunity, $\p(\hat{y}=1|y=1,z=0)=\p(\hat{y}=1|y=1,z=1)$, and equalized odds, $\p(\hat{y}=1|y=b,z=0)=\p(\hat{y}=1|y=b,z=1)$, $b\in\{0,1\}$  \cite{Donini2018Empirical, Woodworth2017Learning, Hardt2016Equality, Pleiss2017Fairness}, or parity mistreatment, i.e., $\p(\hat{y}\neq y|z=0)=\p(\hat{y}\neq y|z=1)$~\cite{Zafar2017Fairness}. Recent works expose the \textit{impossibility} of simultaneously satisfying multiple non-discriminatory objectives, such as equalized opportunity and parity mistreatment~\cite{Chouldechova2017Fair, Kleinberg2017Inherent,Friedler2016impossibility}. 
Thus, there exist multiple supervised learning methods for addressing discrimination, but they are often mutually exclusive. We therefore evaluate four of such methods addressing different non-discriminatory objectives at each of the stages of a machine learning pipeline where discrimination can be addressed: pre-processing, in-processing, and post-processing.

\textbf{Pre-processing:} Reweighing approach from \citet{KamiranReweigh2012}. Before training a given model, this approach modifies the weights of features with the goal of removing discrimination, defined as demographic disparity, by the protected feature. 

\textbf{In-processing:} Reductions model from \citet{Agarwal2018reductions} yields a randomized classifier with the lowest empirical error subject to a given fairness constraint. We evaluate four variations of reductions constraining on demographic parity, equalized odds, equal opportunity, and error ratio (represented as ''DP'', ''EO'', ''TPR'', and ''ER''). (2) 

\textbf{Post-processing:} Calibrated equalized odds approach from \citet{Pleiss2017Fairness} that extends \citet{Hardt2016Equality}. Building upon the prior work, calibrated equalized odds maintains calibrated probability estimates, i.e., estimates are independent of the protected attribute, while matching an equal cost constraint between the predictions of two groups. In our evaluation the constraint is a weighted combination between the false-negative and false-positive rates between the two groups in the protected attribute.     

In all cases, we use the implementations of these algorithms as provided in the AI Fairness 360 (AIF360) open-source library \cite{aif360-oct-2018}. Each of the models requires access to protected attribute during training time. The post-processing approach, calibrated equalized odds, also needs access to the protected attribute during test time. The baseline ``traditional'' model is a result of standard supervised learning. Underlying classifier for all the evaluated models is logistic regression. We also evaluate a logistic regression model that drops the protected attribute, $Z$, before training. In the figures these models are abbreviated as ``Trad'': standard supervised learning, ``Exp Grad'': reductions model, and ``Cal Eq Odds'': calibrated equalized odds.

\subsection{Synthetic datasets}
\label{sec:synthetic}
To generate the datasets we draw samples from a multivariate normal distribution with standard normal marginals and given correlations. We then convert a column of our matrix into binary values, set that as $Z$, and set the rest as $\X$. We compare the learning methods while increasing the correlation $r(X_1,Z)$ from $0$ to $1$. We first introduce and study Scenario~A, $Y=\sigma(X_1 + X_2 + Z + 1)$, where $\sigma$ is the logistic function and the correlations between both ($X_1$, $X_2$) and ($X_2$, $Z$) are zero. Then, we have Scenario~B of $Y=\sigma(0*X_1 + X_2+ 0*Z + 1)$ where the correlation between ($X_2$, $Z$) is $0.5$.

\subsubsection{Comparison of introduced methods}

As the MIM and the two OPT methods minimize loss functions based on the preservation of the influence of non-protected attributes, the resulting models perform comparably (red and two orange lines in Figure \ref{fig:aifsynset1}).
All introduced methods achieve their objectives (compare them against the blue lines in Figure \ref{fig:aifsynset1}), i.e., they all remove the influence of $\Z$ (Figures~\ref{fig:aifsynset1}C, \ref{fig:aifsynset1}H), the MIM preserves the influence of pooled $\X$ (Figure~\ref{fig:aifsynset1}D), the OPT-MDE preserves the MDE of individual $\X_i$ (Figures~\ref{fig:aifsynset1}A, \ref{fig:aifsynset1}B), and the OPT-SHAP preserves the SHAP of individual $\X_i$ (Figures~\ref{fig:aifsynset1}F, \ref{fig:aifsynset1}G).
Interestingly, the MIM performs nearly the same as the OPT-SHAP across all measures, despite not being designed to achieve the feature-specific loss of OPT-SHAP (Equation~\ref{eq:loss-shap-feature-specific}). Since the MIM is guaranteed to preserve the SHAP of the pooled $\X$, and SHAP meets the completeness axiom (a.k.a. additivity axiom)~\cite{Datta2016Algorithmic, Janzing2019Feature}, which says that the sum of influence of individual features equals to the influence of all features pooled together, hence the MIM can achieve both the pooled and individual objectives, as in this case. 
Note, however, that the MIM is slightly more accurate than the OPT-SHAP (Figure~\ref{fig:aifsynset1}E) at the cost of minimally higher demographic disparity (Figure~\ref{fig:aifsynset1}I) and equal opportunity difference, i.e., accuracy disparity (Figure~\ref{fig:aifsynset1}J). 
%

\subsubsection{Comparison vs. state-of-the-art methods}
Given the similarity of the MIM to the OPT methods, its computational efficiency, and for readability, here we compare only the MIM with the traditional and state-of-the-art methods (figures including OPT methods are in Appendix B).
The MIM preserve $X_1$'s influence with respect to the standard full model as $r(X_1,Z)$ increases (red and solid blue lines in Figures \ref{fig:synth}A, \ref{fig:synth}B, \ref{fig:synth}F, \ref{fig:synth}G). As expected in Scenario~A, the influence of $X_1$ increases with correlation for the traditional method that simply drops $Z$, i.e., it induces indirect discrimination via $X_1$ (dotted blue line in Figure \ref{fig:synth}A). In the remainder of the paper we report results for the SHAP influence, since the results for MDE are qualitatively the same (Appendix C). Importantly, even though the MIM does not optimize for any fairness measure, it performs better in demographic disparity (Figure \ref{fig:synth}E) and all other fairness measures (Appendix B) than the traditional method dropping~$Z$. 

Other methods addressing discrimination either change the influence of $X_1$ with the growing correlation $r(X_1,Z)$ (``Exp Grad'' methods in Figure \ref{fig:synth}) or
use the protected attribute $Z$ and thus discriminate directly ("Cal Eq Odds" and "Reweighing" methods in Figure \ref{fig:synth}). 
On the one hand, the method optimizing for parity of impact (``Exp Grad DP'') in Scenario~A unnecessarily decreases the influence of $\X_1$ (brown line in Figure \ref{fig:synth}A), which leads to an accuracy loss (Figure \ref{fig:synth}D), because its goal is to remove the correlation between $\hat{Y}$ and $Z$. 
In Scenario~B, the changes in the influence of $X_1$ with the growing correlation are especially noteworthy. The affected methods (``Exp Grad'') are increasingly influenced by $X_1$ as it gets more associated with the protected attribute (Figure \ref{fig:synth}F), despite $X_1$ not having impact on $Y$, because this enables them to increasingly utilize $X_2$ in their model of $Y$ (Figure \ref{fig:synth}G) and improve accuracy (Figure \ref{fig:synth}I) under a respective fairness constraint.
Other reductions approaches, constrained on equal opportunity and error ratio, yield similar outcomes (Appendix B).
On the other hand, the methods allowing the influence of $Z$ perform relative well in Scenario~A, because they counteract discrimination by using $Z$ directly (violet and teal lines in Figures~\ref{fig:synth}C, \ref{fig:synth}H) to maintain stable influence of $X_1$ and $X_2$ on $\hat{Y}$ (Figures~\ref{fig:synth}A, \ref{fig:synth}B, \ref{fig:synth}F, \ref{fig:synth}G) and a high model accuracy (Figures~\ref{fig:synth}D, \ref{fig:synth}I), independently of $r(X_1,Z)$.
However, in Scenario~B, where there is no discrimination, these methods introduce reverse discrimination to counteract the correlation between $X_2$ and $Z$, without considering the possibility that this correlation is a \textit{fair relationship}, and achieve lower accuracy than the MIM (Figure~\ref{fig:synth}I).

\begin{figure*}[h!]
\begin{subfigure}[b]{\textwidth}
  \centering
  \includegraphics[width=0.24\linewidth]{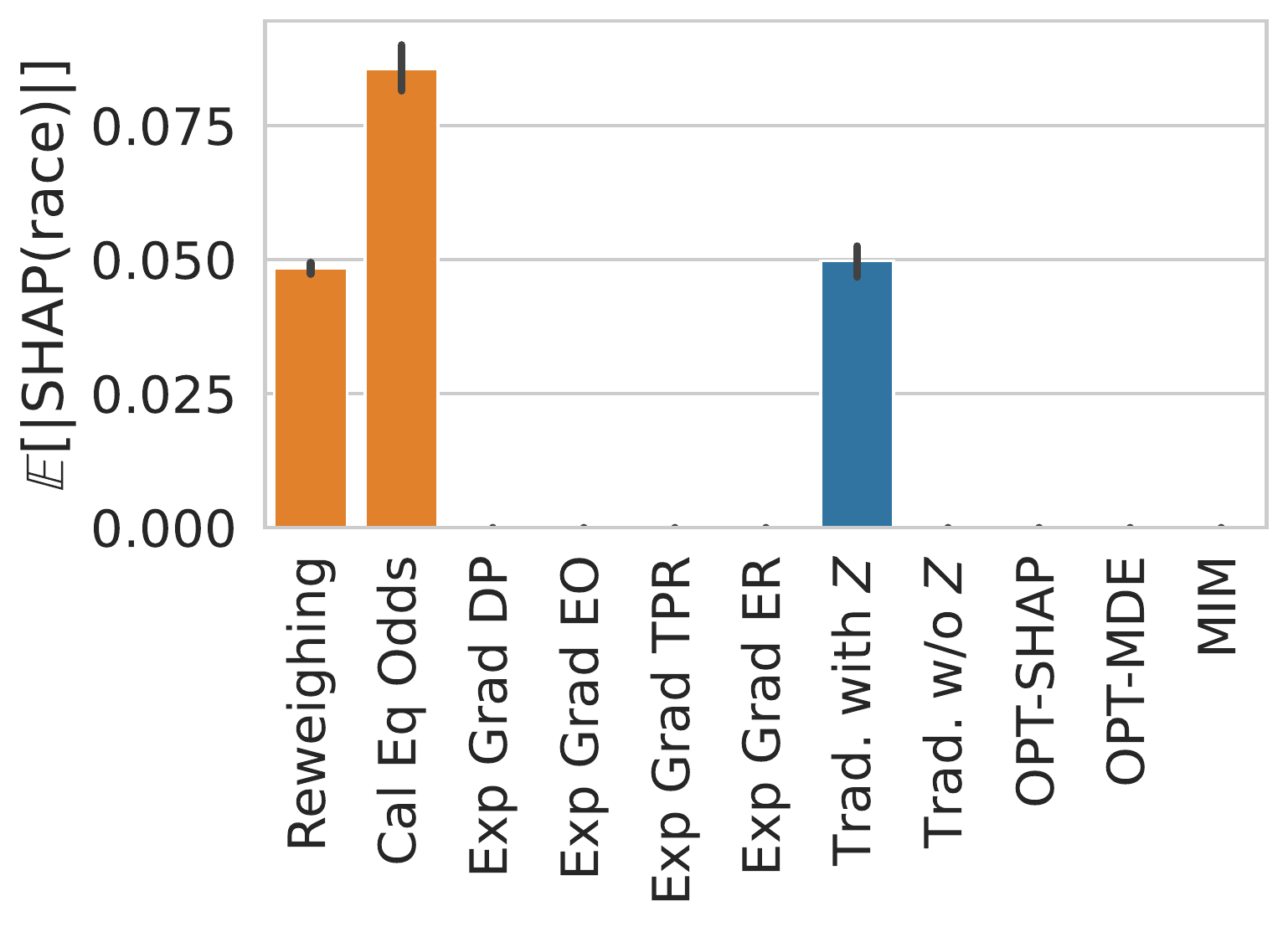}
  \includegraphics[width=0.24\linewidth]{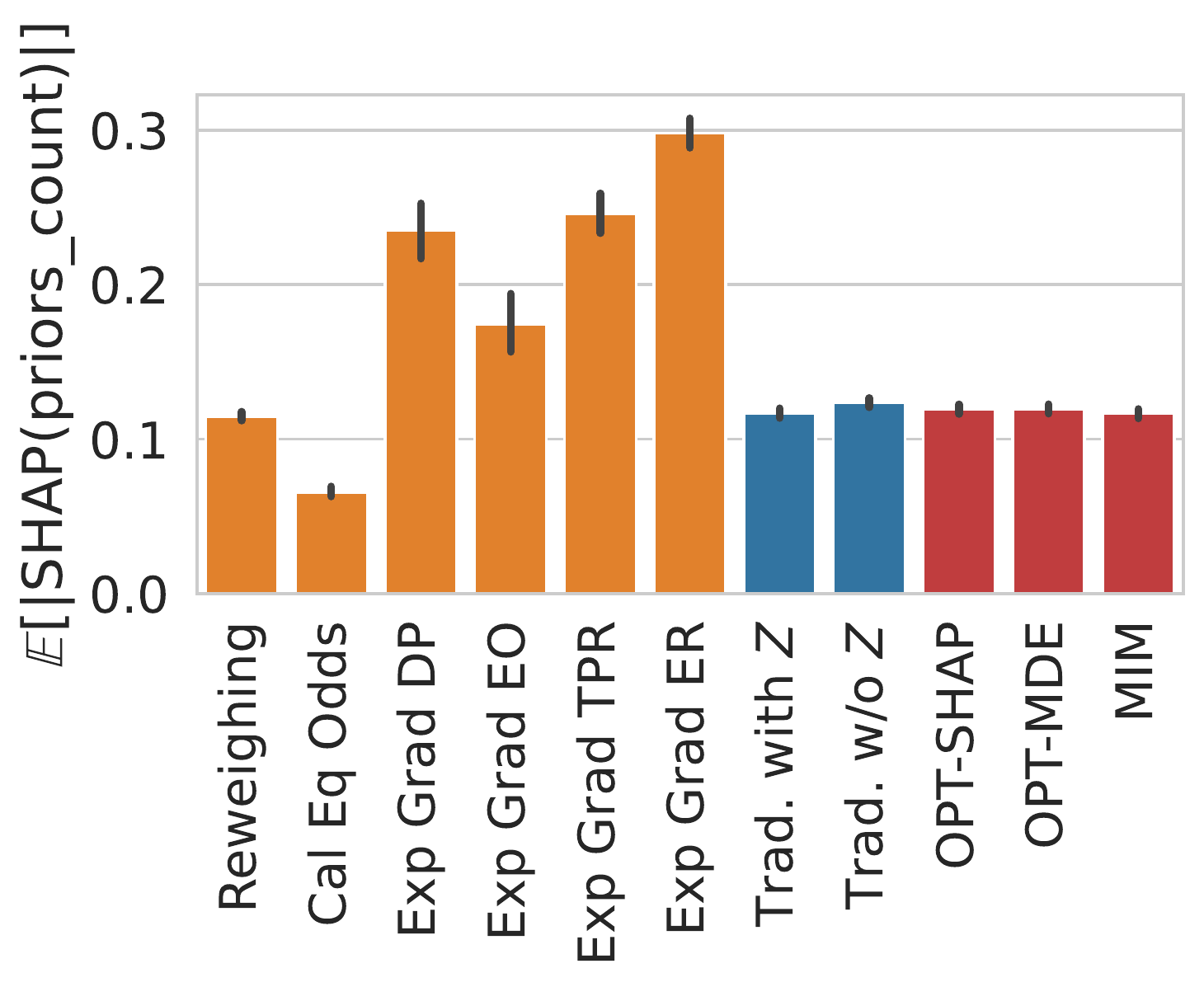}
  \includegraphics[width=0.24\linewidth]{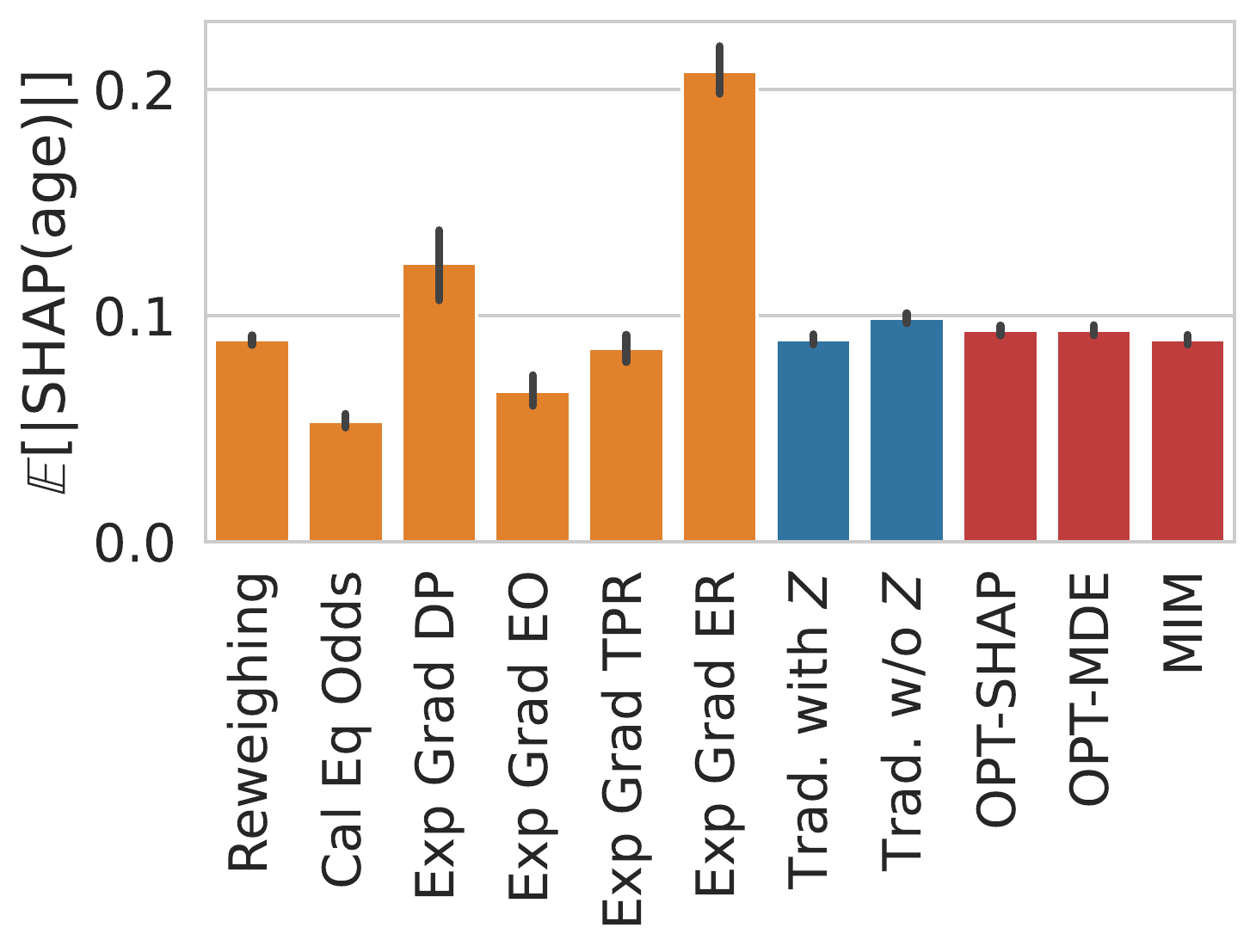}
  \includegraphics[width=0.24\linewidth]{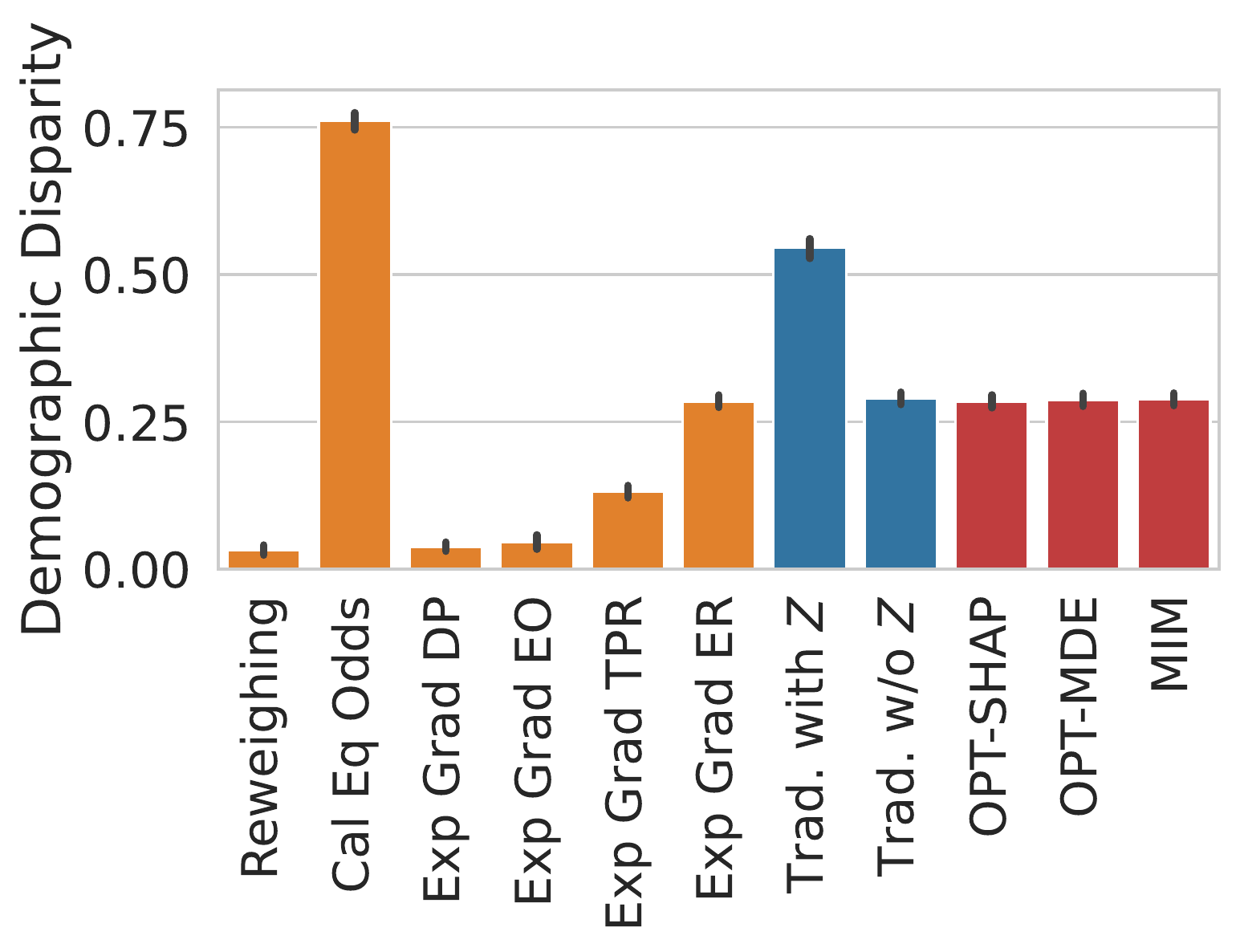}
  \caption{COMPAS}
\end{subfigure}
\begin{subfigure}[b]{\textwidth}
  \centering
  \includegraphics[width=0.24\linewidth]{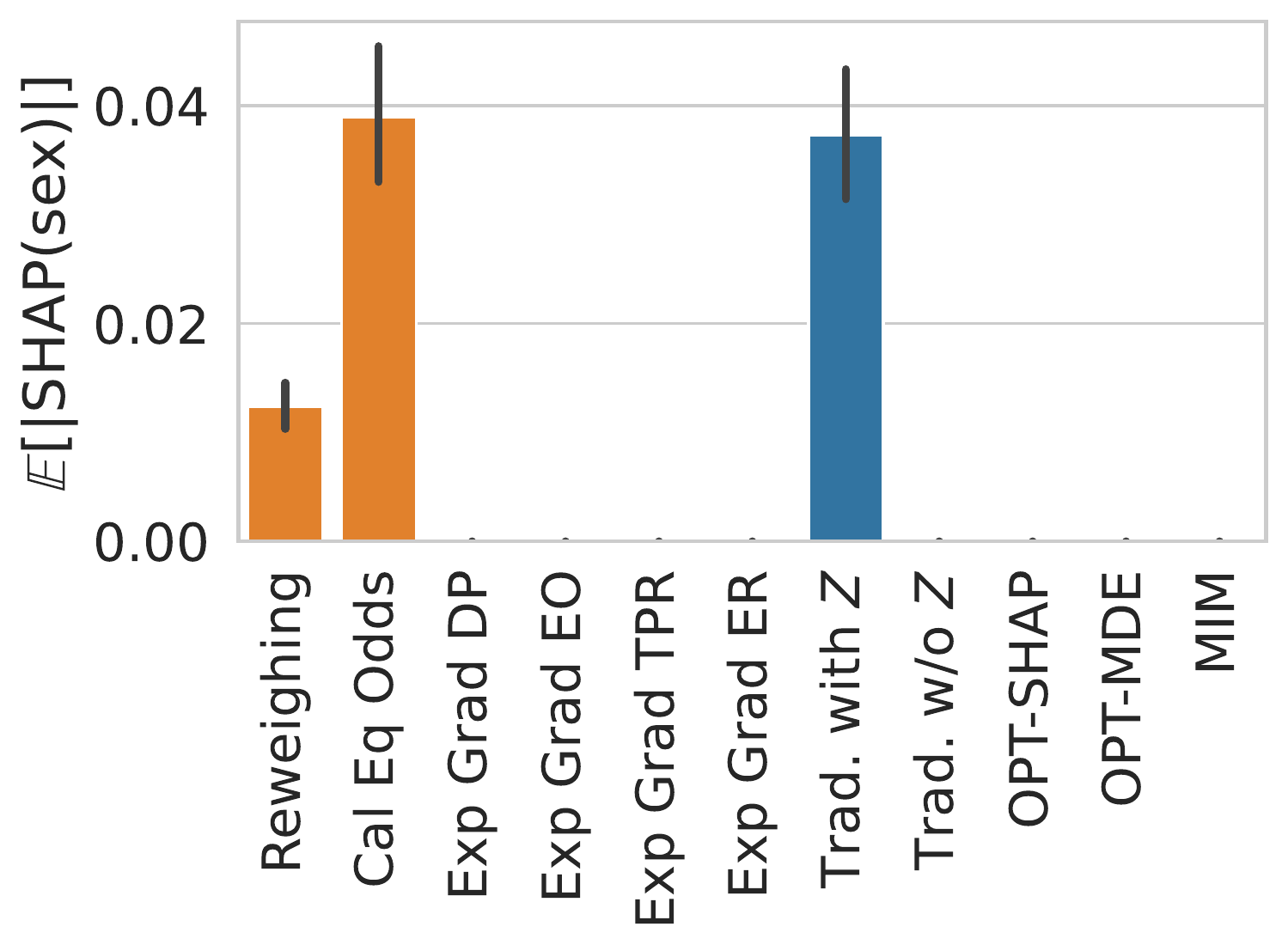}
  \includegraphics[width=0.24\linewidth]{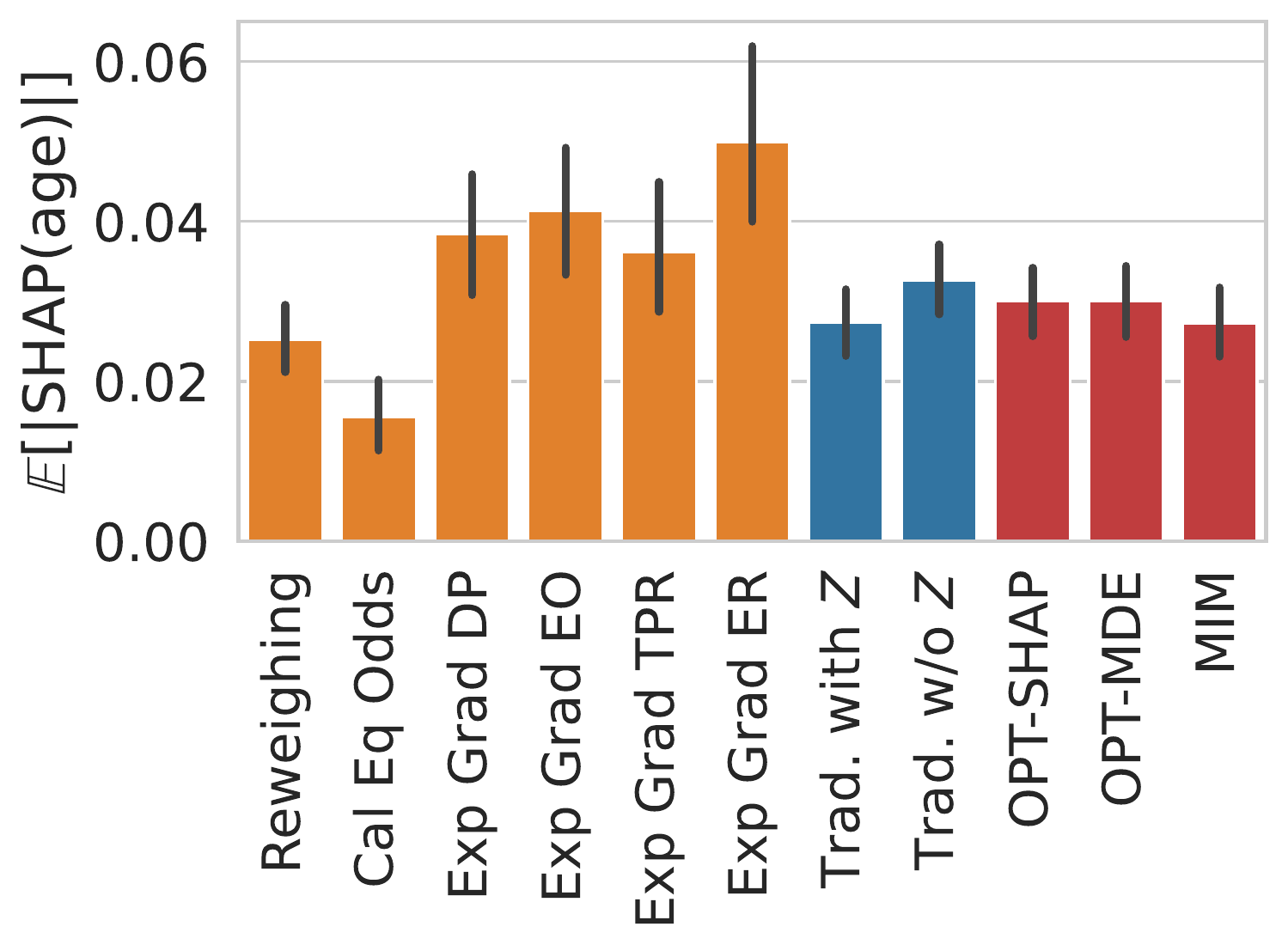}
  \includegraphics[width=0.24\linewidth]{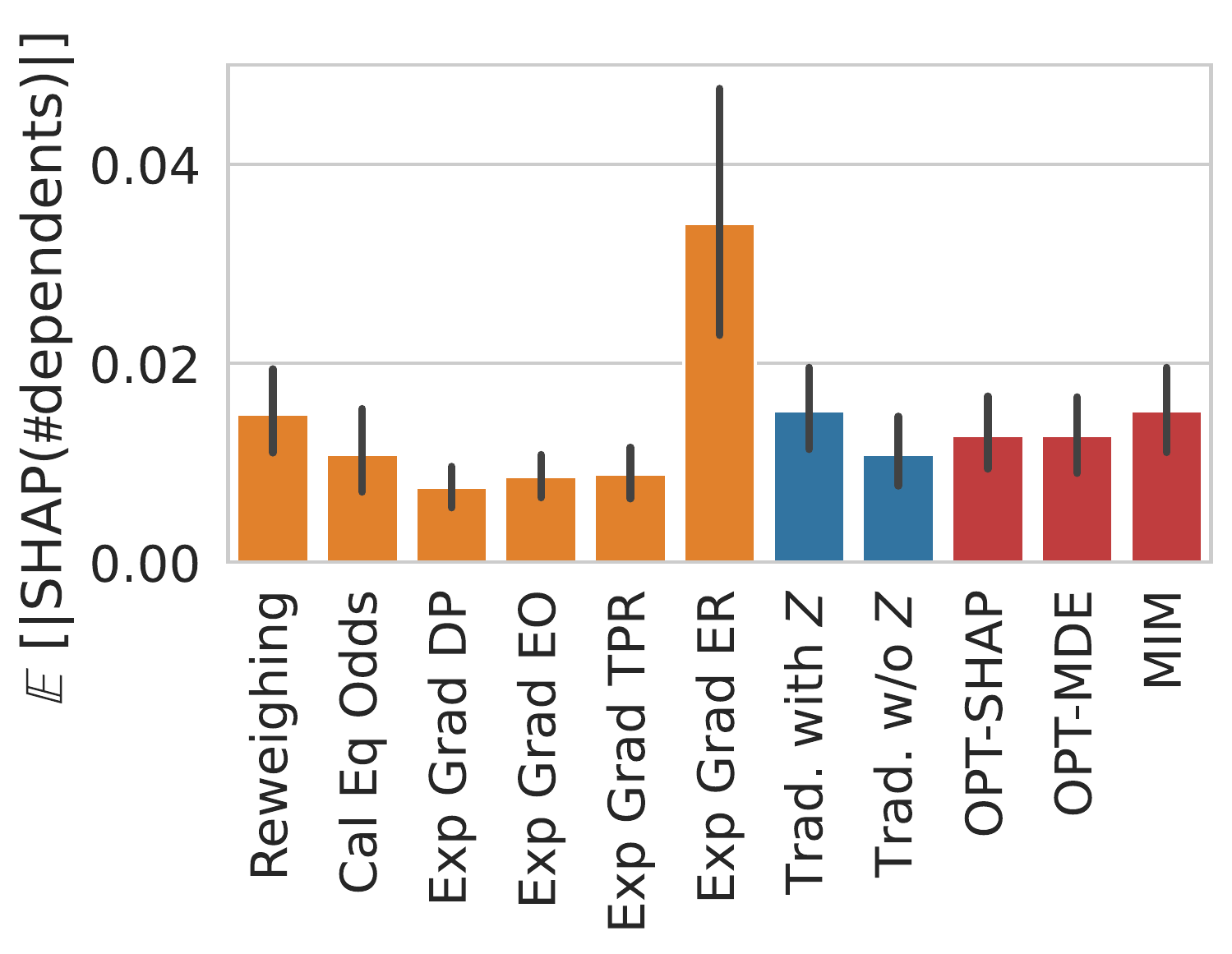}
  \includegraphics[width=0.24\linewidth]{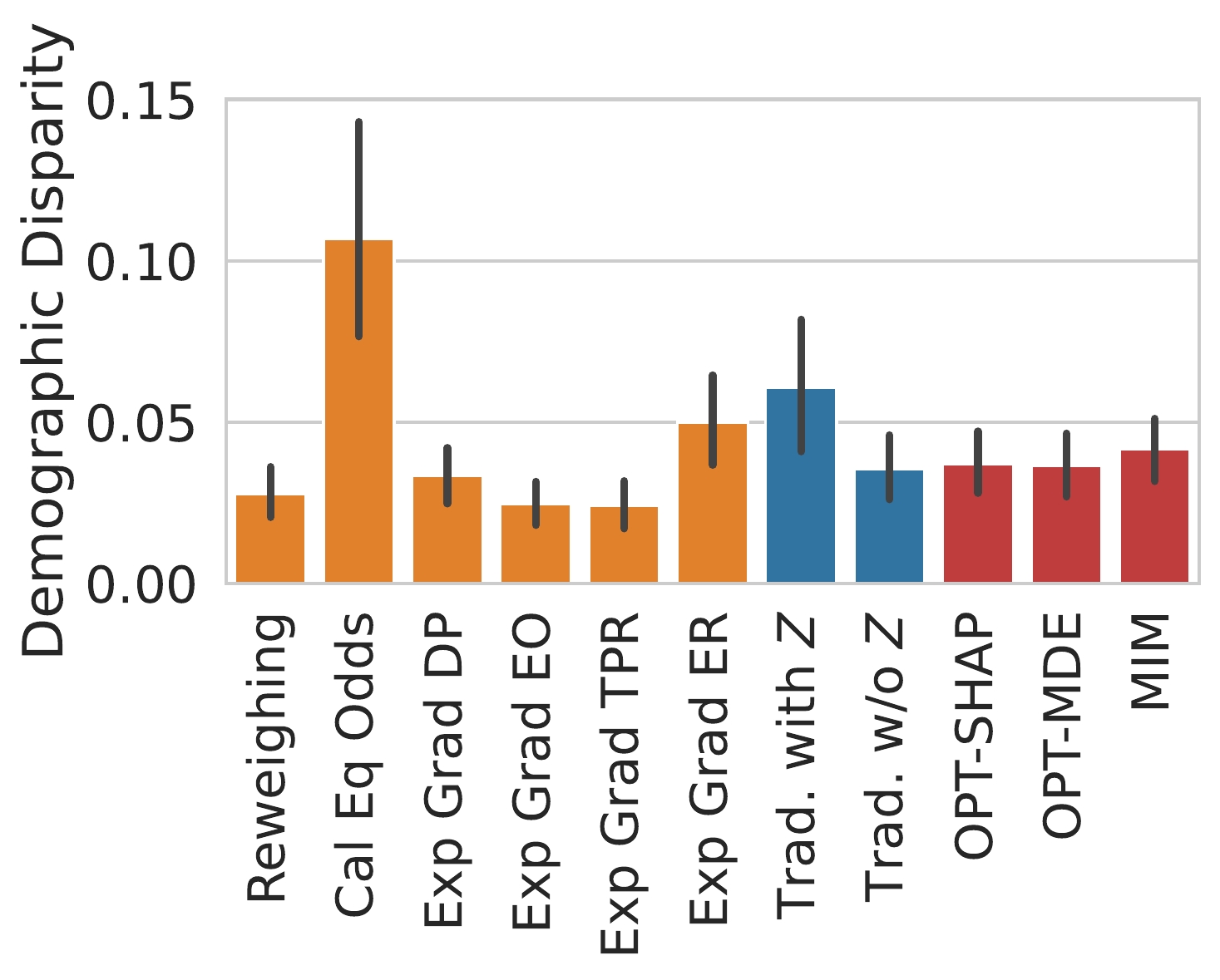}
  \caption{German Credit}
\end{subfigure}
\begin{subfigure}[b]{\textwidth}
  \centering
  \includegraphics[width=0.24\linewidth]{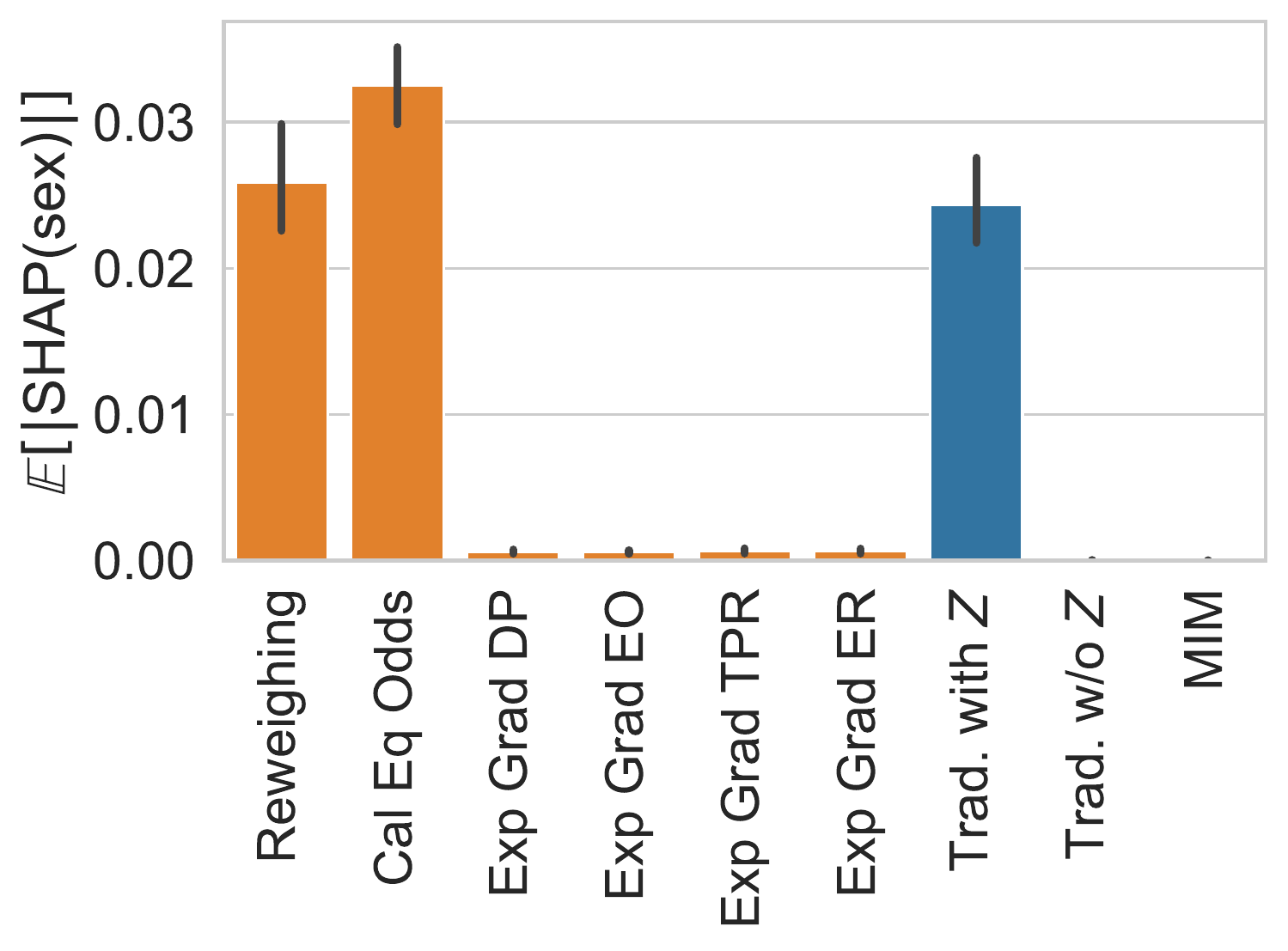}
  \includegraphics[width=0.24\linewidth]{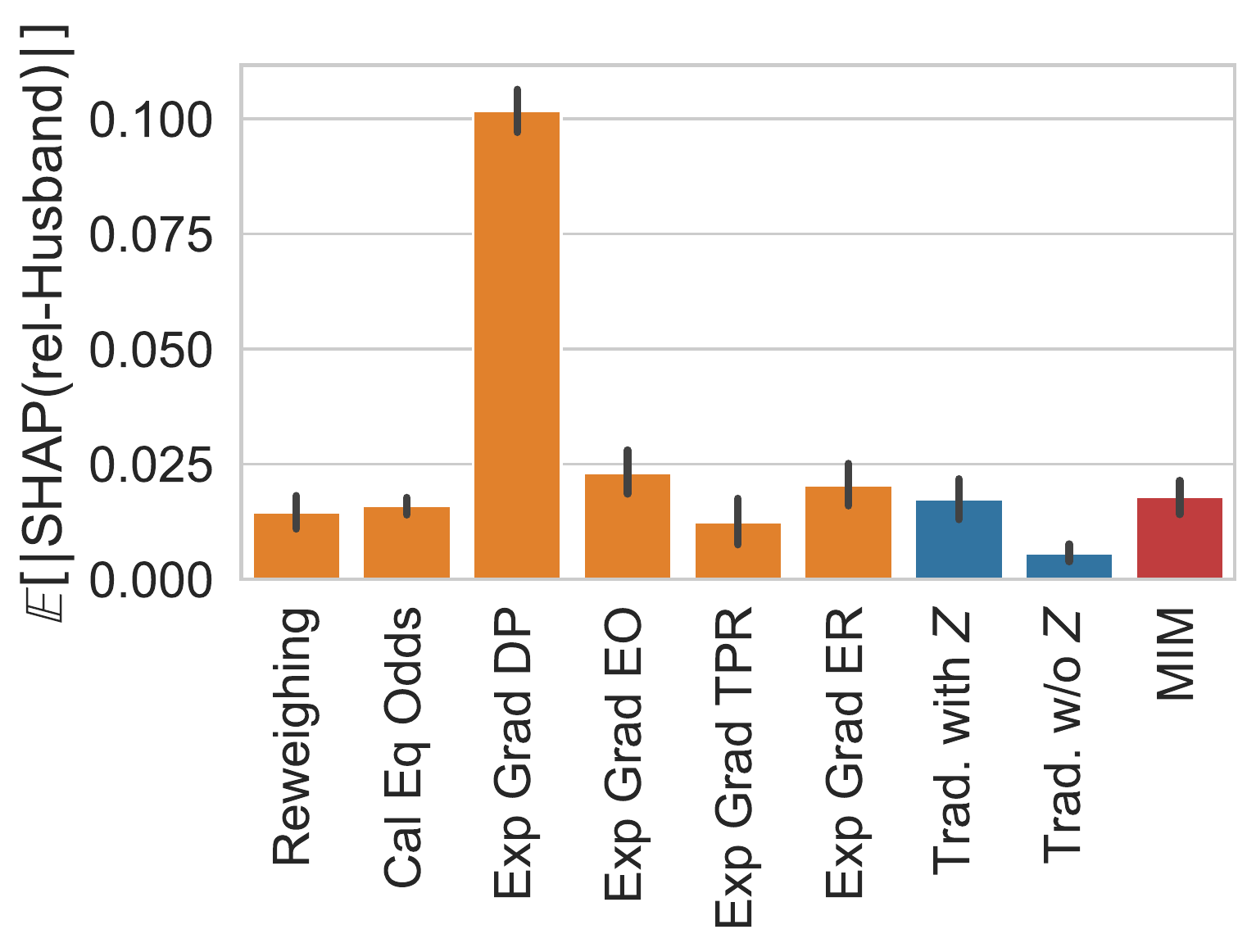}
  \includegraphics[width=0.24\linewidth]{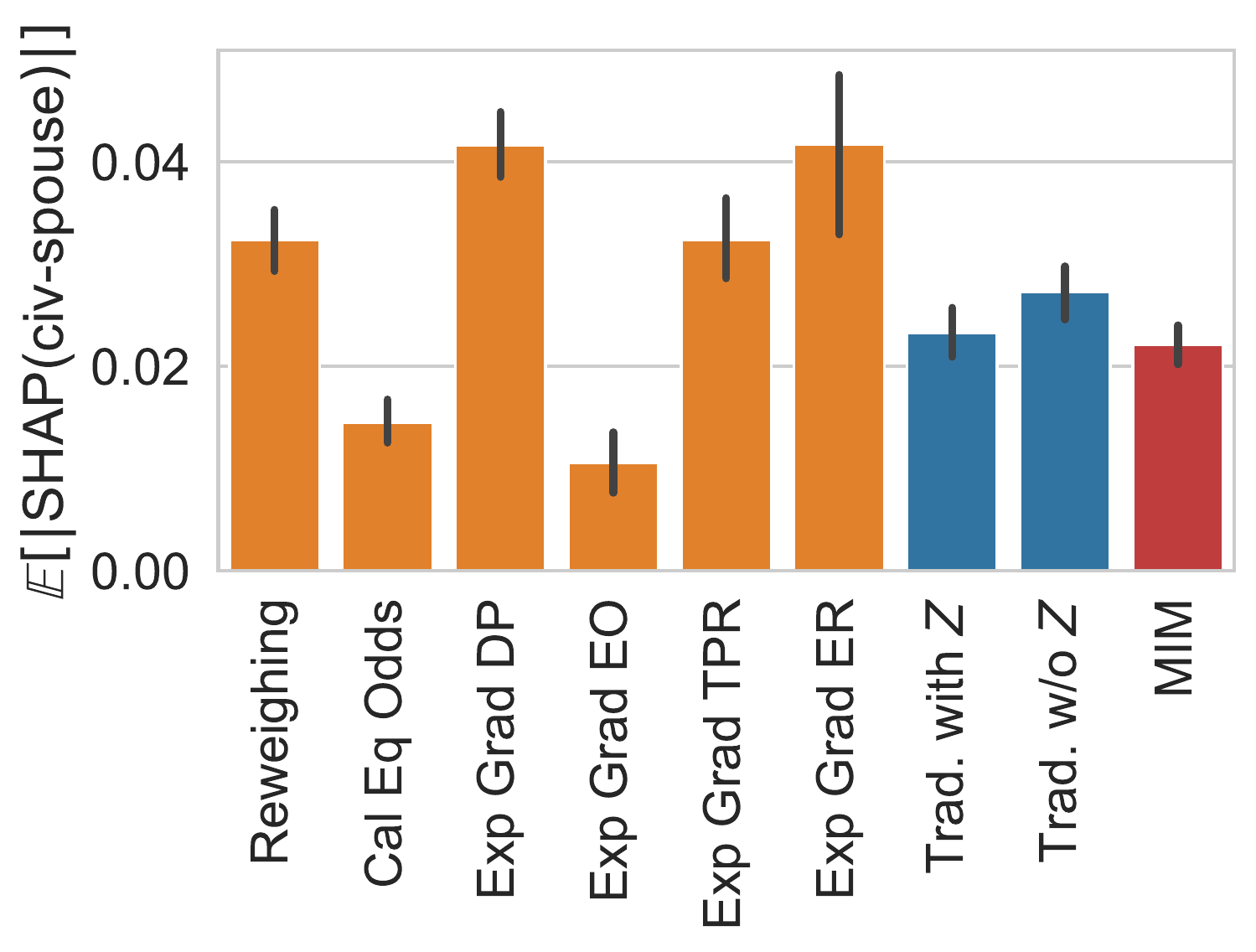}
  \includegraphics[width=0.24\linewidth]{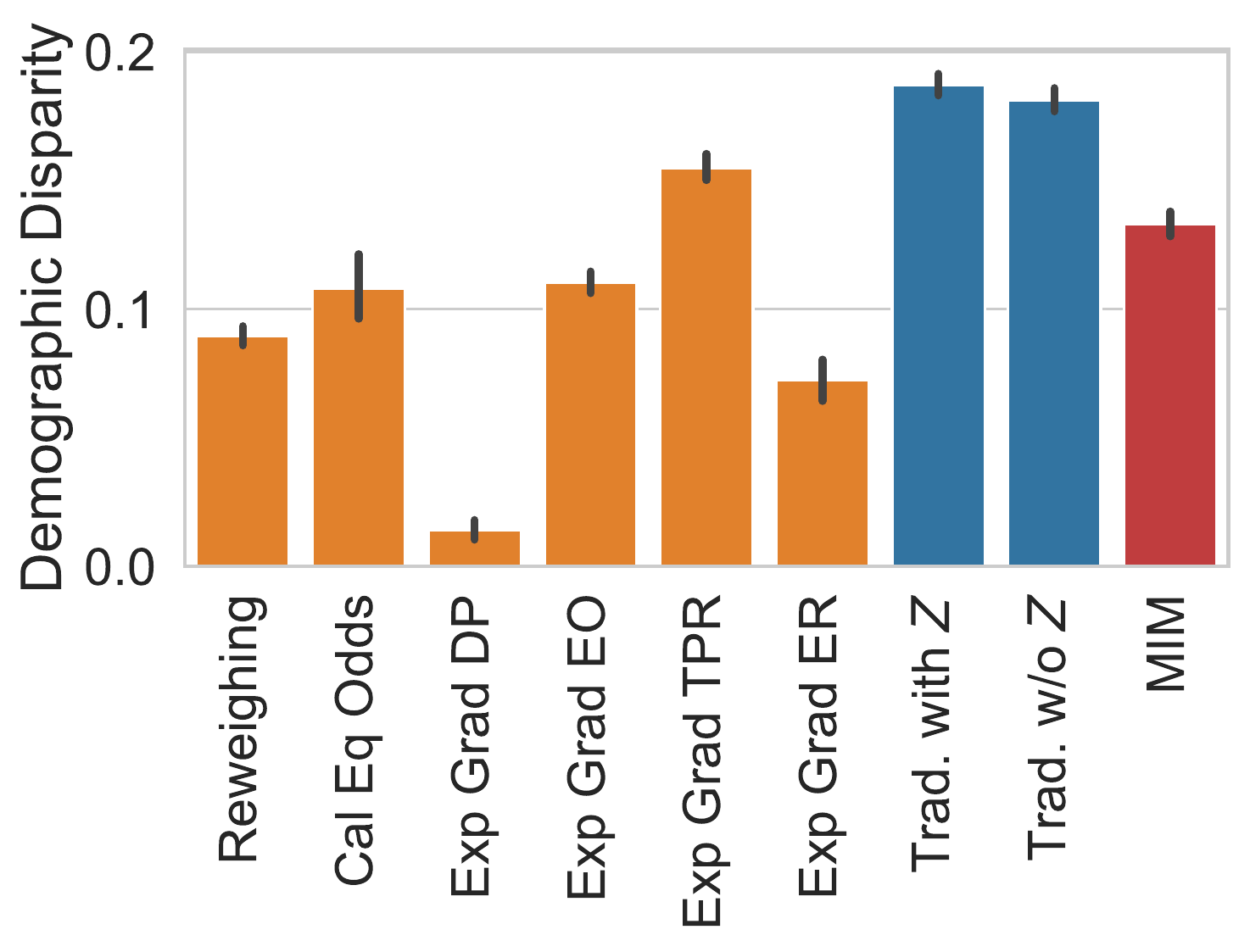}
  \caption{Adult Census Income}
\end{subfigure}

\caption{Averaged absolute SHAP for the protected attribute and two features most correlated with it and demographic disparity on the COMPAS, German Credit, and Adult Census Income datasets. Error bars show 95\% confidence intervals.}
\label{fig:aifcompas}
\end{figure*}
\subsection{Real-world datasets}
We train and test (80:20 random split) the evaluated methods on the COMPAS criminal recidivism dataset \cite{Larson2016How}, German Credit, and Adult Census Income \cite{Dua:2019} datasets popular in machine learning fairness research.

\begin{itemize}
    \item \textbf{COMPAS.} Here the model predicts the recidivism of an individual based on their demographics and criminal history with race being the protected attribute. We use the binary outcomes as provided by \citet{aif360-oct-2018}. To make the presentation more clear, we exacerbate the racial bias by removing 500 samples of positive outcomes (no recidivism) for African-Americans. The two attributes most correlated with race are age and number of prior counts.  
    \item \textbf{German Credit.} A financial dataset with the task being to determine if a loan applicant's credit risk is ``good'' or ``bad'' using sex as the protected attribute. We drop non-numeric attributes leaving information about the loan applicant's job, household, and the sought loan. The two attributes most correlated with a applicant's sex are their age and number of dependents.
    \item \textbf{Adult Census Income.} The task for this dataset is to determine if someone's annual income is more than \$50k with sex being the protected attribute. Other attributes give information about a person's education, job, relationships, and demographics. The two attributes most correlated with a person's sex are if they are a husband and if they have a spouse in the armed forces. Note that due to the number of features of this dataset and its effect on computation time for input influence, we omit the results of the OPT methods. 
\end{itemize}

Data loading and pre-processing functions from the AIF360 library are used for these real-world datasets \cite{aif360-oct-2018}. We train and test all the evaluated models over 30 trials for the COMPAS and German Credit datasets and 10 trials for the Adult Census Income dataset.


In line with the synthetic results, the MIM (and OPT methods) is not influenced by the protected attribute (leftmost column in Figure~\ref{fig:aifcompas}) and, with respect to the traditional model, preserves the influence for the two attributes most correlated with the protected attribute in these real-world scenarios (blue and red bars in the two middle columns of Figure \ref{fig:aifcompas}). 
While most of the evaluated models outperform the MIM in terms of demographic disparity (the rightmost column in Figure~\ref{fig:aifcompas}), they are either influenced by the protected attribute (the leftmost column in Figure~\ref{fig:aifcompas}) or do not preserve the influence of at least one of the most correlated attributes (the two middle columns in Figure~\ref{fig:aifcompas}) and have significantly lower accuracy (Figure~\ref{fig:aiferror}), e.g., ``Exp Grad'' for COMPAS (Figures \ref{fig:aifcompas}a \& \ref{fig:aiferror}a). 
As with the synthetic results, the changes in influence for the features most correlated with the protected attribute indicate that these methods induce indirect discrimination during training, despite having better performance for certain fairness measures.

\begin{figure*}
\begin{subfigure}[b]{0.25\linewidth}
  \centering
  \includegraphics[width=\linewidth]{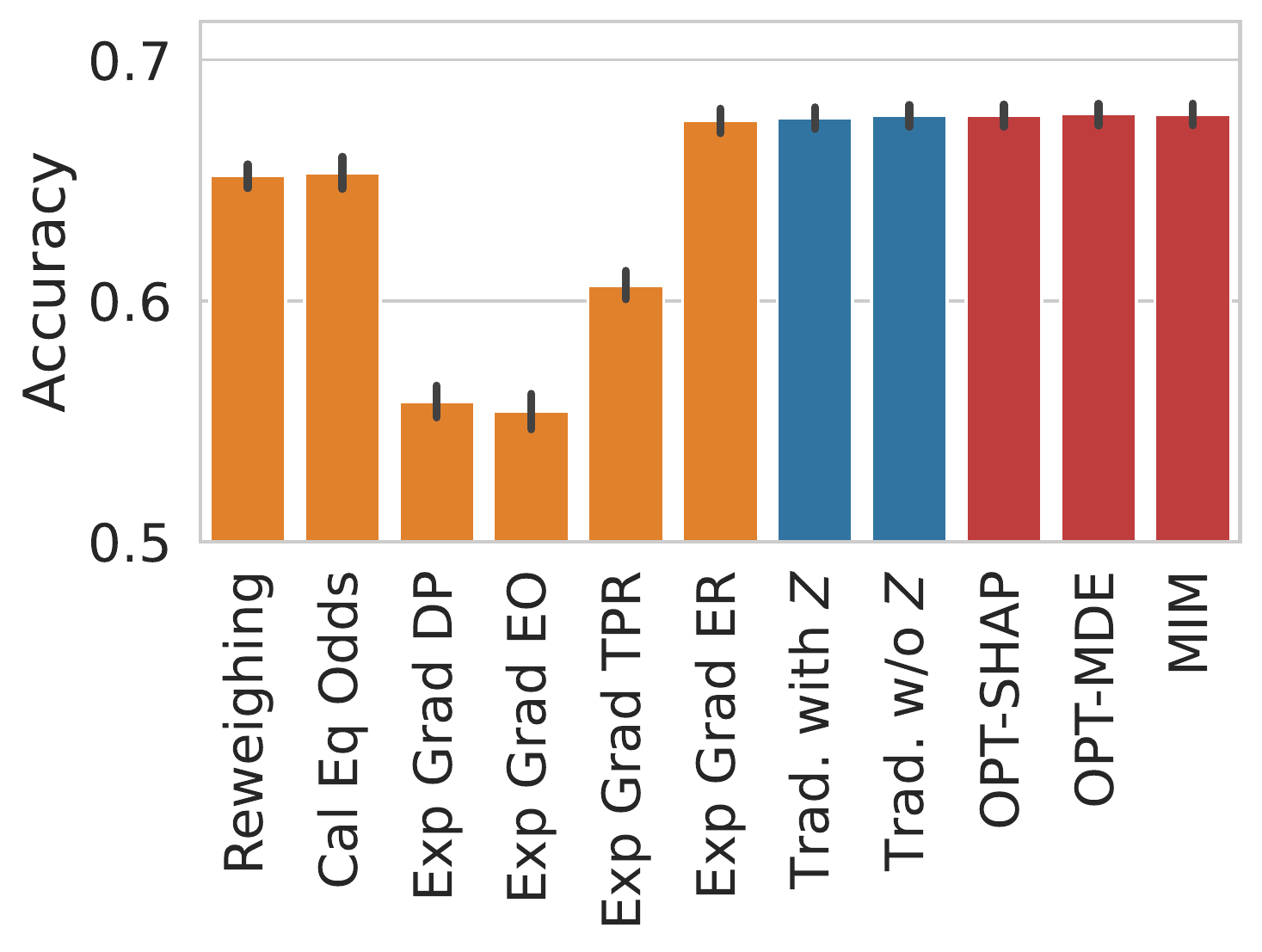}
  \caption{COMPAS}
\end{subfigure}
\begin{subfigure}[b]{0.25\linewidth}
  \centering
  \includegraphics[width=\linewidth]{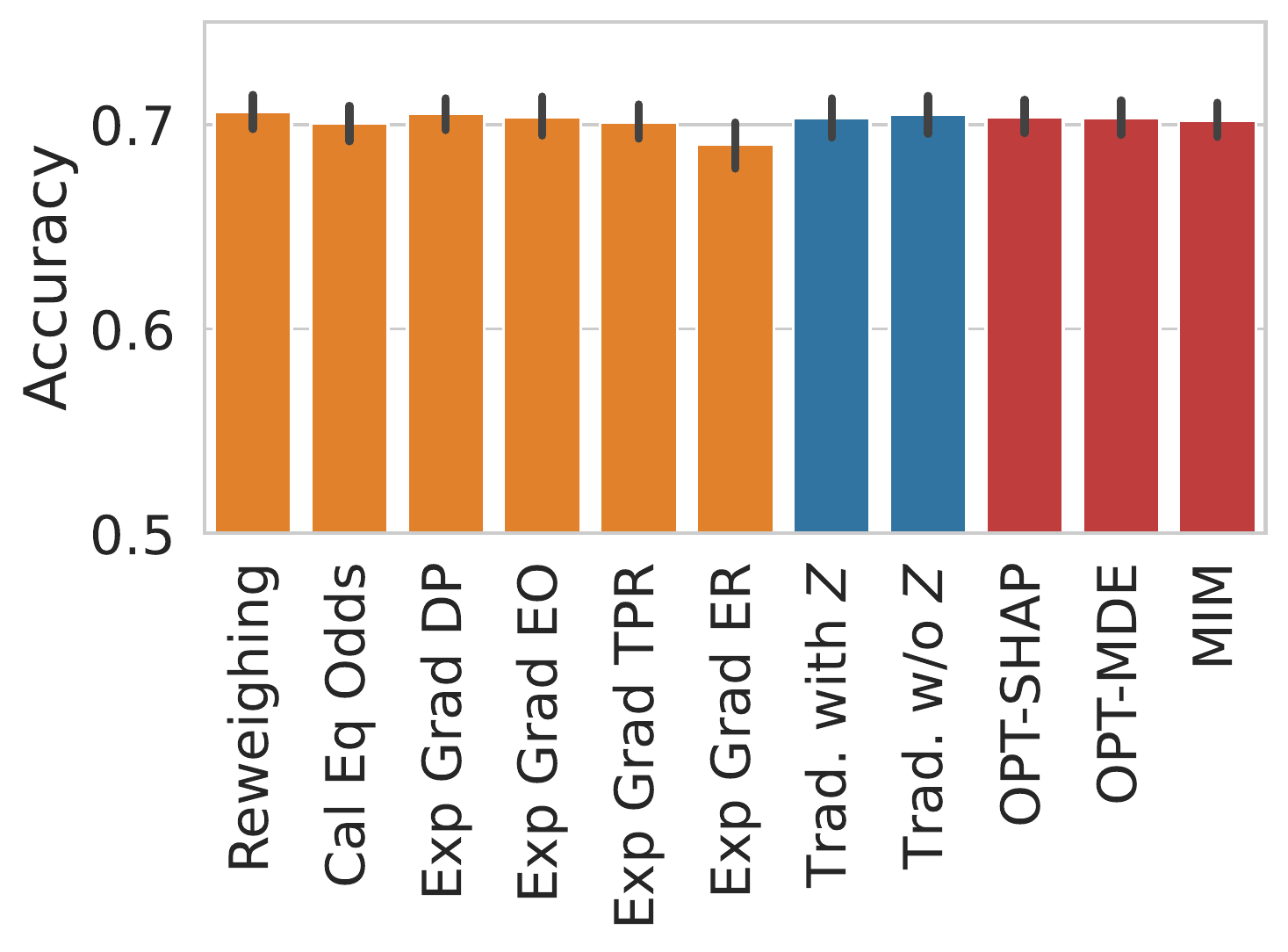}
  \caption{German Credit}
\end{subfigure}
\begin{subfigure}[b]{0.25\linewidth}
  \centering
  \includegraphics[width=\linewidth]{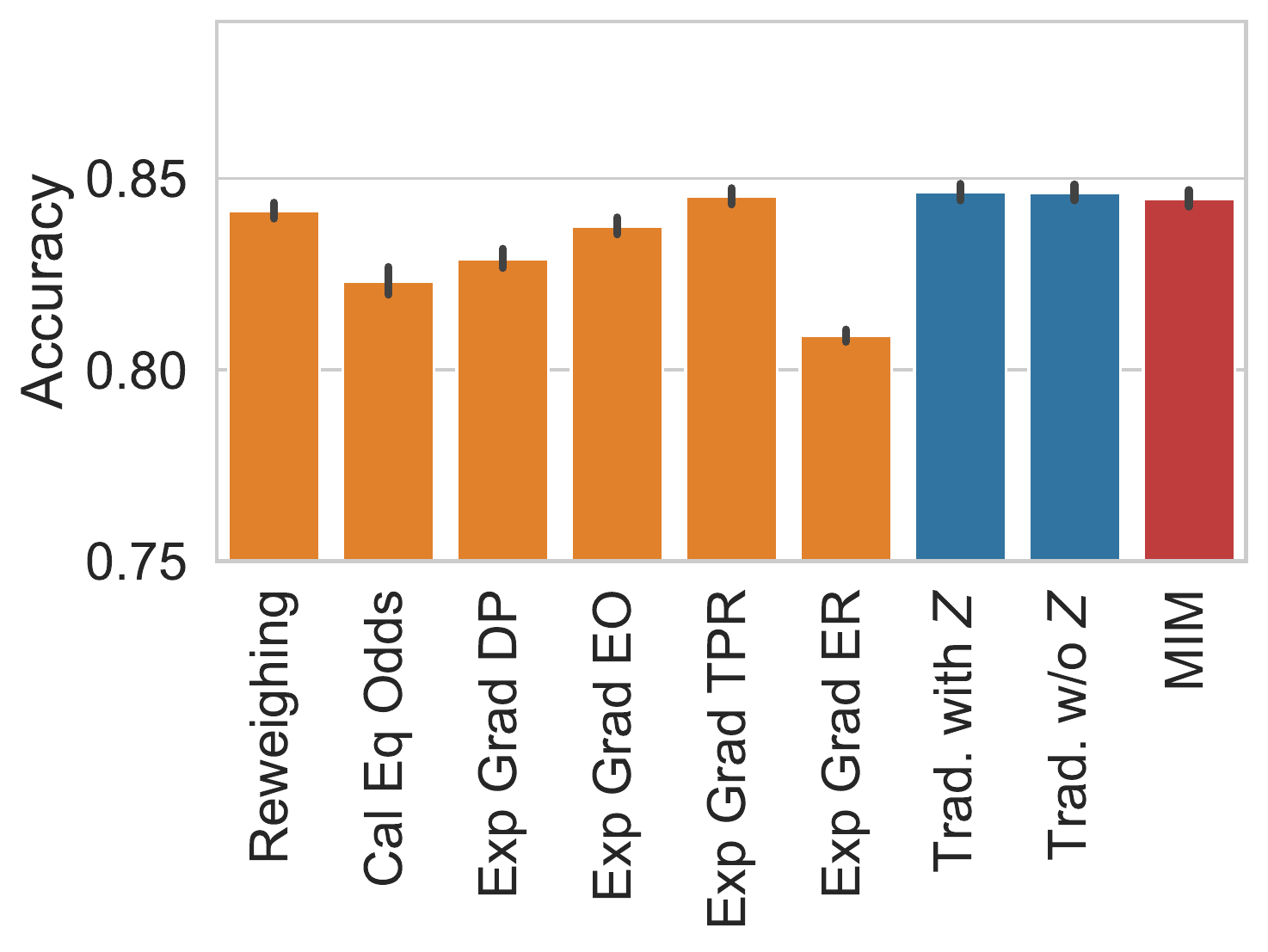}
  \caption{Adult Census Income}
\end{subfigure}

\caption{Accuracy for the evaluated models on the COMPAS, German Credit, and Adult Census Income datasets. Error bars show 95\% confidence intervals.}
\label{fig:aiferror}
\end{figure*}

\section{Limitations and future work}
This manuscript focuses on two influence measures, MDE and SHAP, and corresponding loss functions for influence preservation. Prior studies show that input influence measures like SHAP can be fooled into stating that a protected attribute has no influence on a model \cite{foolingshap}. With this, someone may be able to trick our approach into believing a model was fair by our definition, even though in reality it was not. In such adversarial scenarios, our approach may experience the limitations of other discrimination preventing methods where satisfying a specified fairness objective still leads to discrimination.
There exist many other influence measures than the two studied here, and other loss functions could be constructed based on these and other influence measures. We hope to explore these research directions in future works.

While our theoretical guarantees for the preservation of MDE or SHAP hold for wide classes of models, our experiments compare simple logistic models. It would be interesting to test the proposed methods on more complex non-linear models in various important real-world application scenarios across domains.
Given that the number of fairness objectives is already high and that we propose new fairness objectives, there is a need for evaluating learning algorithms addressing fairness. A potential approach could rely on realistic simulations of discrimination and test whether a given learning method is able to retrieve the non-discriminatory data-generating process.


Most importantly, any fairness objective can be misused by people to justify their systems as fair, especially if our limited understanding of causal processes happening in real-world decision-making adds up to the confusion. 
For instance, if a company develops a model of $Y$ using $\X$ and some $X_i$ is unfairly influenced, then first they shall apply our method to a model of $X_i$ and second to a model of $Y$. An omission of the first step, whether intentional or unintentional, would result in indirect discrimination.
In such contexts, we emphasize that understanding the causal processes relevant to the decision-making at hand in collaboration with domain experts and goodwill are of the highest priority, since it can lead to more accurate and more fair models.


\section{Conclusions}
The presented results shed a new light on the problem of discrimination prevention in supervised learning. 
First, we propose a formal definition of induced discrimination, inspired by discrimination via association~\cite{Wachter2019Affinity}. We measure influence of features to capture induced discrimination.
Second, we show that state-of-the-art methods addressing discrimination often return biased models when they are trained on datasets that are or are \textit{not} affected by discrimination. 
Third, for discrimination prevention we propose to use a marginal interventional mixture of full models, which prevents the induction of discrimination via association.
In the scenarios where discrimination does not affect the training data, the proposed learning algorithm falls back to a traditional learning, which ensures that the method does not bias the model needlessly. 
These results provide support for the use of the marginal interventional mixture in the circumstances where discrimination could have affected the training dataset. 


\begin{acks}
We thank Luis F. Lafuerza for his feedback and multiple rounds of comments and
Isabel Valera, Muhammad Bilal Zafar, and Krishna Gummadi for discussions on early versions of this work. P.A.G. acknowledges support from Volkswagen Foundation (Ref. 92136) and DARPA and ARO (Cooperative Agreement No. W911NF-20-2-0005). The views and conclusions contained in this document are those of the authors and should not be interpreted as representing the official policies, either expressed or implied, of the DARPA or ARO, or the U.S. Government.  
\end{acks}

\bibliographystyle{ACM-Reference-Format}
\bibliography{jabref,manual_additions}


\section*{Appendix A: Proof of Proposition 2}

Without loss of generality, we consider the case of two variables $x$ and $z$. 
From the definition of $L_\text{SHAP}(\X)$ and $\text{SHAP}_{\Y}(\x |\x\z)$, under $\ell_2$ loss:
\begin{align*}
&L_\text{SHAP}(\X) = 
\E_{\X} ( \E_{\Z''} \text{SHAP}_{\Y}(\X |\X\Z'') - \E_{\Z''} \text{SHAP}_{\hat{\Y}}(\X|\X\Z'') )^2 = \\
&=\E_{\X} \left( \E_{\X',\Z',\Z''} \left[
 \frac{Y_{\X,\Z''} - Y_{\X',\Z''} + Y_{\X,\Z'} - Y_{\X',\Z'}}{2}
 \right.\right. \\
   &\left. \left. -
 \frac{
 \hat{Y}_{\X,\Z''} - \hat{Y}_{\X',\Z''} + \hat{Y}_{\X,\Z'} - \hat{Y}_{\X',\Z'}
 }{2}
 \right] 
 \right)^2
\end{align*}

For an interventional mixture $\hat{y}_\pi(\x) = \E_{\tilde{\Z}} \hat{y}(\x,\tilde{\Z})$,
\begin{align*}
L_\text{SHAP}(\X) = 
 \E_{\X} \left( \E_{\X',\Z',\Z''} \left[
 \frac{Y_{\X,\Z''} - Y_{\X',\Z''} + Y_{\X,\Z'} - Y_{\X',\Z'}}{2}
 \right] \right. \\
   \left. -
 \E_{\X',\tilde{Z}} \left[ 
 \hat{Y}_{\X,\tilde{Z}} - \hat{Y}_{\X',\tilde{Z}}
 \right] \right)^2.
\end{align*}

Assuming that the function $y(x,z)$ is analytic, we can expand $y(x,z)$ into a Taylor series around the point $(x=0,z=0)$, which is a series of components $C x^k z^l$, where $C$ is a real-valued constant and $k$ and $l$ are integers from $0$ to $\infty$. Let us consider a related series, $y_n(x,z)=\sum_{m=1}^n \alpha_{m} x^{i(m)} z^{j(m)}$, that can represent a subset of $n$ components of a Taylor series, where $\alpha_{i}$ are some real-valued constants and $i(m)$ and $j(m)$ are functions returning unique pairs of non-negative integers ordered by $m$ such that $\forall_{m_1 \neq m_2} \left(i(m_1),j(m_1)\right) \neq \left(i(m_2),j(m_2)\right)$. 

Our proof strategy is to first show that the MIM is an interventional mixture that minimizes $L_\text{SHAP}(\X)$ for the case of $n=1$, i.e., $y_1(x,z)$. Then, we prove by induction that the MIM is an interventional mixture that minimizes $L_\text{SHAP}(\X)$ for any $n$. Since $y_\infty(x,z)$ includes the full Taylor series of $y(x,z)$, so this step ends the proof.


First, we show that for $y_1(x,z)=\alpha_1 x^k z^l$, where $k=i(1)$ and $l=j(1)$, the optimal mixing distribution $\pi^*(\tilde{Z})$ is the marginal distribution, i.e., $\pi^*(\tilde{Z}=z) = P(Z=z)$.
Note that the expectation in the interventional mixture can be written as $\E_{\tilde{Z}} x^k\tilde{Z}^l = \beta x^k$, where $\beta= \alpha_1 \E_{\tilde{Z}}\tilde{Z}^l$.
Then,
\begin{align}
L_\text{SHAP}(\X) = 
 \E_{\X} \left( \E_{\X',\Z',\Z''} \left[
 \frac{X^k Z''^l - X'^k Z''^l + X^k Z'^l - X'^k Z'^l}{2\alpha_1^{-1}}
 \right.\right. \nonumber\\
   \left. \left.-
 \beta X^k + \beta X'^k
 \right] \right)^2
 \label{eq:y_xk_zl}
\end{align}
and the minimization of this objective reduces to finding the optimal scalar $\beta$.
The necessary condition for the minimum of $L_\text{SHAP}(\X)$ is that its first derivative is zero. Since $X'$, $Z'$, and $Z''$ have the same means as $X$ and $Z$, respectively, so the first derivative can be simplified as follows,
\begin{align*}
 &\frac{\partial L_\text{SHAP}(\X)}{\partial \beta} =\\
 &= 
 -\E_{\X, \X',\Z',\Z''} \left[ \left(
 \alpha_1(X^k Z''^l - X'^k Z''^l + X^k Z'^l - X'^k Z'^l)
  \right.\right. \nonumber\\
   &\left. \left. - 2 \beta \left( X^k - X'^k
 \right)\right) (X^k - X'^k) \right]=\\
 &= -\E_{\X} \left[ \left(
 \alpha_1(X^k \overbar{Z^l} - \overbar{X^k} \overbar{Z^l} + X^k \overbar{Z^l} - \overbar{X^k Z^l})
 \right.\right. \nonumber\\
   &\left. \left. - 2 \beta \left( X^k -\overbar{X^k} 
 \right)\right) \left(X^k - \overbar{X^k} \right) \right],
\end{align*}
where $\overbar{X}$ is the mean of $X$. After performing a few basic algebraic operations (note that $\E_{\X}[\overbar{X^k} \overbar{Z^l}(X^k - \overbar{X^k} )]=\E_{\X}[\overbar{X^kZ^l}(X^k - \overbar{X^k} )]=0$), the necessary condition reads
\begin{align*}
\frac{\partial L_\text{SHAP}(\X)}{\partial \beta} &= - \E_{\X} \left[ \left(
 2 \alpha_1 X^k \overbar{Z^l} - 2 \beta \left( X^k - \overbar{X^k}
 \right)\right) \left(X^k - \overbar{X^k} \right) \right]=\\ 
 &= - 2 \left( \alpha_1 \V[X^k] \overbar{Z^l} - \beta \V[X^k] \right) = 0,
\end{align*}
where $\V[X]$ is the variance of $X$. This condition is fulfilled if $\beta =\alpha_1 \overbar{Z^l}$, which requires that $\E \tilde{Z}^l = \overbar{Z^l}$. The condition is met if the distributions of $Z$ and $\tilde{Z}$ are the same.
The extremum is actually a minimum, since the second derivative, $\V[ X^k$], is positive for any random variable $X$ with non-zero variance and positive $k$.
For the special case of $k=0$ or $l=0$ the MIM minimizes the objective globally, by achieving $L_\text{SHAP}(\X) = 0$.

Next, let us assume that for a certain $y_{n-1}(x,z) = \sum_{m=1}^{n-1} \allowbreak \alpha_{m} \allowbreak x^{i(m)}\allowbreak z^{j(m)}$, where $n>1$, the MIM minimizes $L_\text{SHAP}(X)$. Given this assumption, we aim to prove that the MIM is the optimal mixture also for $y_{n}(x,z) = y_{n-1}(x,z) + \alpha_n x^{i(n)} z^{j(n)}$. 
In this case, the objective can be written as
\begin{equation*}
    L_\text{SHAP}(X) = \E_X\left[( A_{n-1}(X) + a_n(X) )^2\right],
\end{equation*} where 
\begin{align*}
a_n(x) &= \alpha_n \left( x^{i(n)} \overbar{Z^{j(n)}} - \overbar{X^{i(n)}} \overbar{Z^{j(n)}} + x^{i(n)} \overbar{Z^{j(n)}} \right. \nonumber\\
   &\left. - \overbar{X^{i(n)} Z^{j(n)}} \right)/2
 - \beta_n \left( x^{i(n)} - \overbar{X^{j(n)}}
 \right)
\end{align*}
and $A_{n-1}(x) = \sum_{m=1}^{n-1} a_m(x)$. To simplify the notation, we write $\bm{\beta} = (\beta_1,...,\beta_n)$. Next, we expand the objective,
\begin{align*}
    L_\text{SHAP}(X) = \E_X\left[( A^2_{n-1}(X) + a^2_n(X) + 2 A_{n-1}(X) a_n(X) )\right].
\end{align*}
From the assumption we know that the MIM minimizes $\E_X\left[ A^2_{n-1}(X)\right]$. Also, $\E_X\left[ a^2_{n}(X)\right]$ is equivalent to the objective defined in Equation~\ref{eq:y_xk_zl}, which is minimized by the MIM as well, as we showed above. Hence, we shall focus now on the remaining term, i.e., $l(\bm{\beta}) = \E_X\left[( A_{n-1}(X) a_n(X) )\right]$. The necessary conditions for an extremum are
\begin{align*}
    \frac{\partial l}{\partial \beta_n} =  -2\E_X\left[ A_{n-1}(X) \left( X^{i(n)} - \overbar{X^{j(n)}}
 \right)\right] = 0,\\
 \forall_{m<n} \frac{\partial l}{\partial \beta_m} = -2  \E_X\left[ a_n(X) \left( X^{i(m)} - \overbar{X^{j(m)}}
 \right)\right] = 0.
\end{align*}
These conditions are satisfied when $\forall_{m=1}^n \beta_m =\alpha_m \overbar{Z^{i(m)}}$, which are met if $\forall_{m=1}^n \E \tilde{Z}^{i(m)} = \overbar{Z^{i(m)}}$. Hence, the necessary conditions for extremum are met if the distribution of $\tilde{Z}$ is the same as the marginal distribution of $Z$. The corresponding Hessian matrix is positive semi-definite, so the extremum is a minimum.

We have shown that the MIM is an interventional mixture that minimizes $L_\text{SHAP}(\X)$ for $y_1(x,z)$. In addition, it minimizes it also for $y_n(x,z)$, assuming that it does so for $y_{n-1}(x,z)$. Thus, by induction, the MIM is an interventional mixture that minimizes $L_\text{SHAP}(\X)$ for any $n$ and any function $y(x,z)$ that has a Taylor expansion. 



\section*{Appendix B: Additional results}
\label{sec:appendixB}


\textbf{Synthetic Scenarios.} Here, we present the SHAP influence of $X_1$, $X_2$ and $Z$, equal opportunity difference ($|\p(\hat{y}=1|y=1,z=0)-\p(\hat{y}=1|y=1,z=1)|$), disparate impact ($|\p(\hat{y}=1|z=0)/\p(\hat{y}=1|z=1)|$), and a relaxed version of equalized odds ($(|FPR_{z=0}-FPR_{z=1}|+|TPR_{z=0}-TPR_{z=1}|)/2$) of the scenarios in the main text: A) $Y=\sigma(X_1 + X_2 + Z + 1)$ and B) $Y=\sigma(0*X_1 + X_2+ 0*Z + 1)$. We also provide results for the OPT methods, the reductions method from \citet{Agarwal2018reductions} subject to equal opportunity and error ratios fairness constraints(``TPR'', and ``ER''). As expected, the evaluated models that use $Z$ by design are influenced by it ("Cal Eq Odds", Reweighing in Figure \ref{fig:aif111sm} \& \ref{fig:aif010sm}).

\textbf{Real-world datasets.} In Figures \ref{fig:aifcompassm}, \ref{fig:aifgermansm}, \& \ref{fig:aifcensussm} we provide results for the disparate impact, equalized odds, and equal opportunity fairness metrics and for the OPT methods on the COMPAS, German Credit, and Adult Census Income datasets. 

\begin{figure*}[ht]
\centering
\begin{subfigure}{.24\linewidth}
  \centering
  \includegraphics[width=\linewidth]{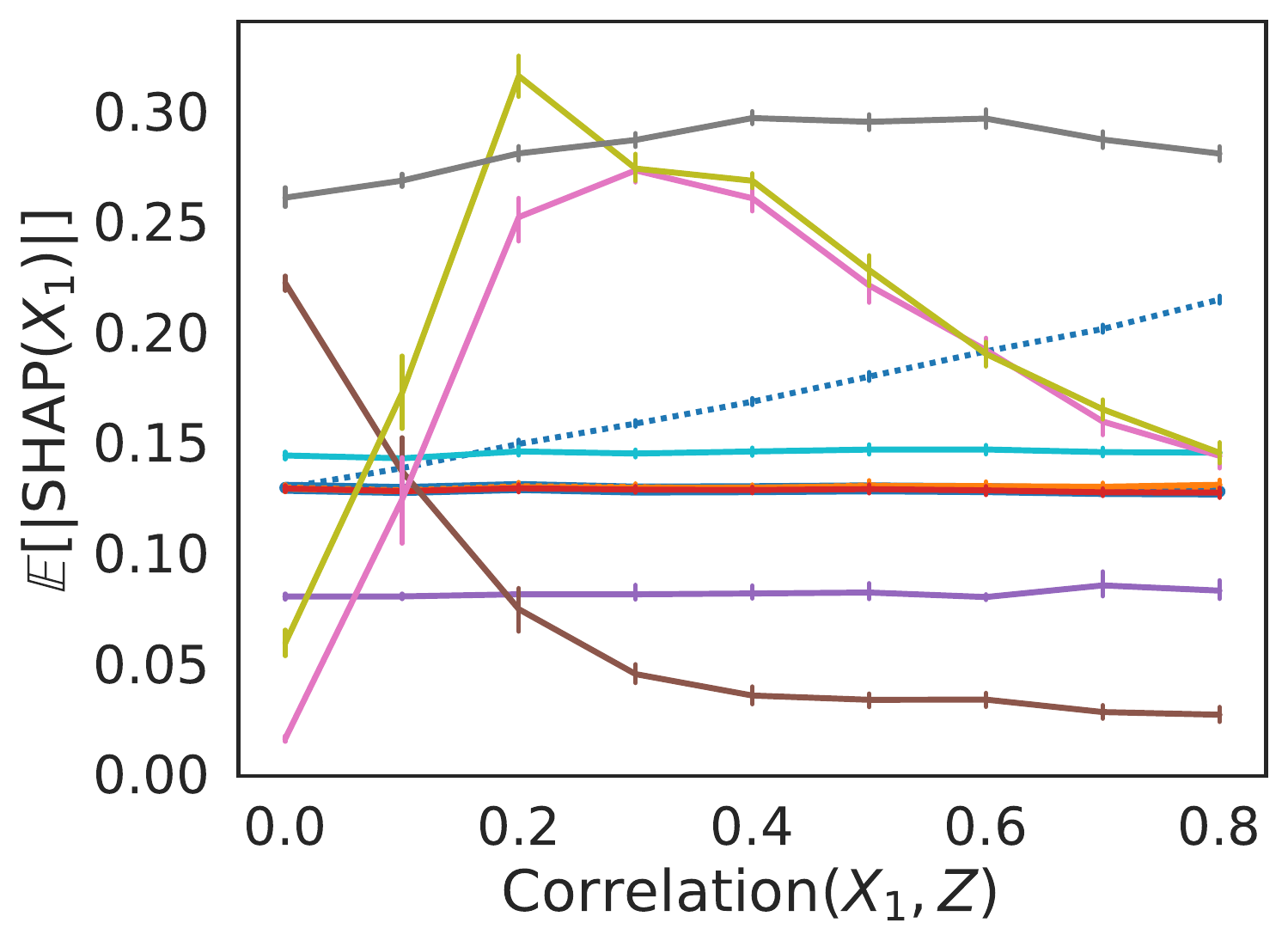}
\end{subfigure}
\begin{subfigure}{.24\linewidth}
  \centering
  \includegraphics[width=\linewidth]{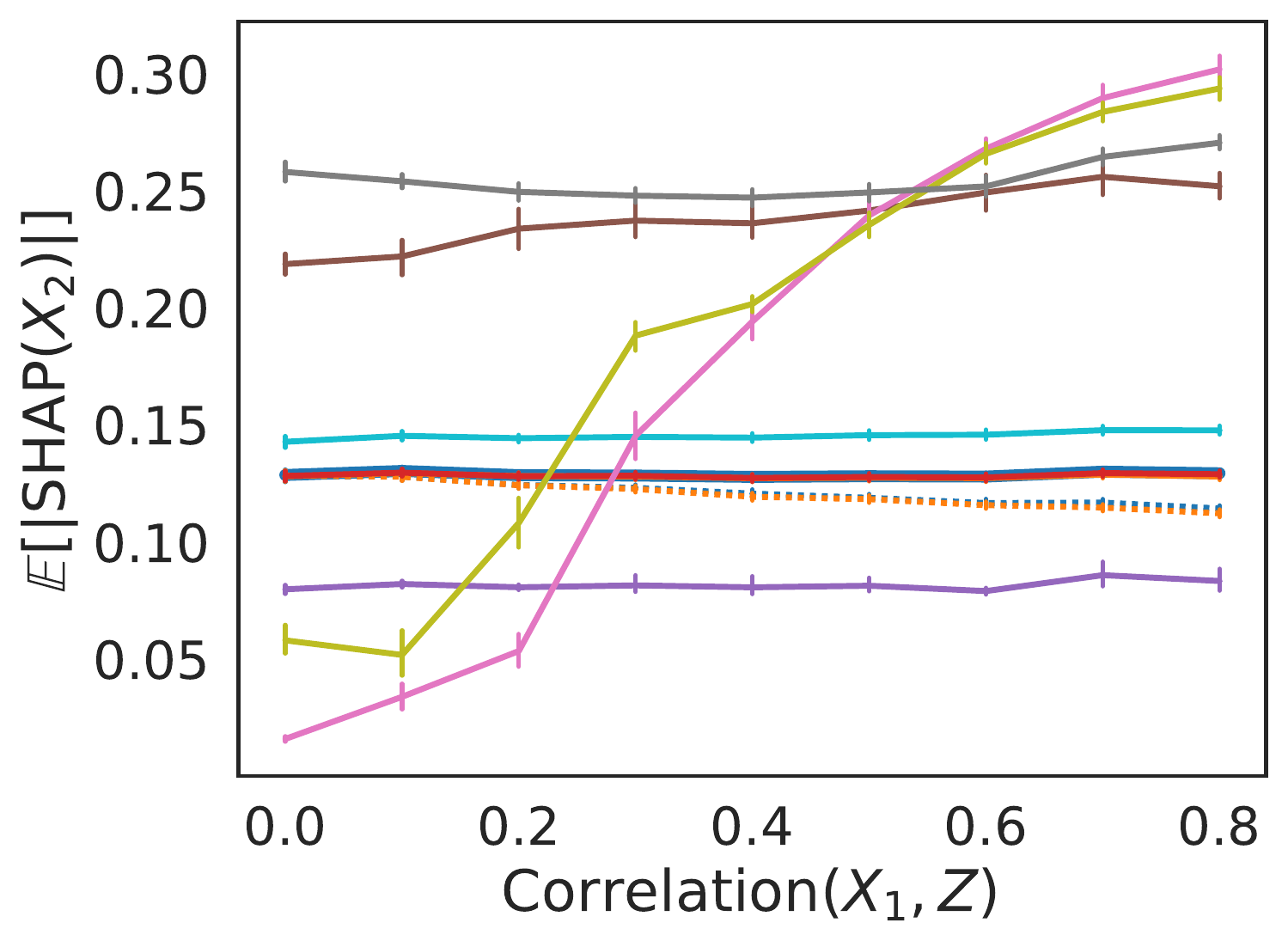}
\end{subfigure}
\begin{subfigure}{.24\linewidth}
  \centering
  \includegraphics[width=\linewidth]{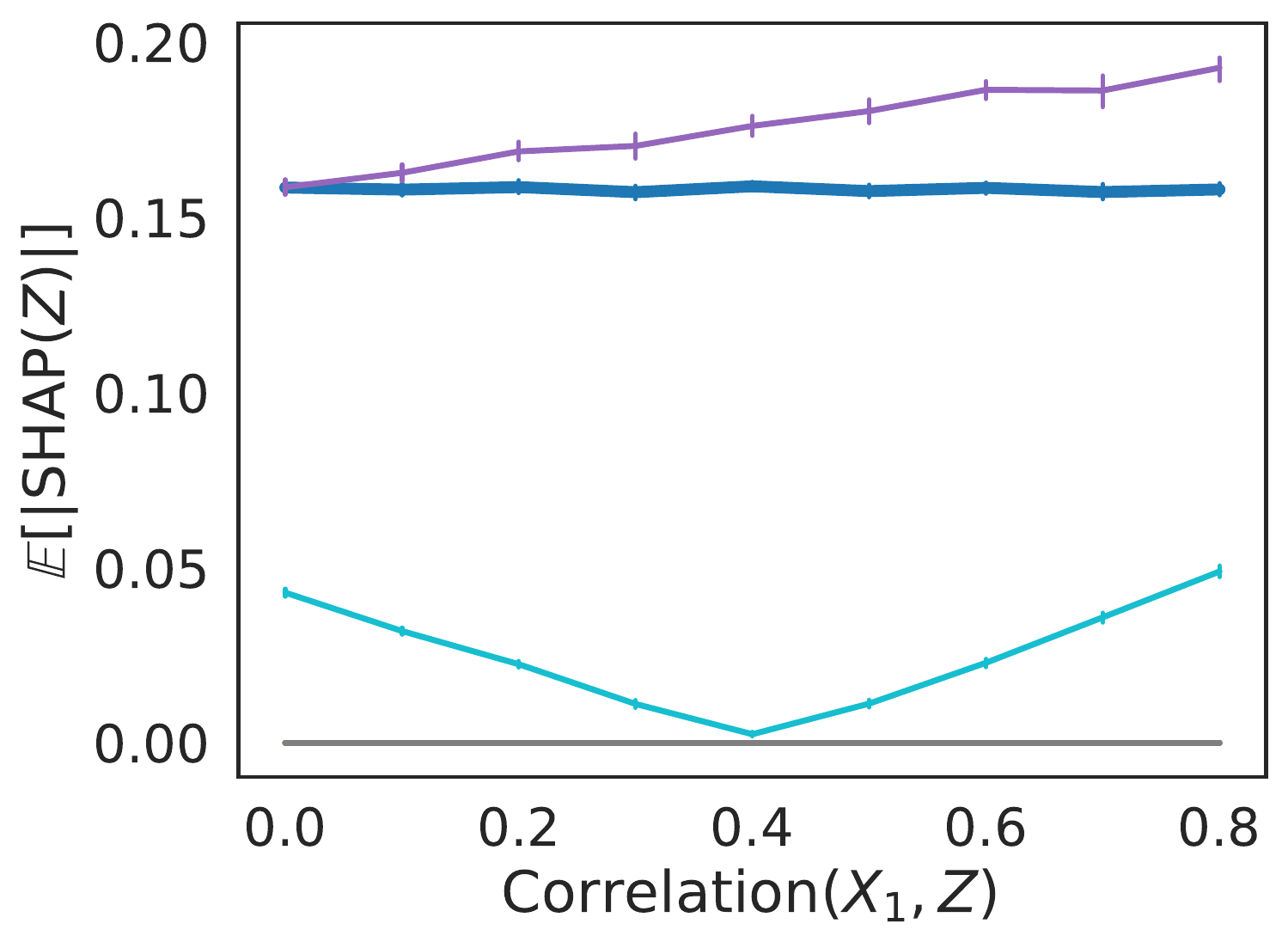}
\end{subfigure}
\begin{subfigure}{.24\linewidth}
  \centering
  \includegraphics[width=\linewidth]{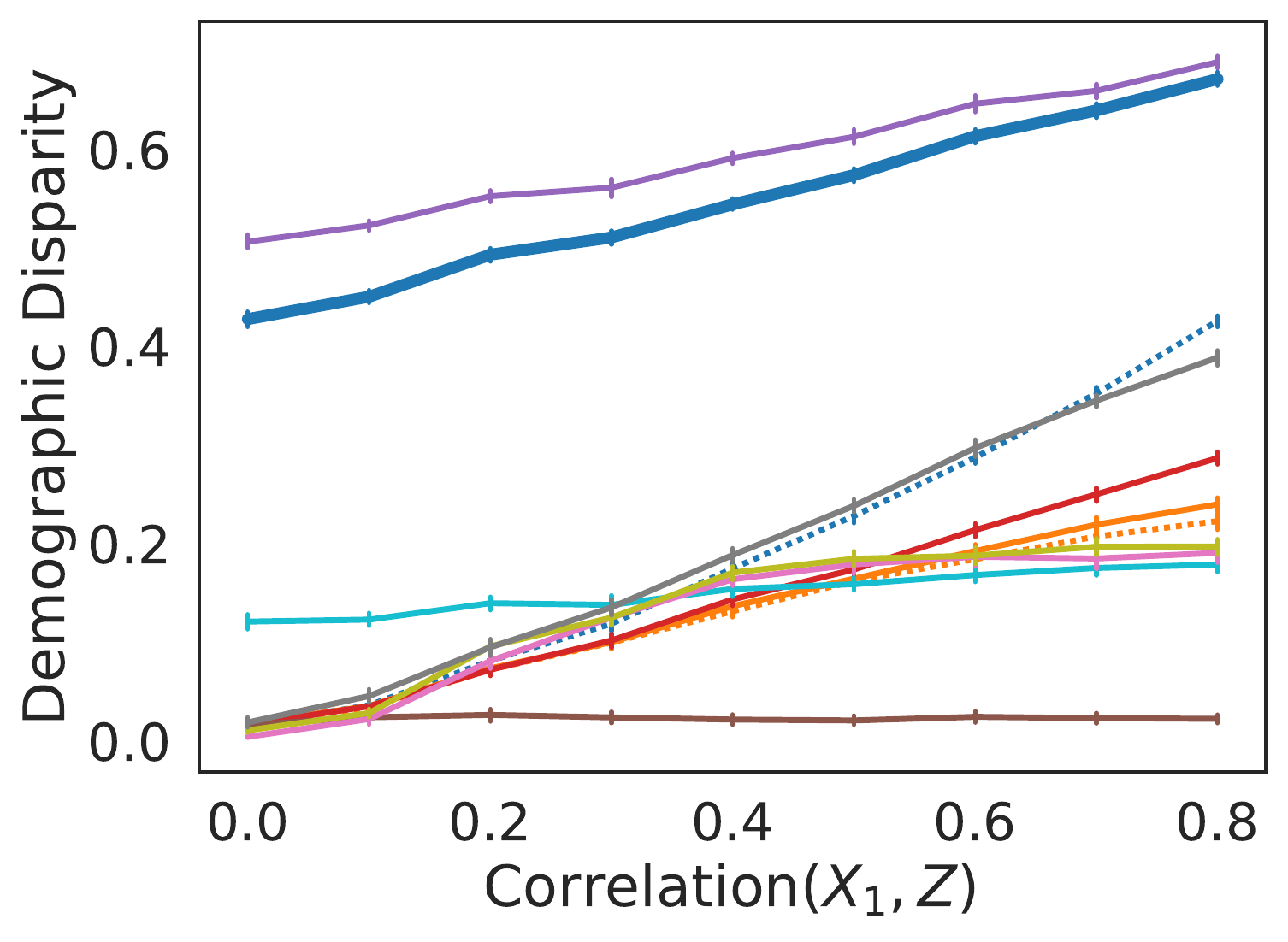}
\end{subfigure}
\begin{subfigure}{.24\linewidth}
  \centering
  \includegraphics[width=\linewidth]{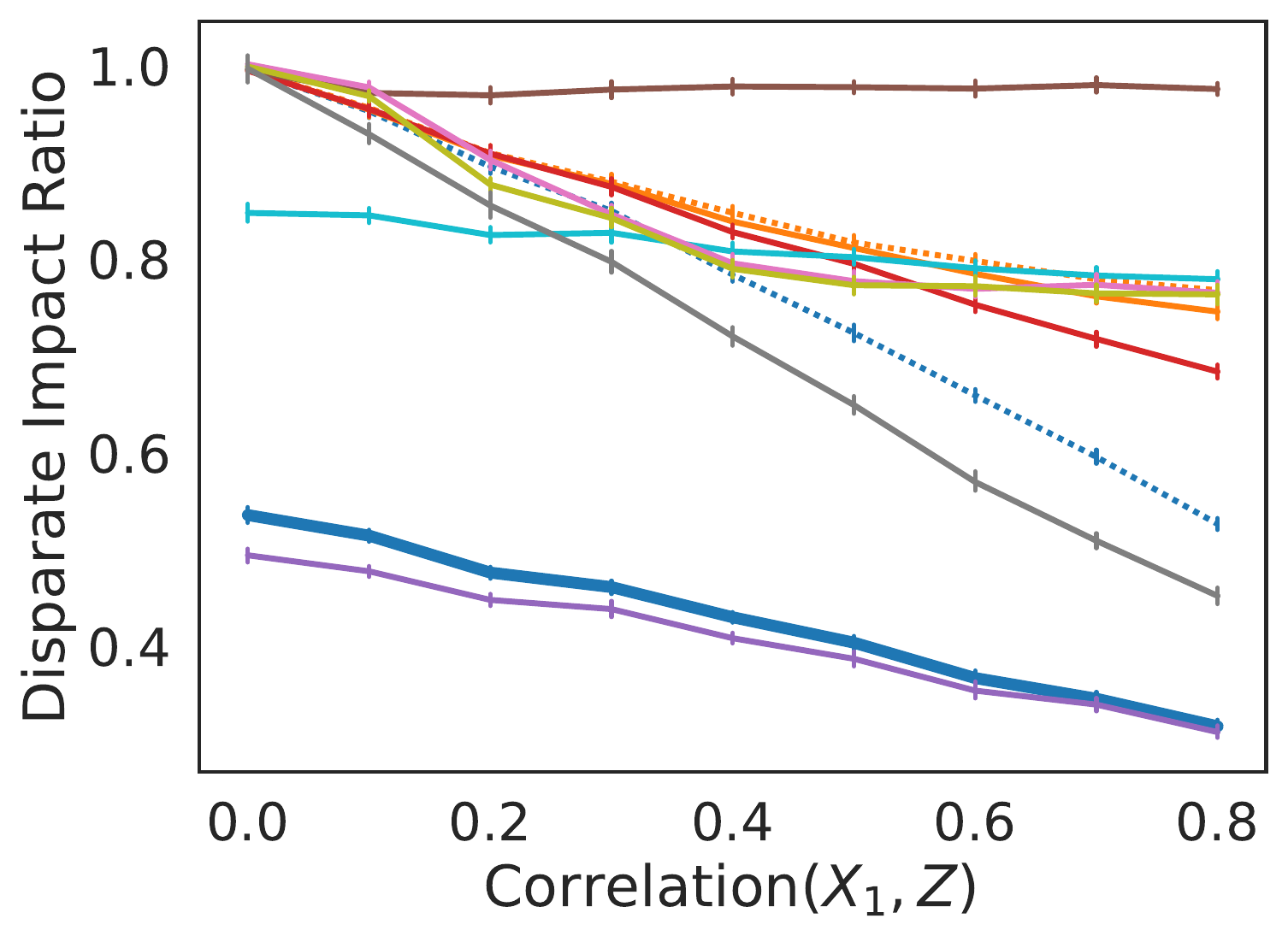}
\end{subfigure}
\begin{subfigure}{.24\linewidth}
  \centering
  \includegraphics[width=\linewidth]{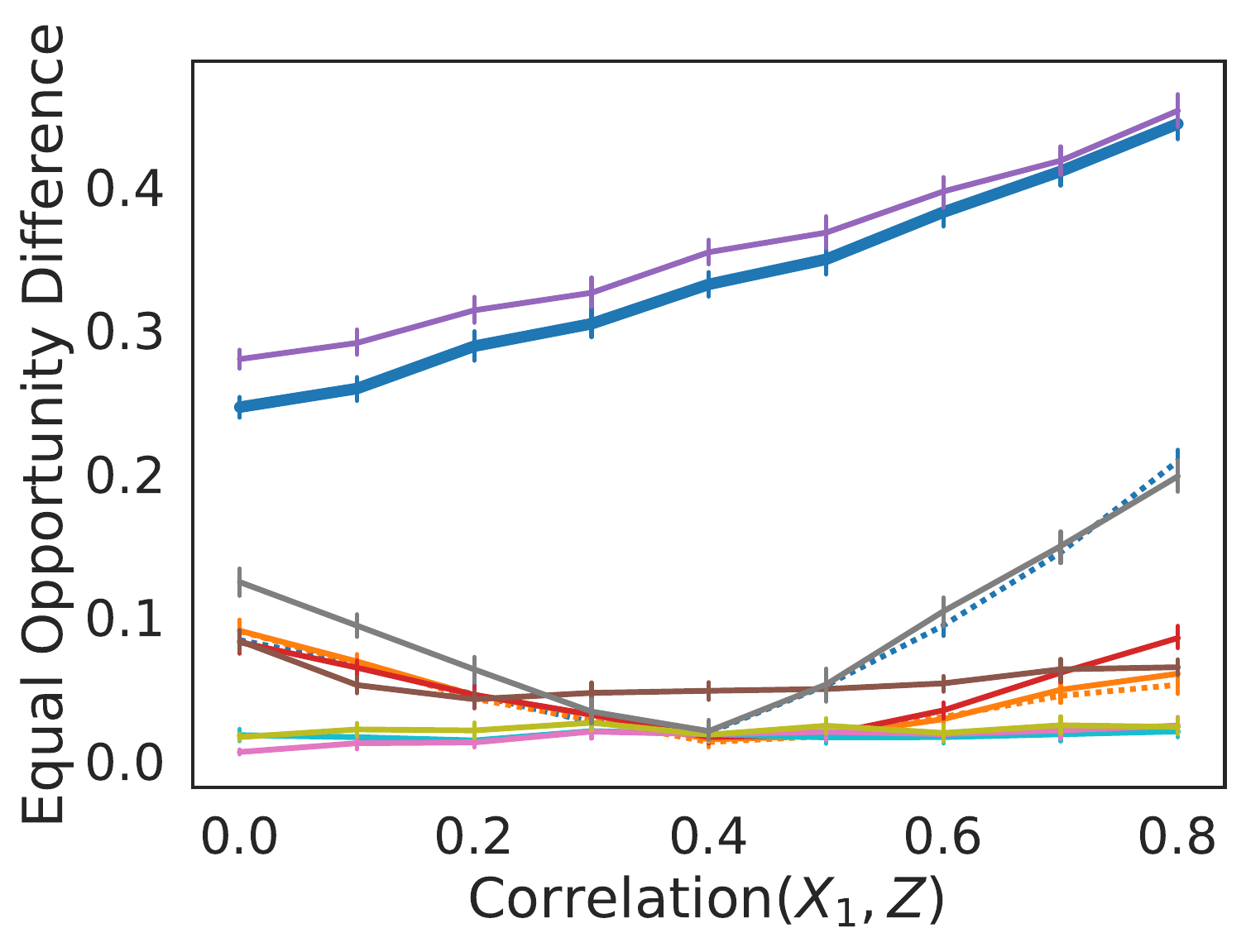}
\end{subfigure}
\begin{subfigure}{.24\linewidth}
  \centering
  \includegraphics[width=\linewidth]{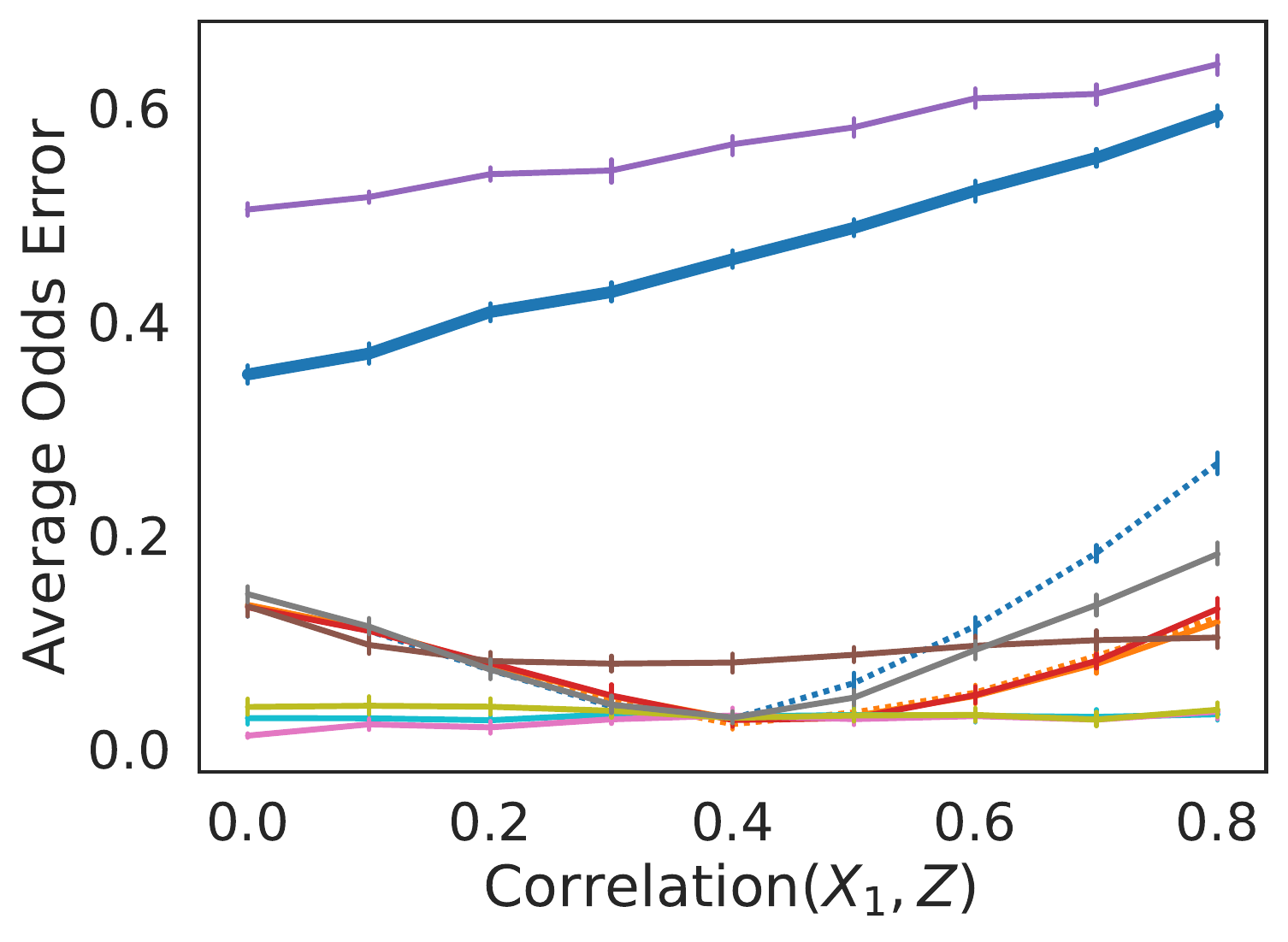}
\end{subfigure}
\begin{subfigure}{\linewidth}
  \centering
  \includegraphics[width=\linewidth]{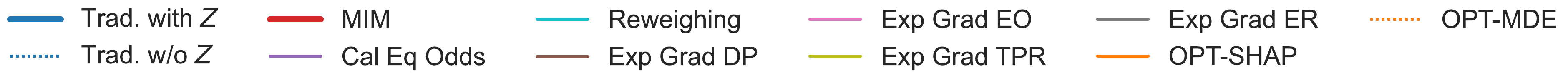}
\end{subfigure}

\caption{Scenario A: Averaged absolute SHAP of $X_1$, $X_2$, and $Z$ and four fairness measures as we increase the correlation $r(X_1,Z)$. Error bars show 95\% confidence intervals.}
\label{fig:aif111sm}
\end{figure*}

\begin{figure*}[h]
\centering
\begin{subfigure}{.24\linewidth}
  \centering
  \includegraphics[width=\linewidth]{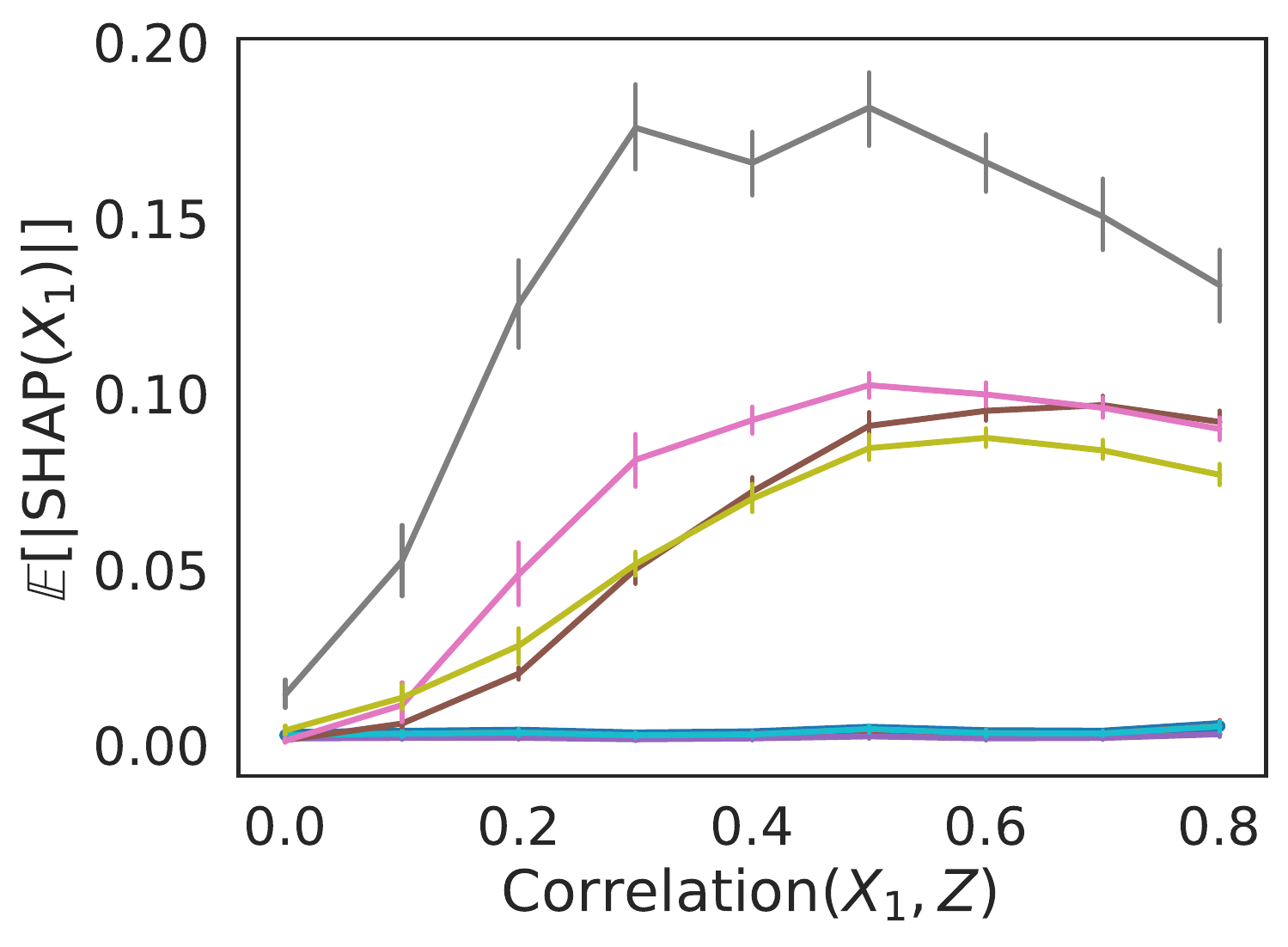}
\end{subfigure}
\begin{subfigure}{.24\linewidth}
  \centering
  \includegraphics[width=\linewidth]{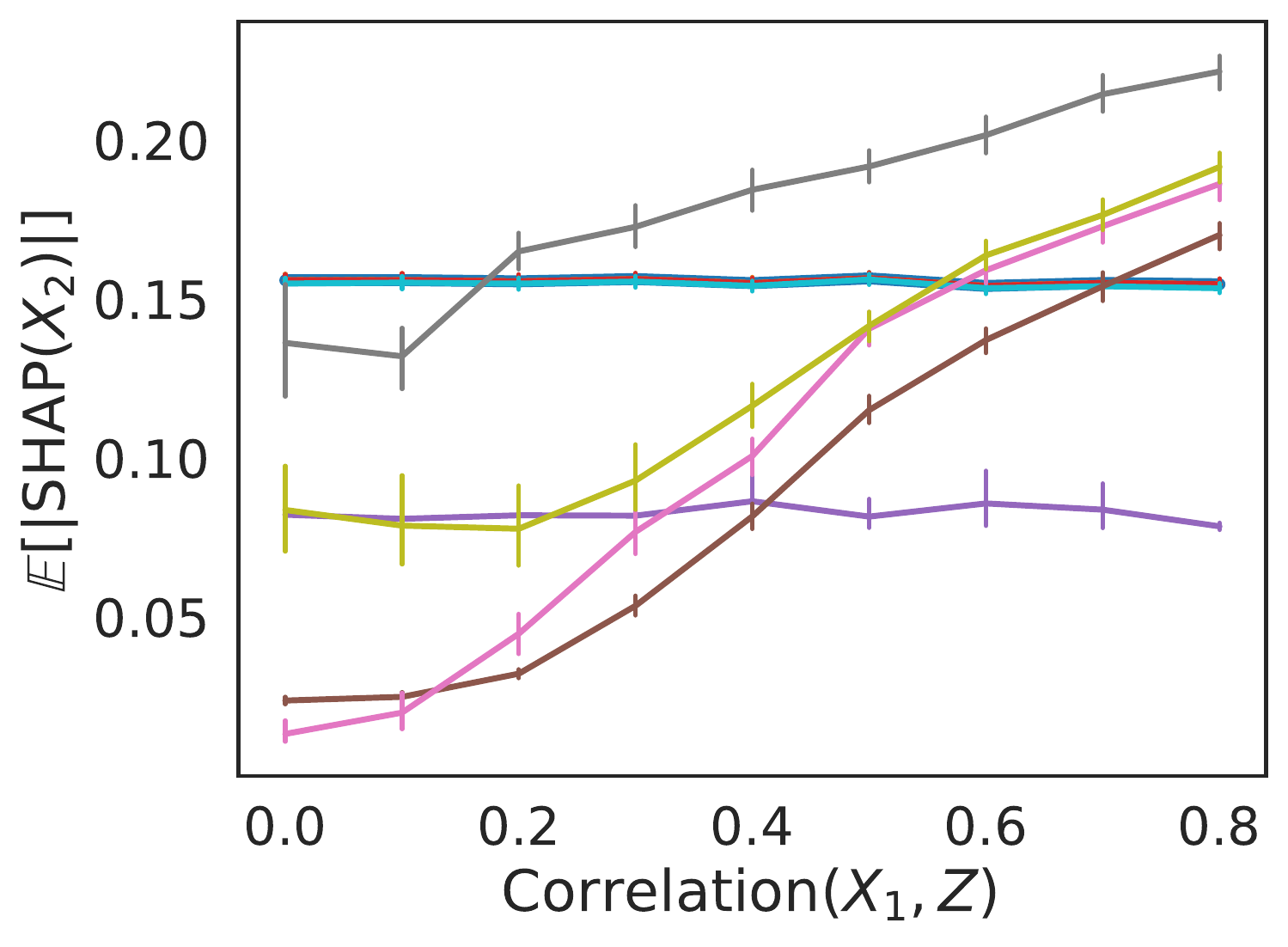}
\end{subfigure}
\begin{subfigure}{.24\linewidth}
  \centering
  \includegraphics[width=\linewidth]{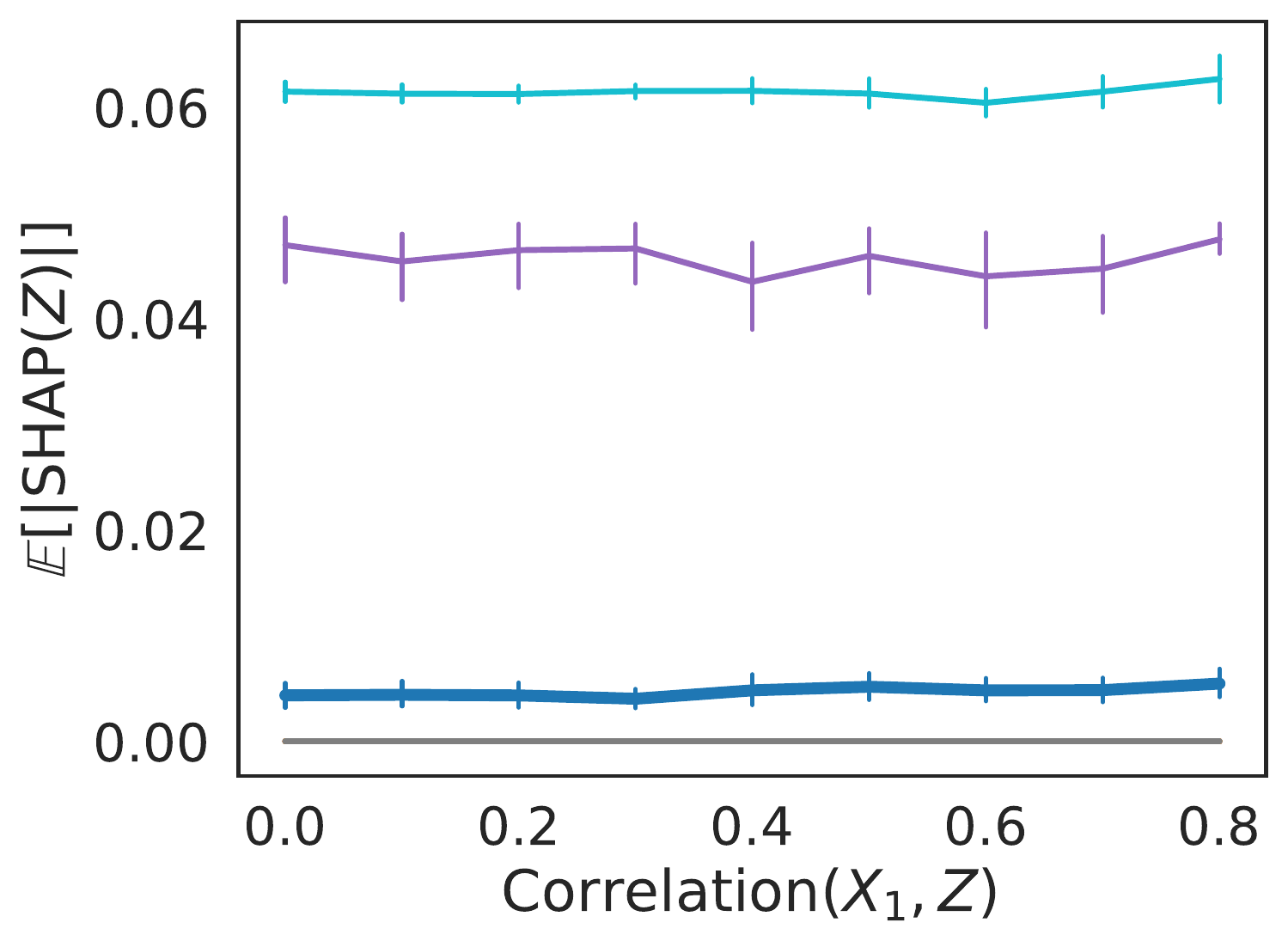}
\end{subfigure}
\begin{subfigure}{.24\linewidth}
  \centering
  \includegraphics[width=\linewidth]{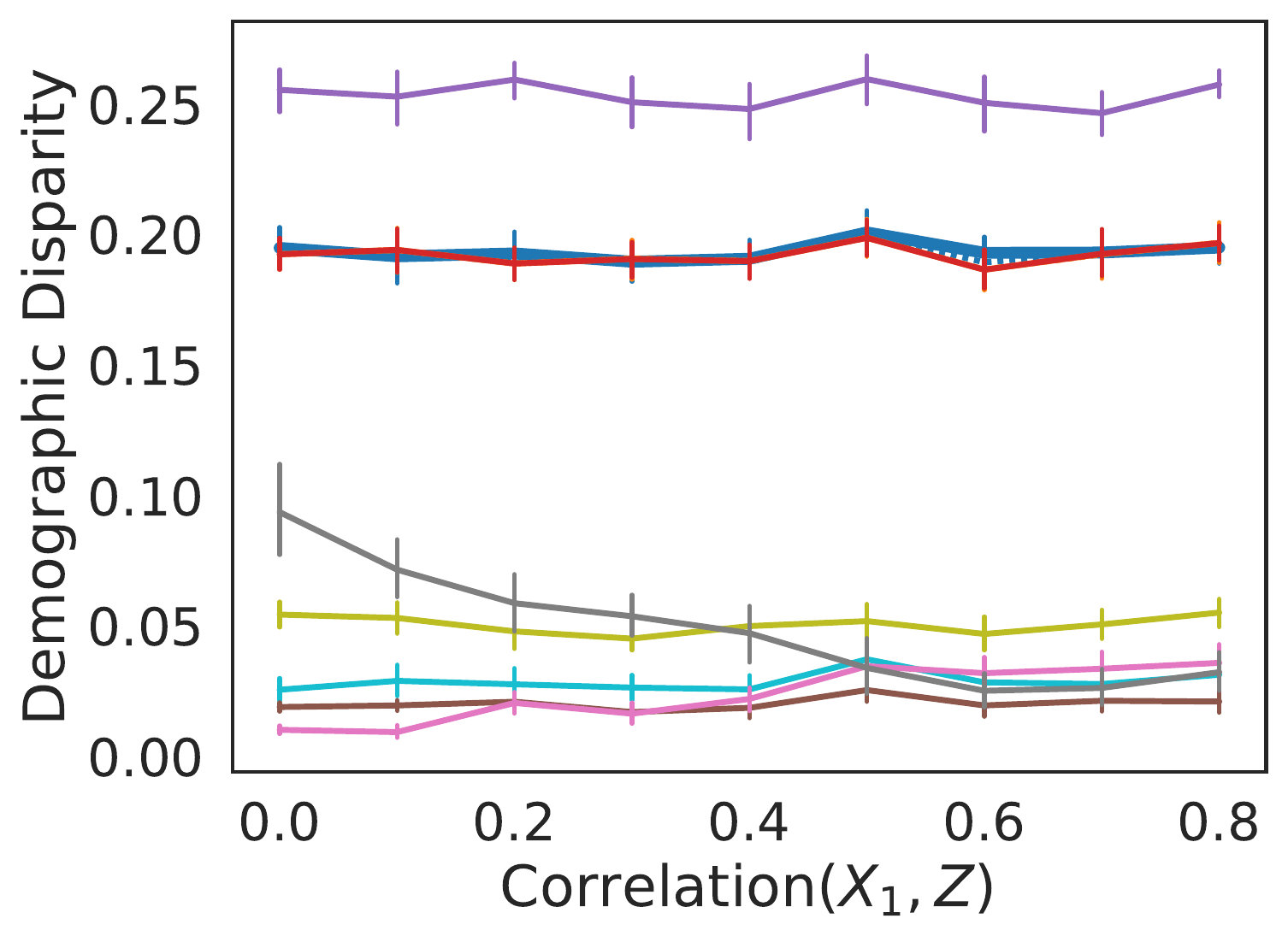}
\end{subfigure}
\begin{subfigure}{.24\linewidth}
  \centering
  \includegraphics[width=\linewidth]{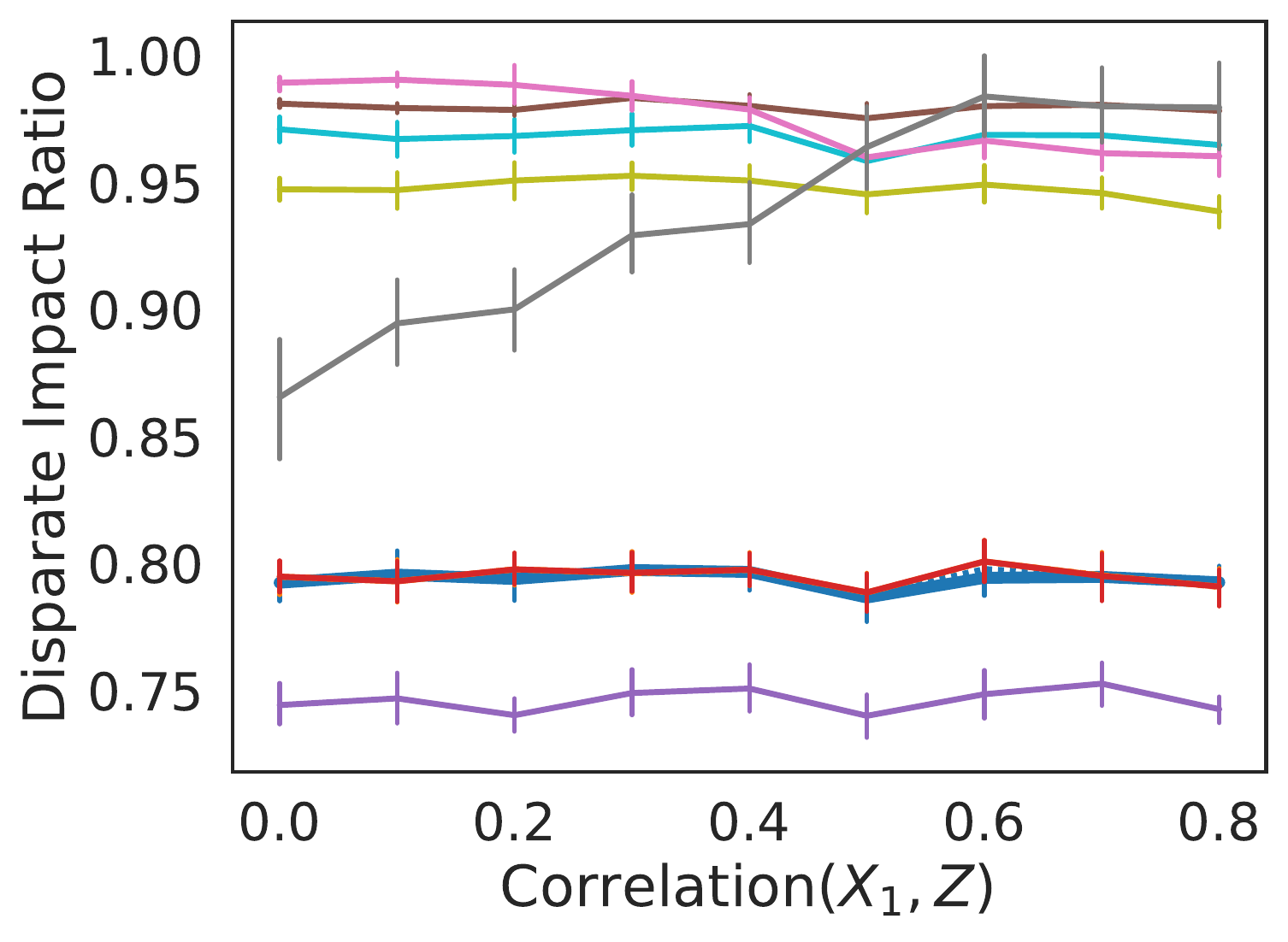}
\end{subfigure}
\begin{subfigure}{.24\linewidth}
  \centering
  \includegraphics[width=\linewidth]{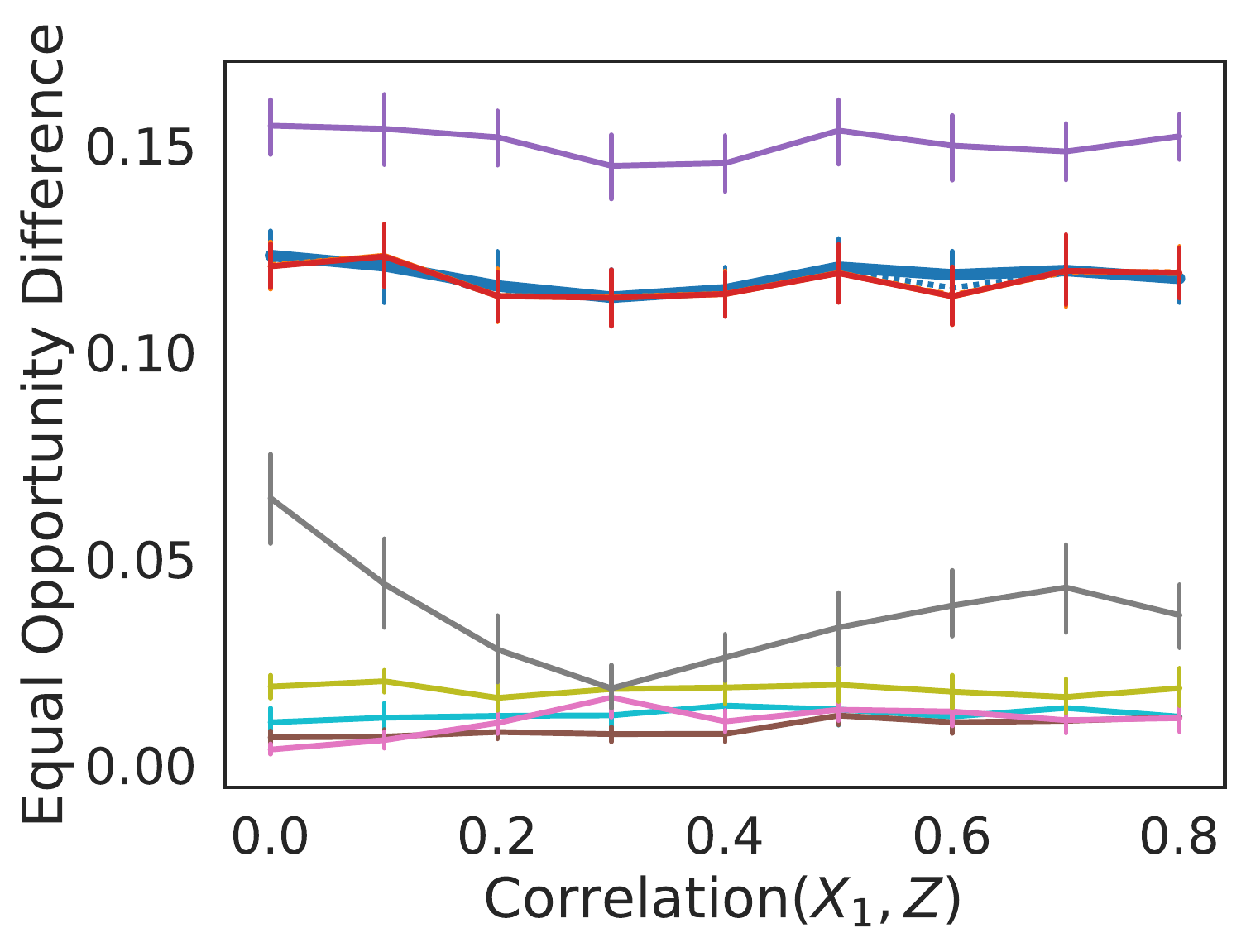}
\end{subfigure}
\begin{subfigure}{.24\linewidth}
  \centering
  \includegraphics[width=\linewidth]{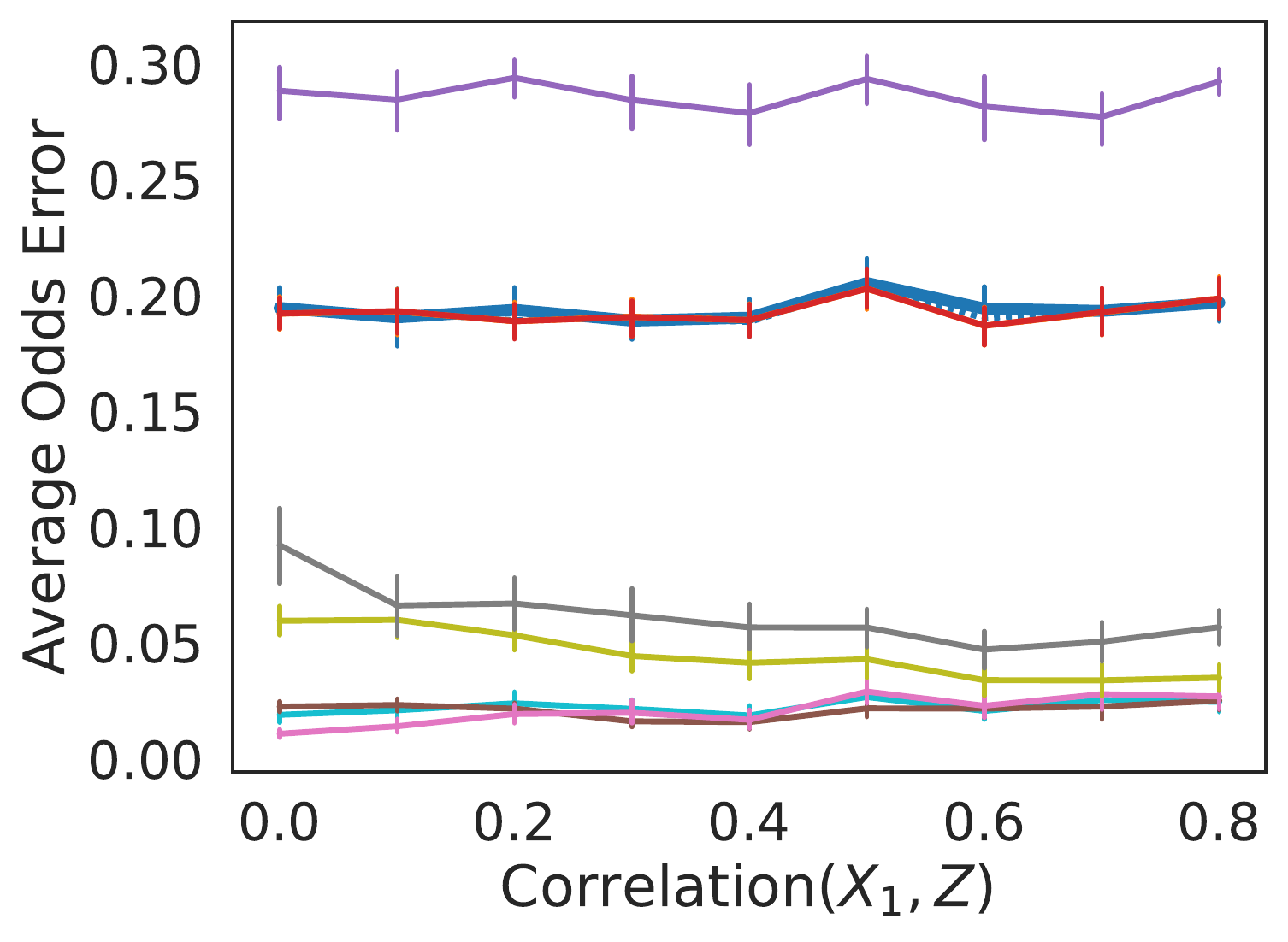}
\end{subfigure}
\begin{subfigure}{\linewidth}
  \centering
  \includegraphics[width=\linewidth]{figs/facct_results/scenario2_final/set3/legend3.pdf}
\end{subfigure}

\caption{Scenario B: Averaged absolute SHAP of $X_1$, $X_2$, and $Z$ and four fairness measures as we increase the correlation $r(X_1,Z)$. Error bars show 95\% confidence intervals.}
\label{fig:aif010sm}
\end{figure*}

\begin{figure*}[t]
\centering
\begin{subfigure}{.3\linewidth}
  \centering
  \includegraphics[width=\linewidth]{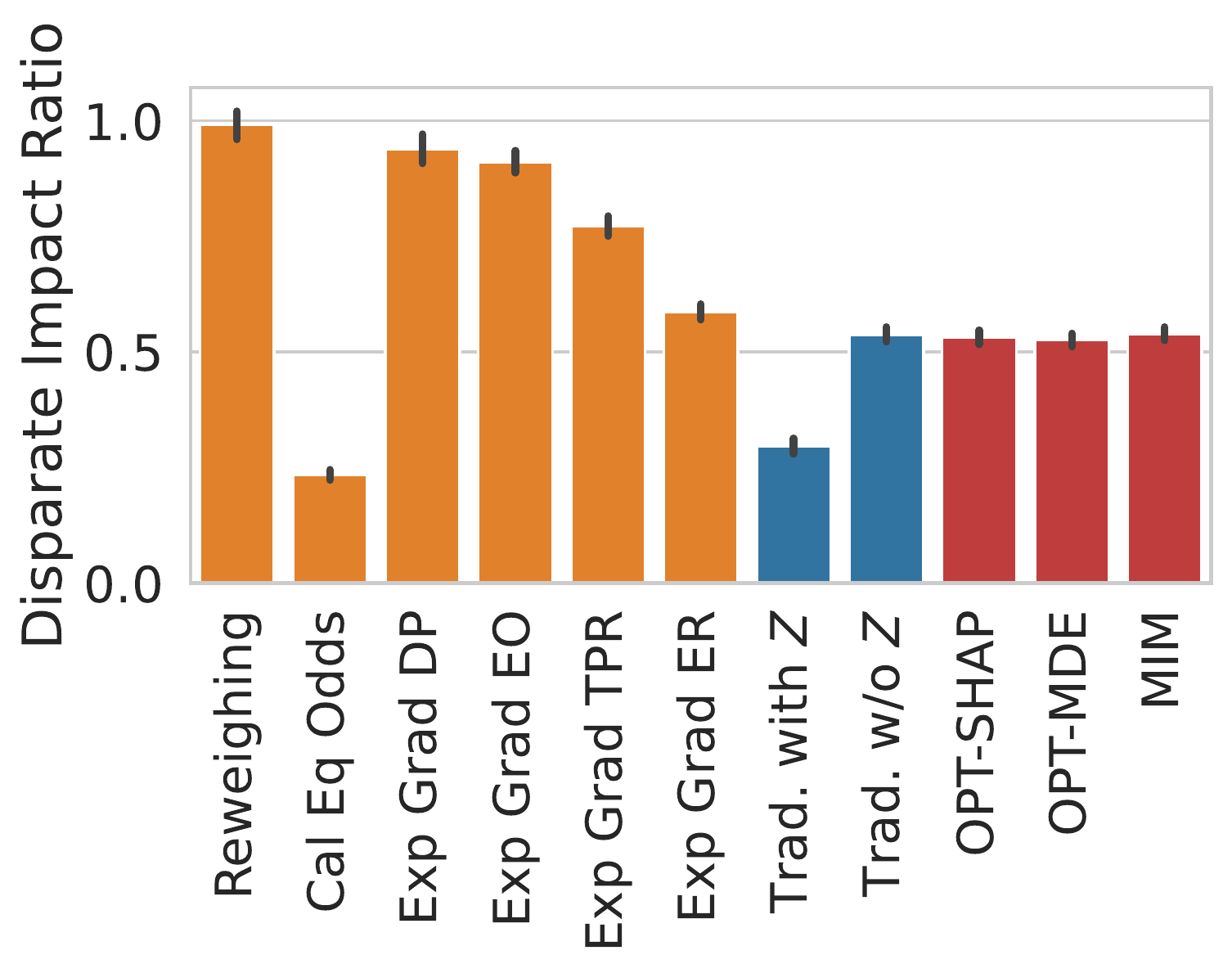}
\end{subfigure}
\begin{subfigure}{.3\linewidth}
  \centering
  \includegraphics[width=\linewidth]{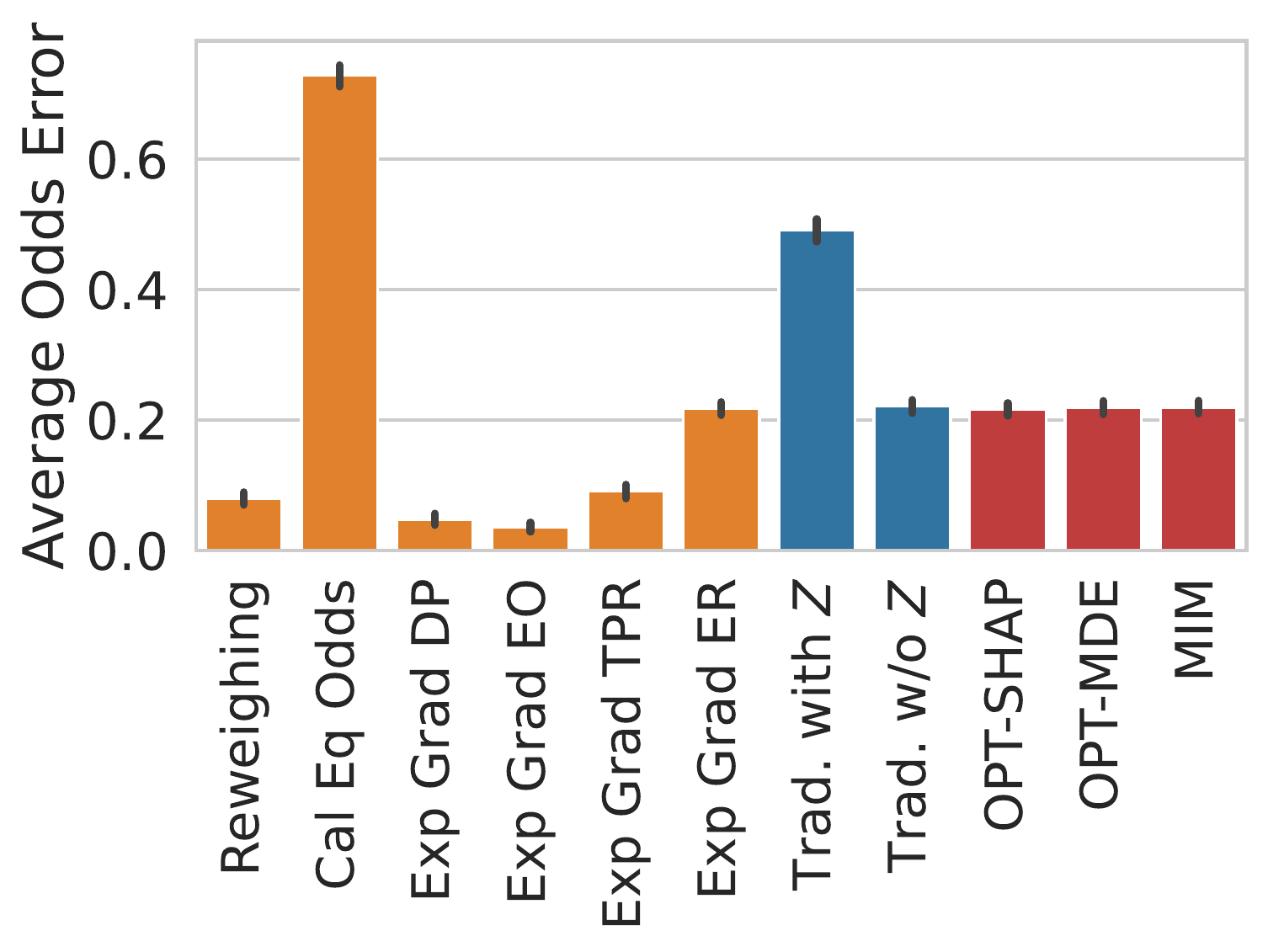}
\end{subfigure}
\begin{subfigure}{.3\linewidth}
  \centering
  \includegraphics[width=\linewidth]{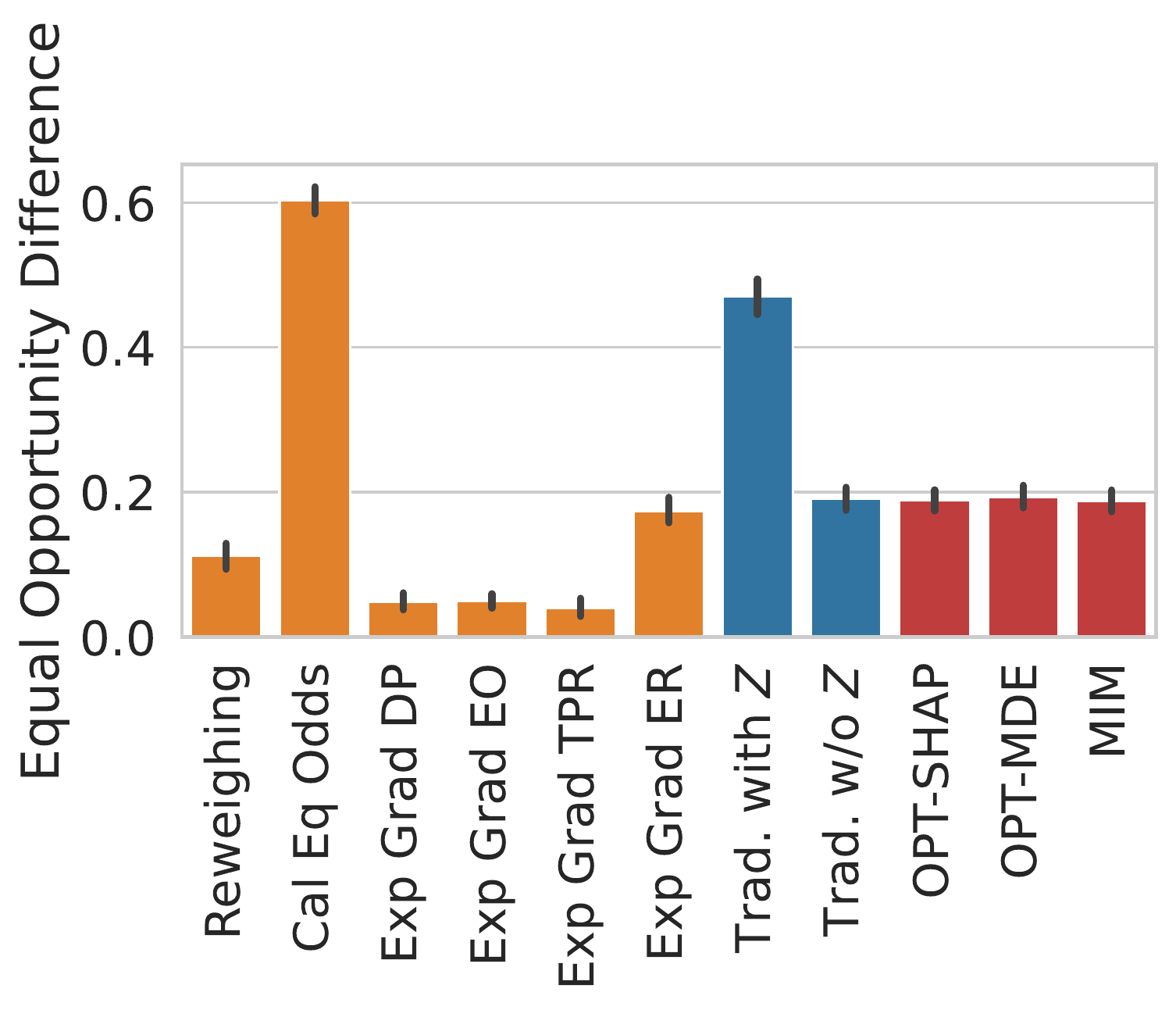}
\end{subfigure}
\caption{Disparate impact, equalized odds, and equal opportunity measures for the evaluated models on the COMPAS dataset. Error bars show 95\% confidence intervals.}
\label{fig:aifcompassm}
\end{figure*}

\begin{figure*}[ht]
\centering
\begin{subfigure}{.3\linewidth}
  \centering
  \includegraphics[width=\linewidth]{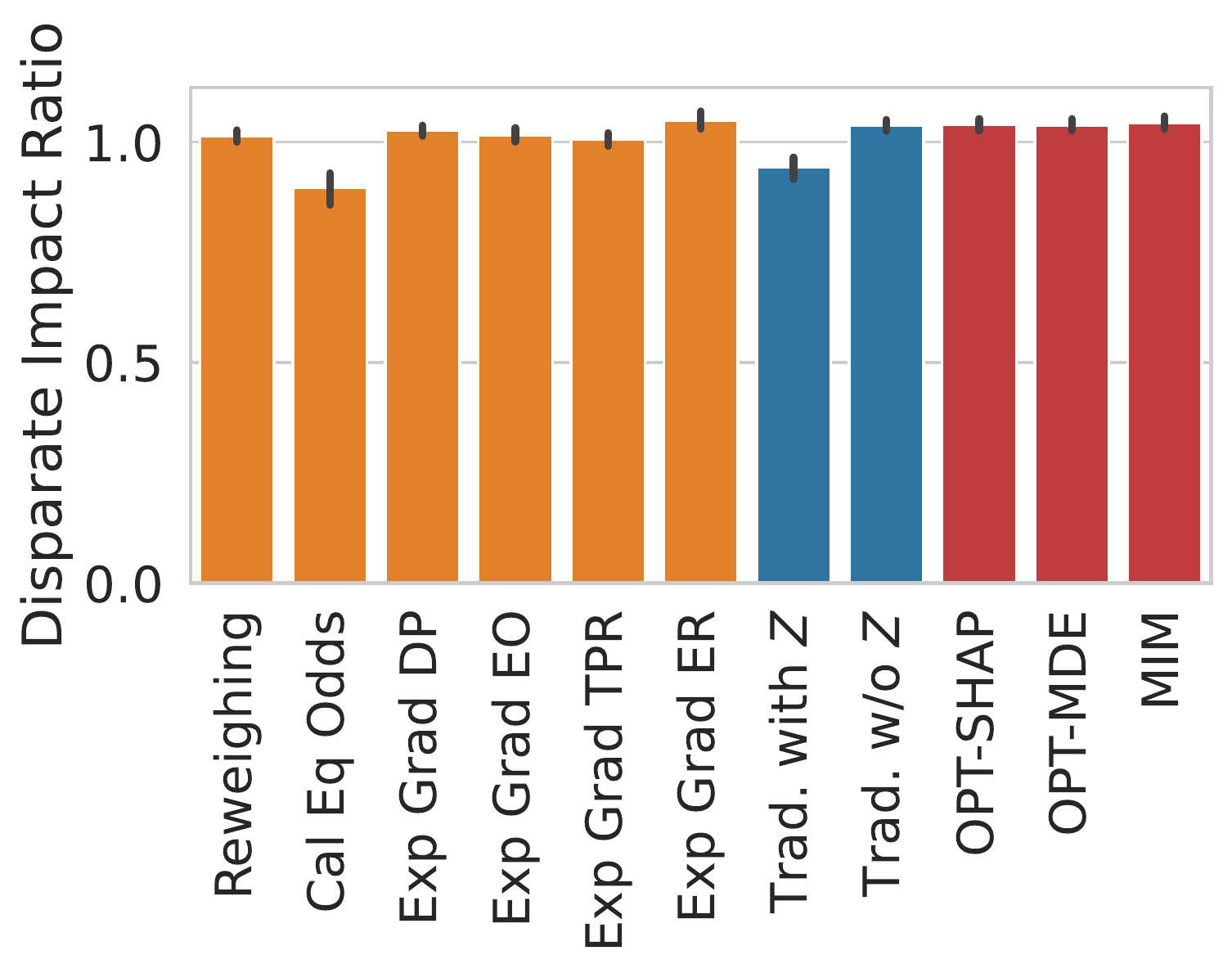}
\end{subfigure}
\begin{subfigure}{.3\linewidth}
  \centering
  \includegraphics[width=\linewidth]{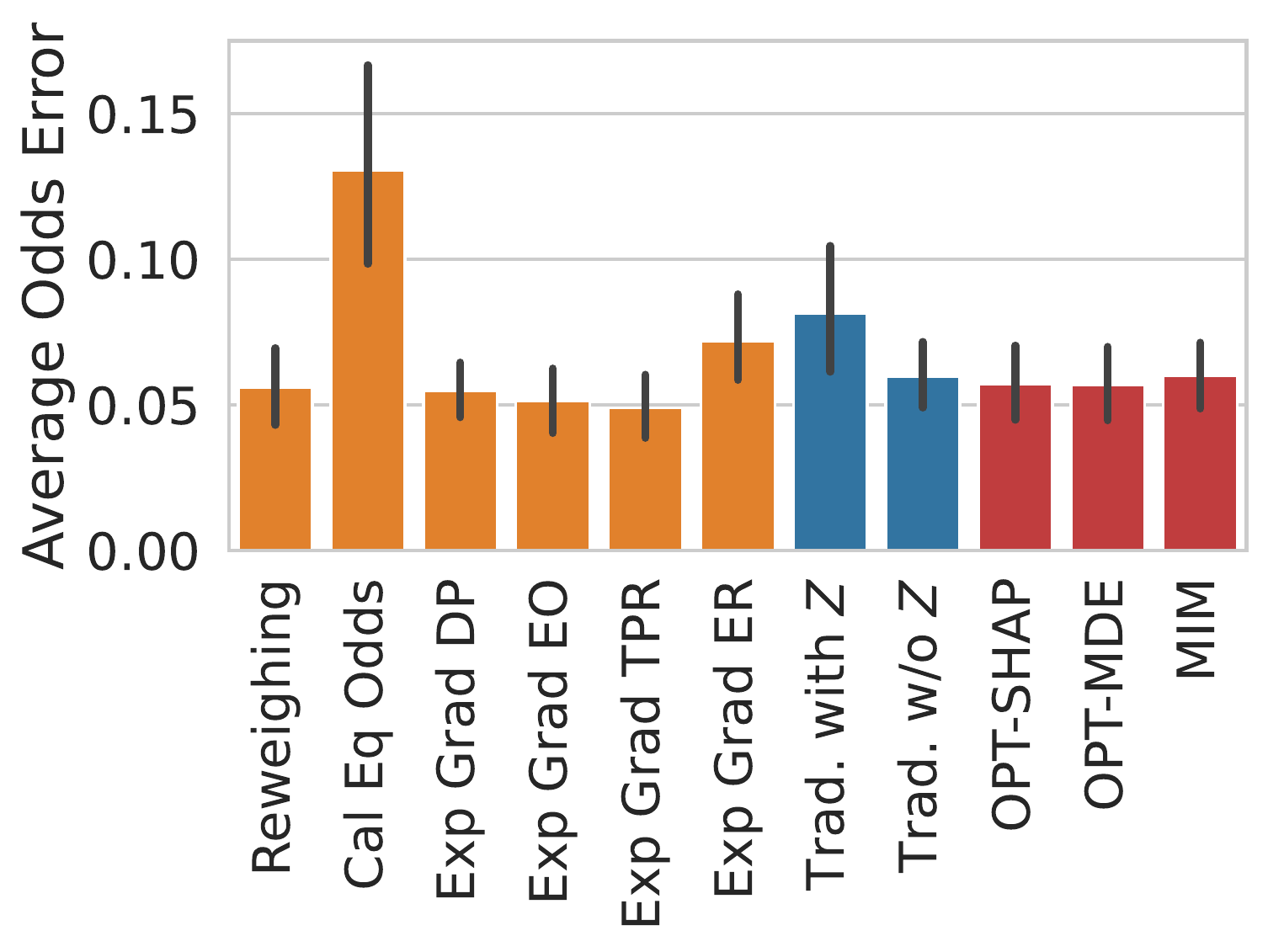}
\end{subfigure}
\begin{subfigure}{.3\linewidth}
  \centering
  \includegraphics[width=\linewidth]{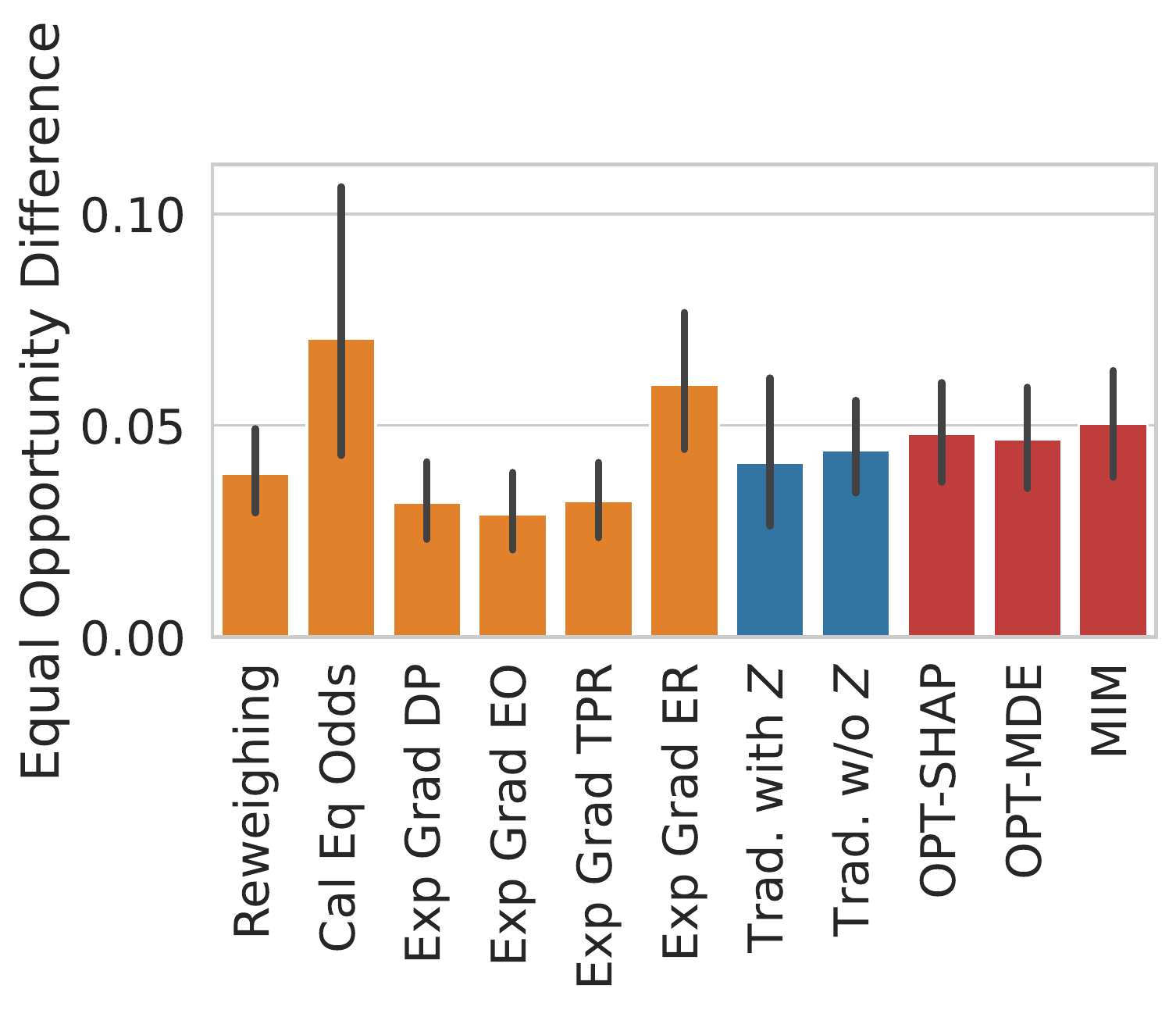}
\end{subfigure}
\caption{Disparate impact, equalized odds, and equal opportunity measures for the evaluated models on the German Credits dataset. Error bars show 95\% confidence intervals.}
\label{fig:aifgermansm}
\end{figure*}

\begin{figure*}[ht!]
\centering
\begin{subfigure}{.3\linewidth}
  \centering
  \includegraphics[width=\linewidth]{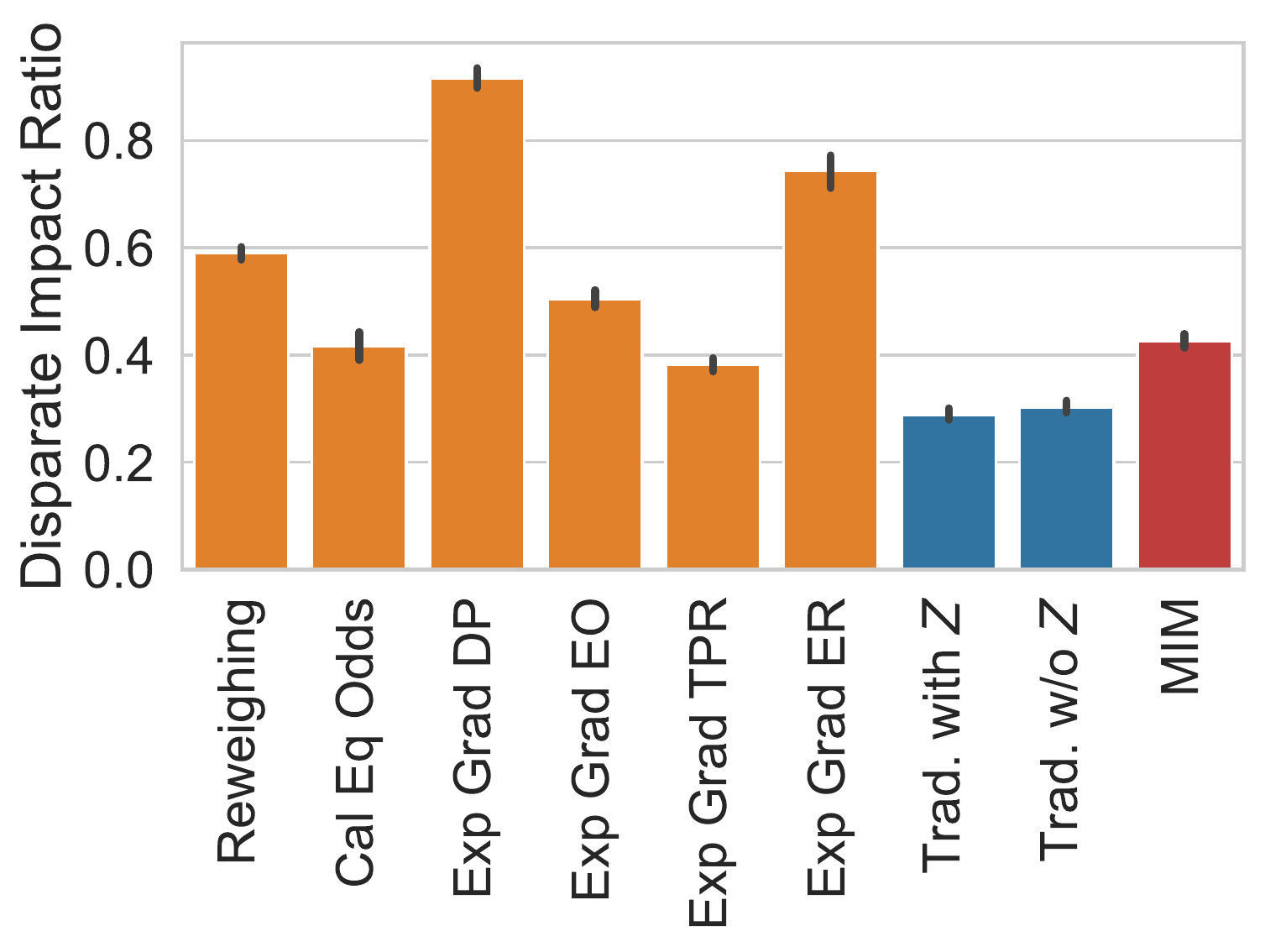}
\end{subfigure}
\begin{subfigure}{.3\linewidth}
  \centering
  \includegraphics[width=\linewidth]{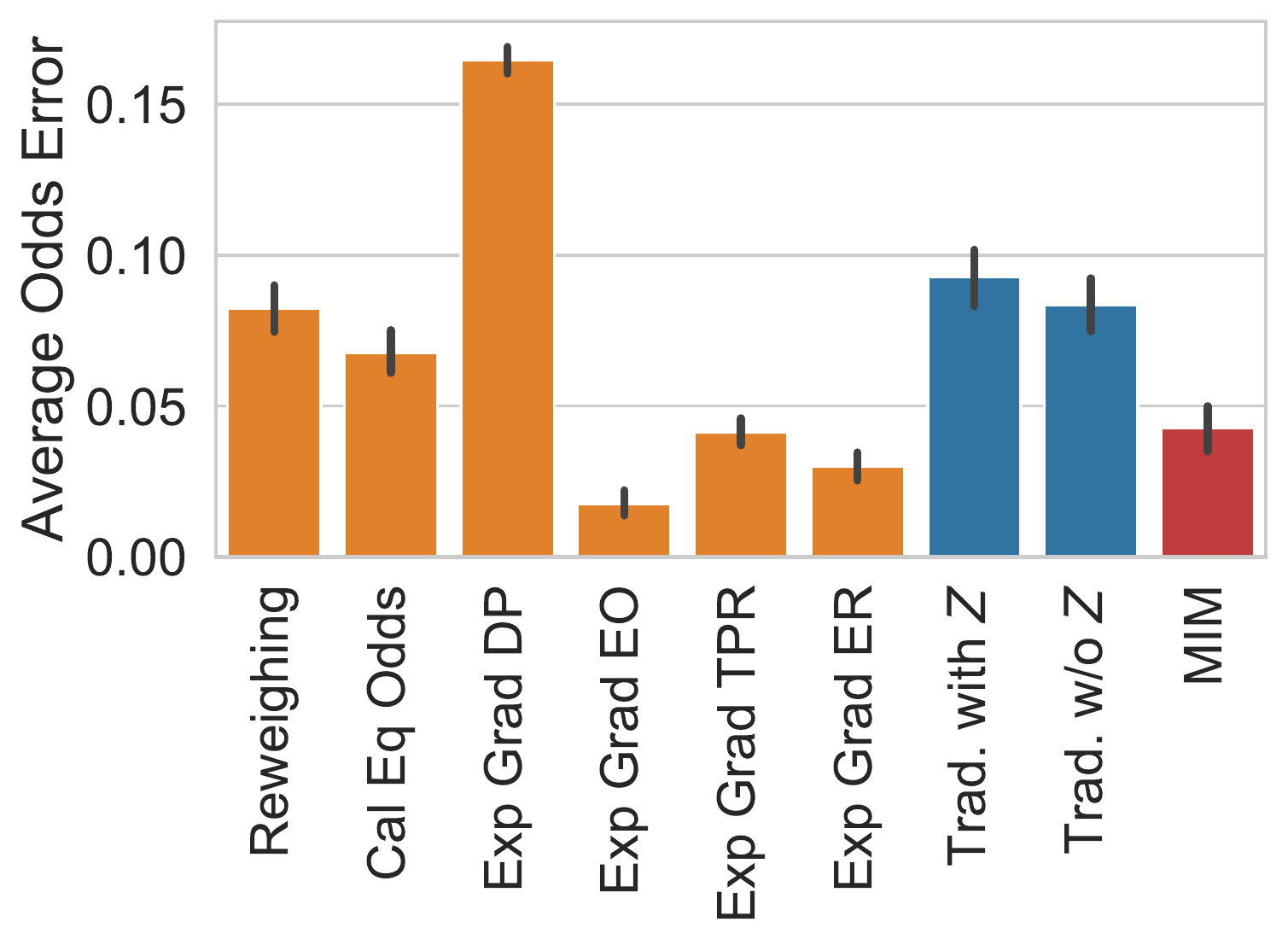}
\end{subfigure}
\begin{subfigure}{.3\linewidth}
  \centering
  \includegraphics[width=\linewidth]{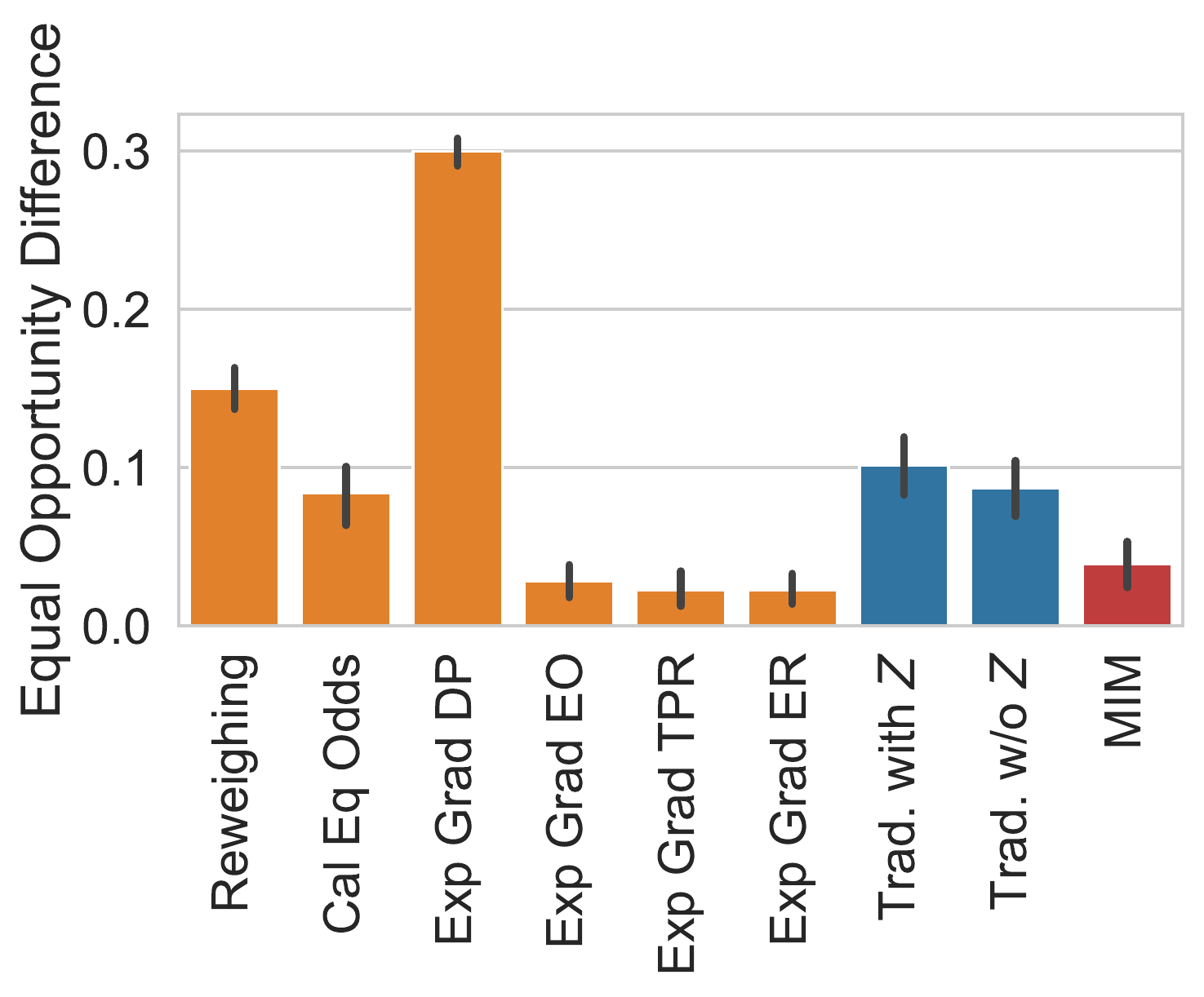}
\end{subfigure}

\caption{Disparate impact, equalized odds, and equal opportunity measures for the evaluated models on the Adult Census Income dataset. Error bars show 95\% confidence intervals.}
\label{fig:aifcensussm}
\end{figure*}

\section*{Appendix C: Marginal direct effect (MDE)}
In addition to the SHAP input influence measure, we measure the MDE ($\E_{\X_i,\X_i''} |\text{MDE}_{\Y}(X_i,X_i'')|$) on the same features for all datasets in Figures \ref{fig:aif111ate}-\ref{fig:aifgermanate}. All results for MDE are qualitative equivalent to that of SHAP for each dataset.

\begin{figure*}
\centering
\begin{subfigure}{.32\textwidth}
  \centering
  \includegraphics[width=\linewidth]{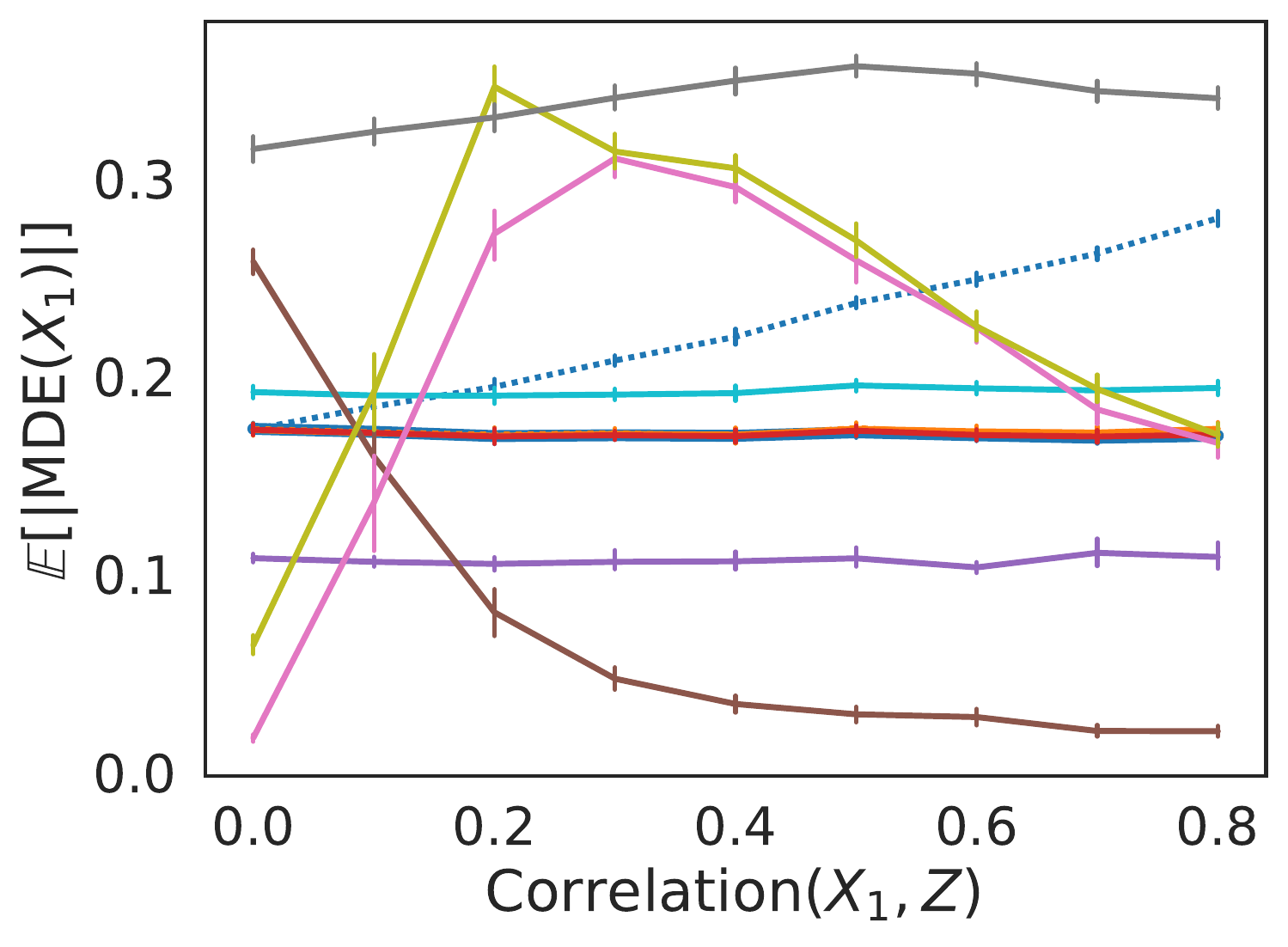}
\end{subfigure}
\begin{subfigure}{.32\textwidth}
  \centering
  \includegraphics[width=\linewidth]{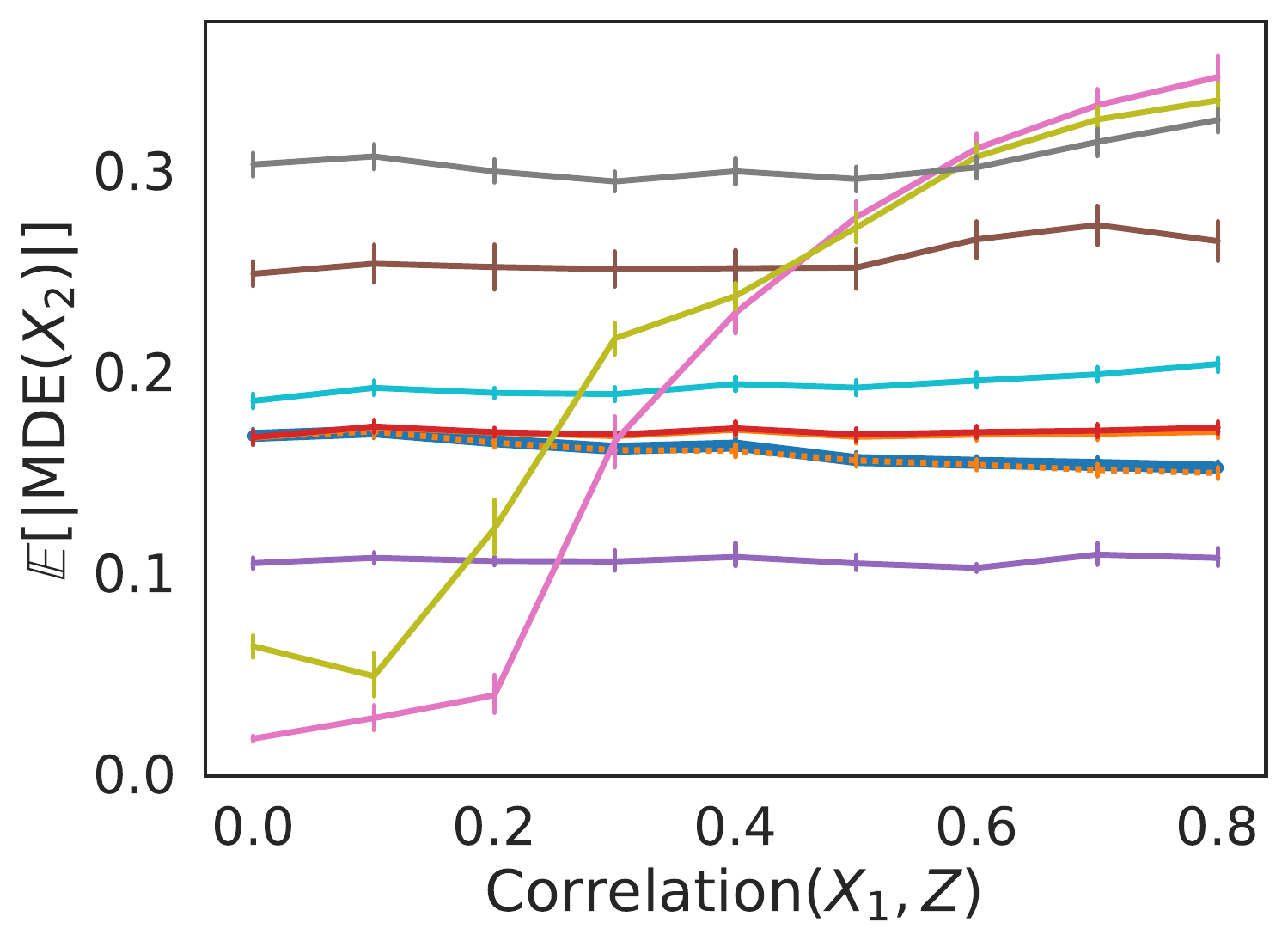}
\end{subfigure}
\begin{subfigure}{.32\textwidth}
  \centering
  \includegraphics[width=\linewidth]{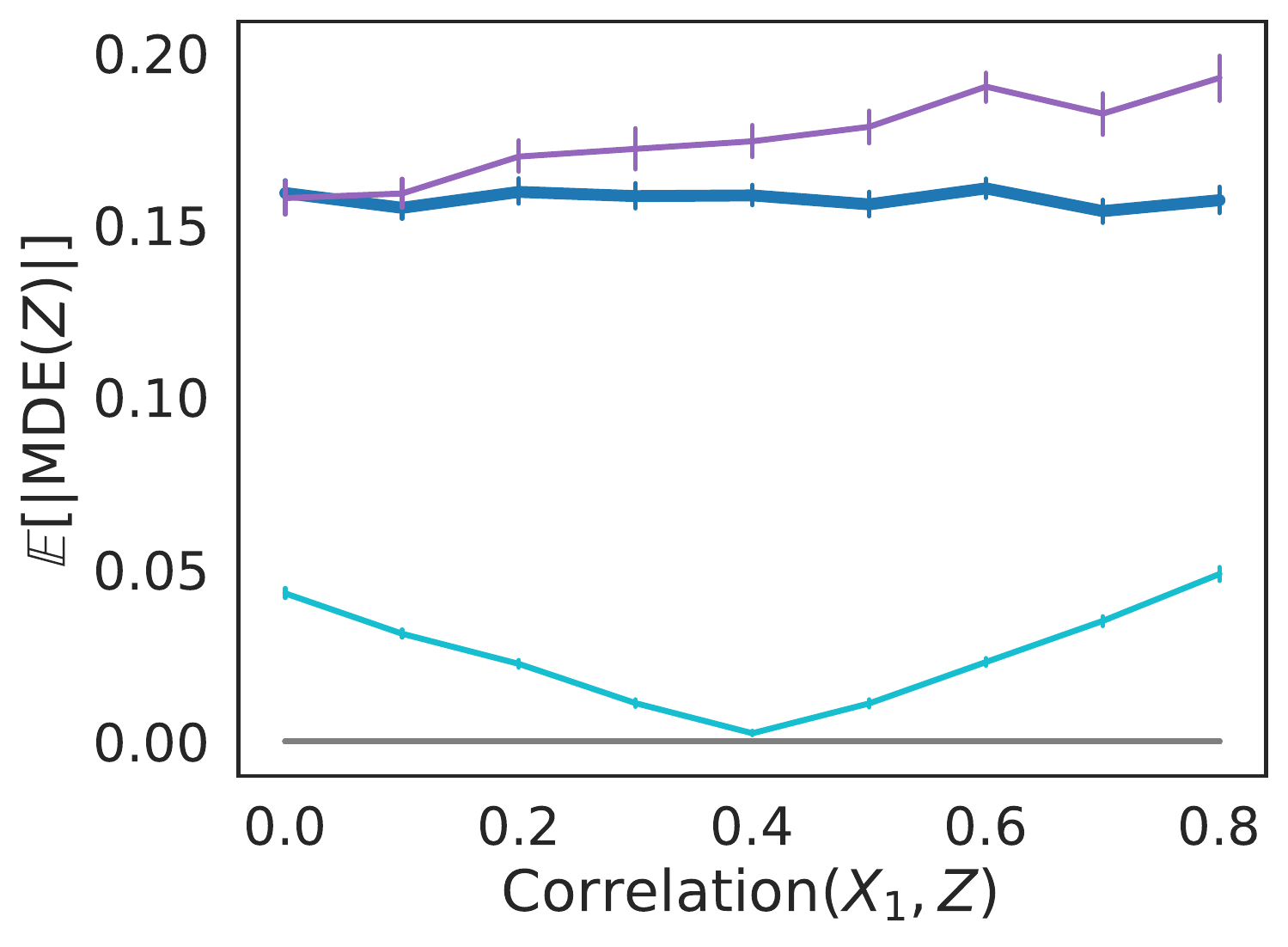}
\end{subfigure}
\begin{subfigure}{\linewidth}
  \centering
  \includegraphics[width=\linewidth]{figs/facct_results/scenario2_final/set3/legend3.pdf}
\end{subfigure}
\caption{Scenario A: MDE as we increase the correlation $r(X_1,Z)$. Error bars show 95\% confidence intervals.}
\label{fig:aif111ate}
\end{figure*}

\begin{figure*}
\centering
\begin{subfigure}{.32\textwidth}
  \centering
  \includegraphics[width=\linewidth]{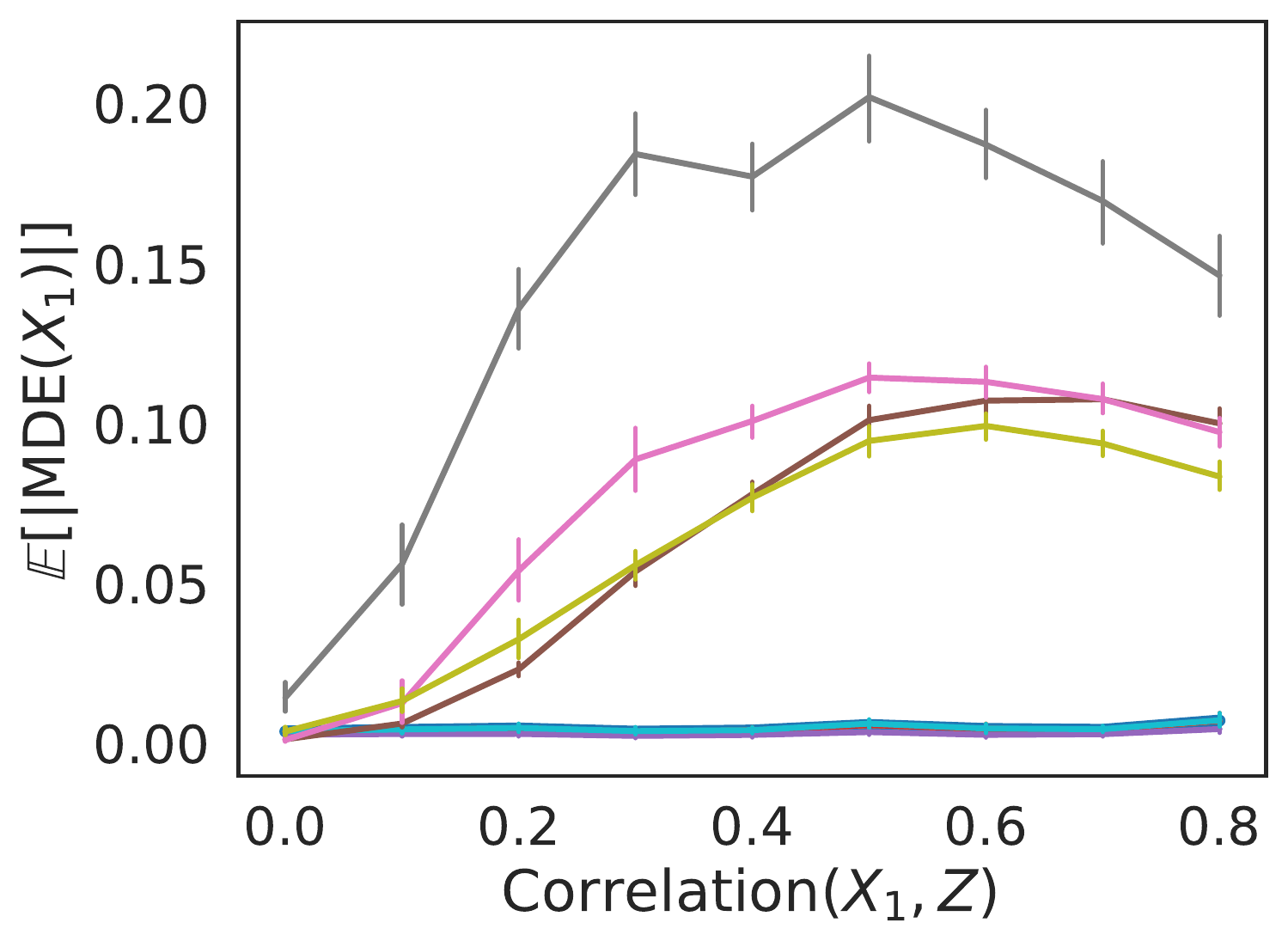}
\end{subfigure}
\begin{subfigure}{.32\textwidth}
  \centering
  \includegraphics[width=\linewidth]{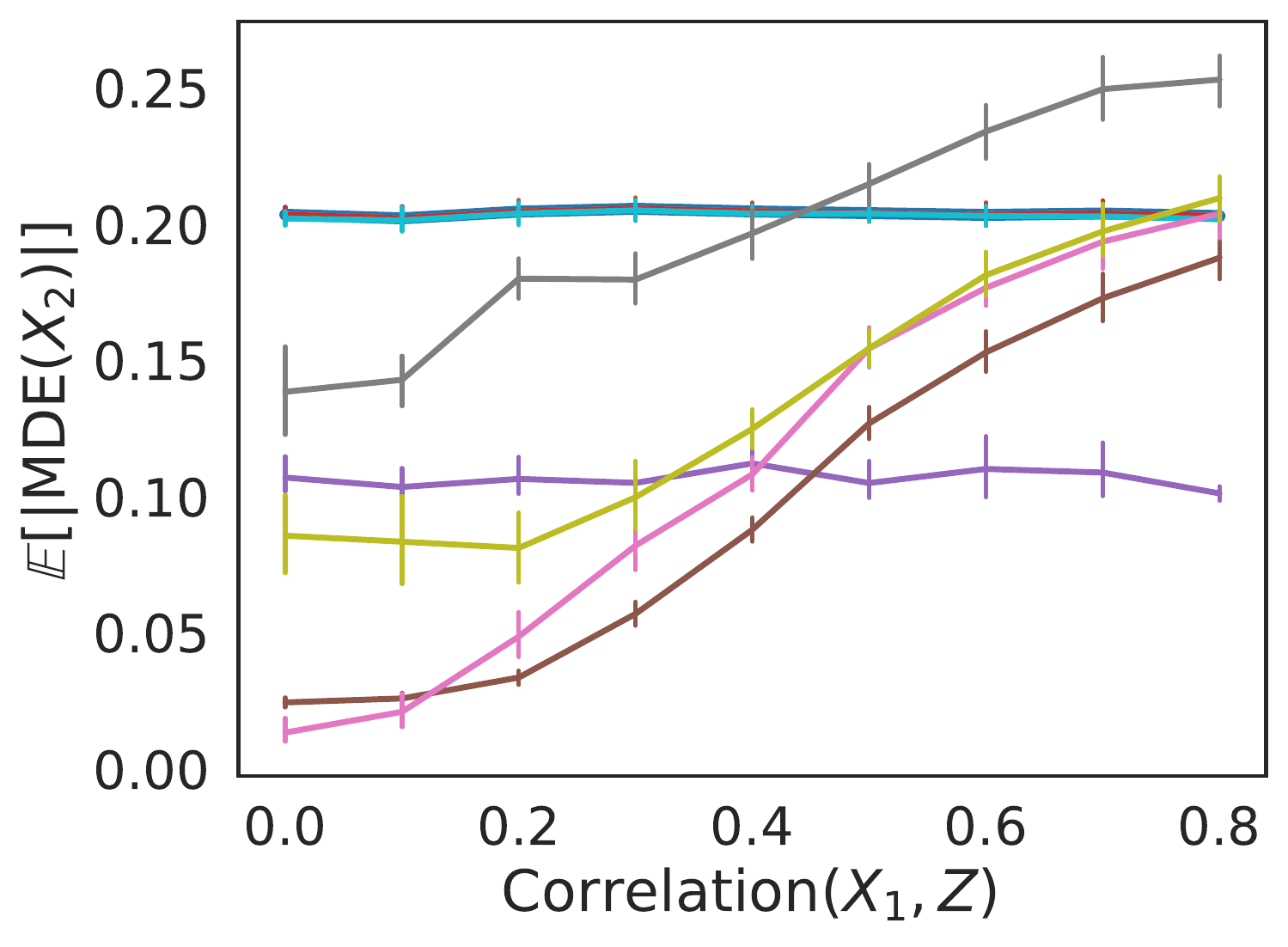}
\end{subfigure}
\begin{subfigure}{.32\textwidth}
  \centering
  \includegraphics[width=\linewidth]{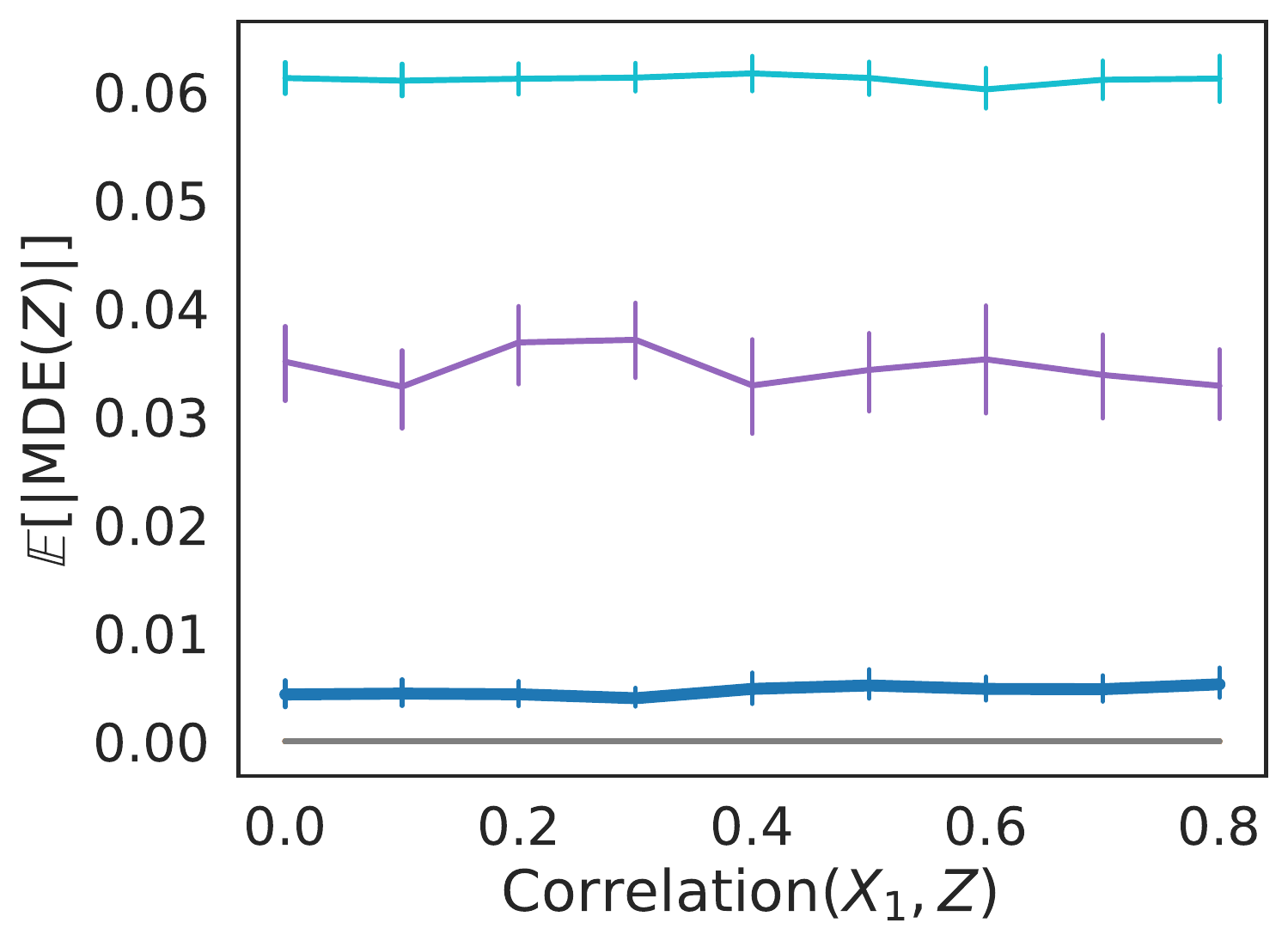}
\end{subfigure}
\begin{subfigure}{\linewidth}
  \centering
  \includegraphics[width=\linewidth]{figs/facct_results/scenario2_final/set3/legend3.pdf}
\end{subfigure}
\caption{Scenario B: MDE as we increase the correlation $r(X_1,Z)$. Error bars show 95\% confidence intervals.}
\label{fig:aif011ate}
\end{figure*}

\begin{figure*}
\centering
\begin{subfigure}{.32\textwidth}
  \centering
  \includegraphics[width=\linewidth]{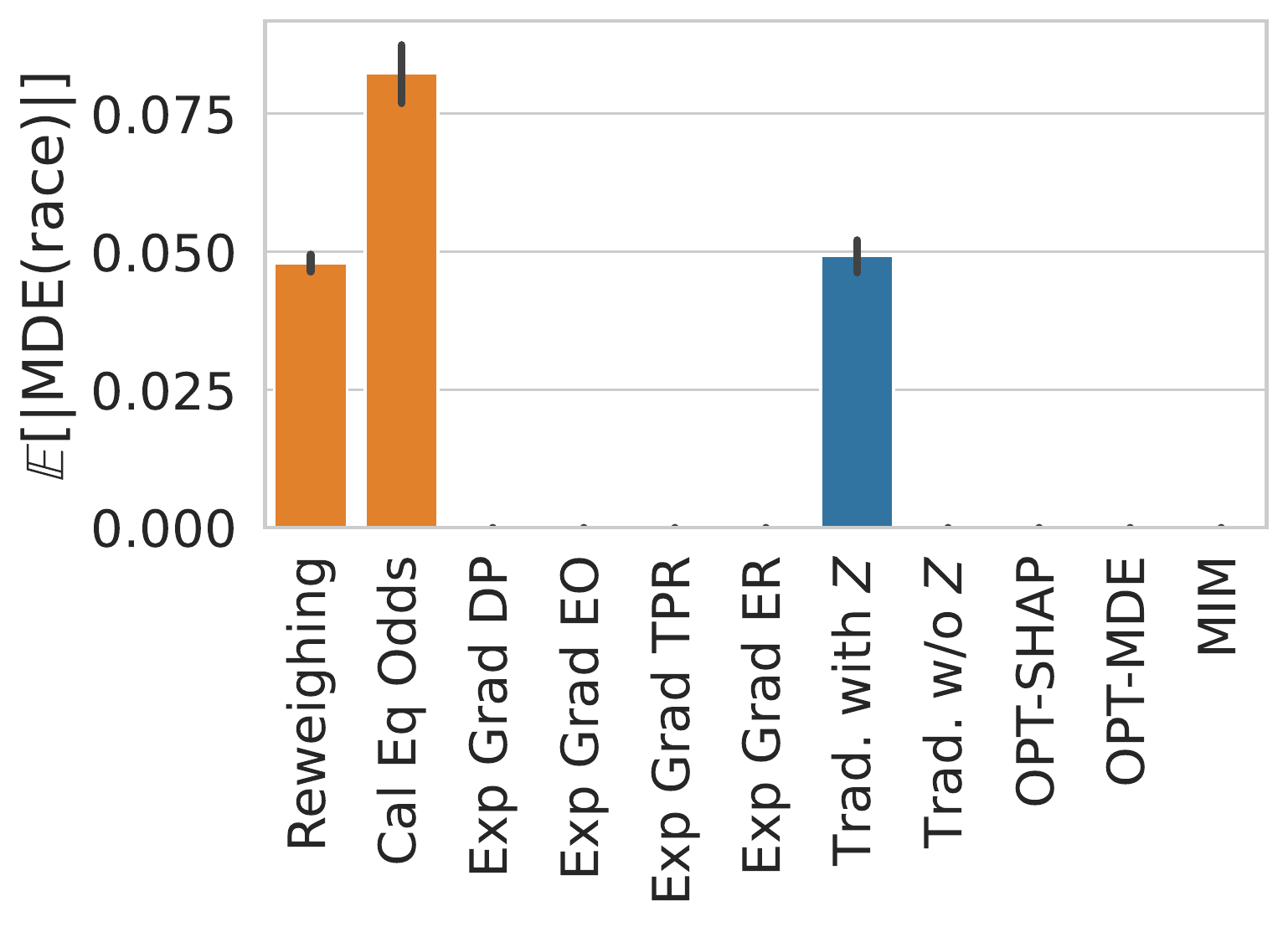}
\end{subfigure}
\begin{subfigure}{.32\textwidth}
  \centering
  \includegraphics[width=\linewidth]{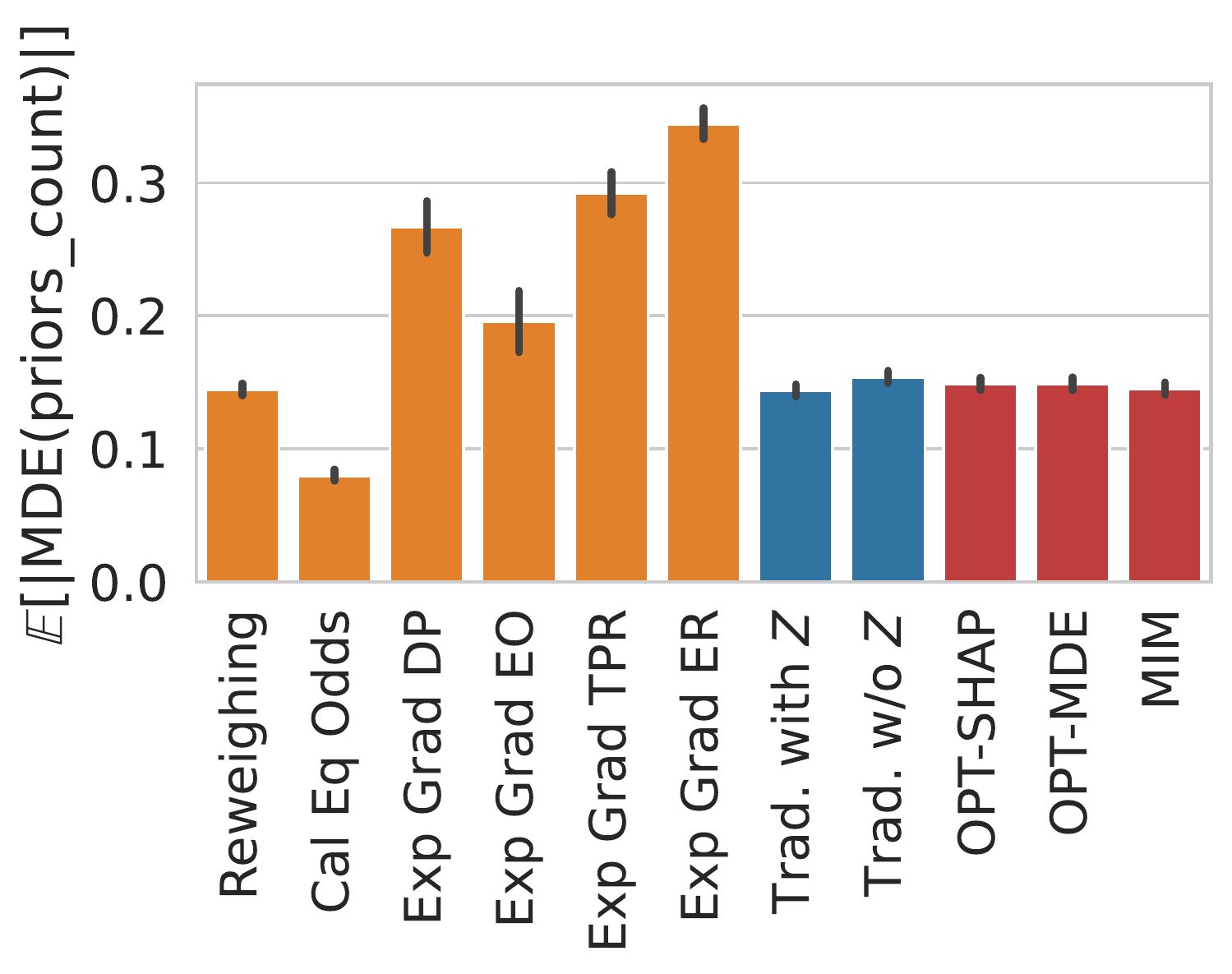}
\end{subfigure}
\begin{subfigure}{.32\textwidth}
  \centering
  \includegraphics[width=\linewidth]{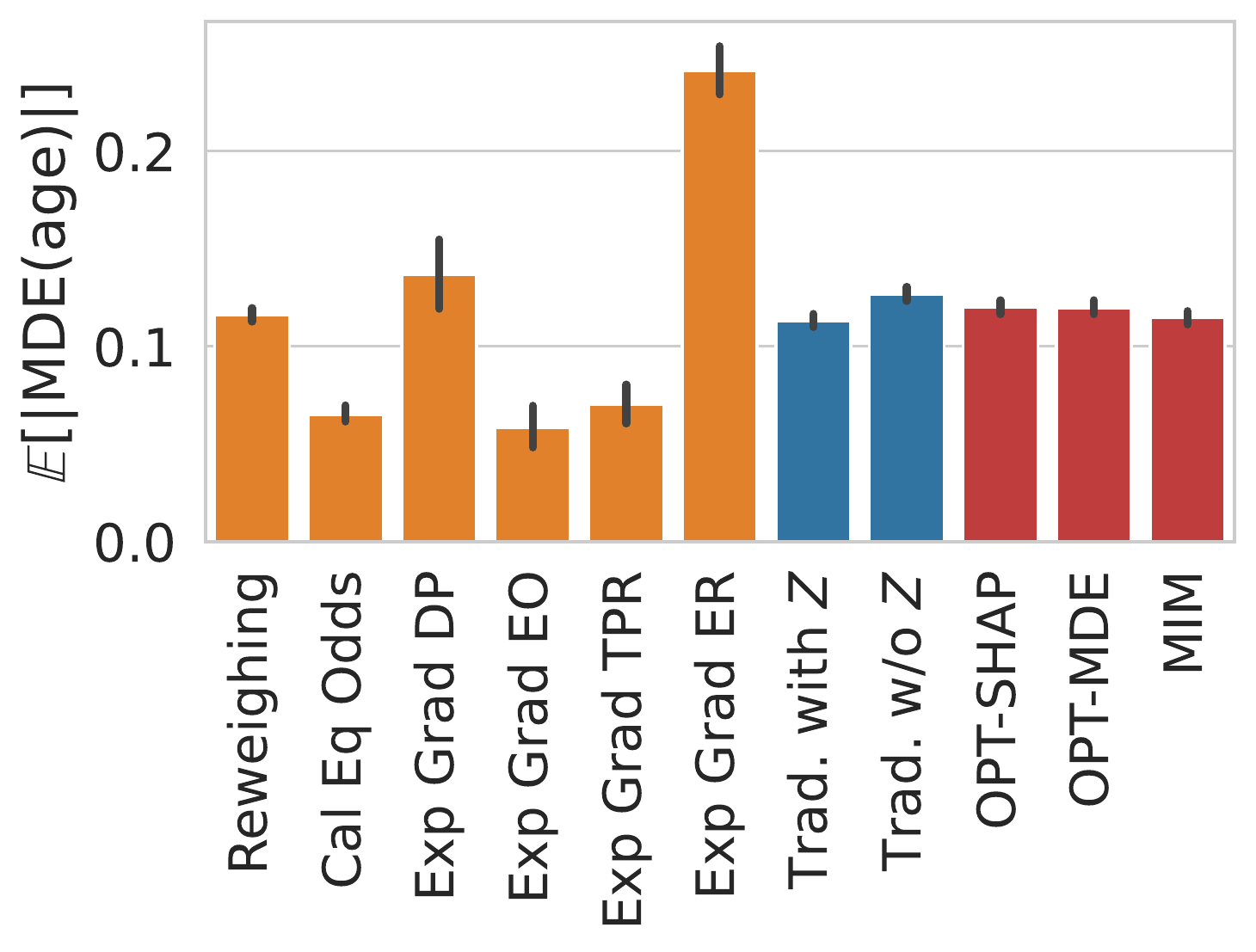}
\end{subfigure}

\caption{MDE for the protected attribute and two features most correlated with it for the evaluated models on the COMPAS dataset. Error bars show 95\% confidence intervals.}
\label{fig:aifcensusate}
\end{figure*}

\begin{figure*}
\centering
\begin{subfigure}{.32\textwidth}
  \centering
  \includegraphics[width=\linewidth]{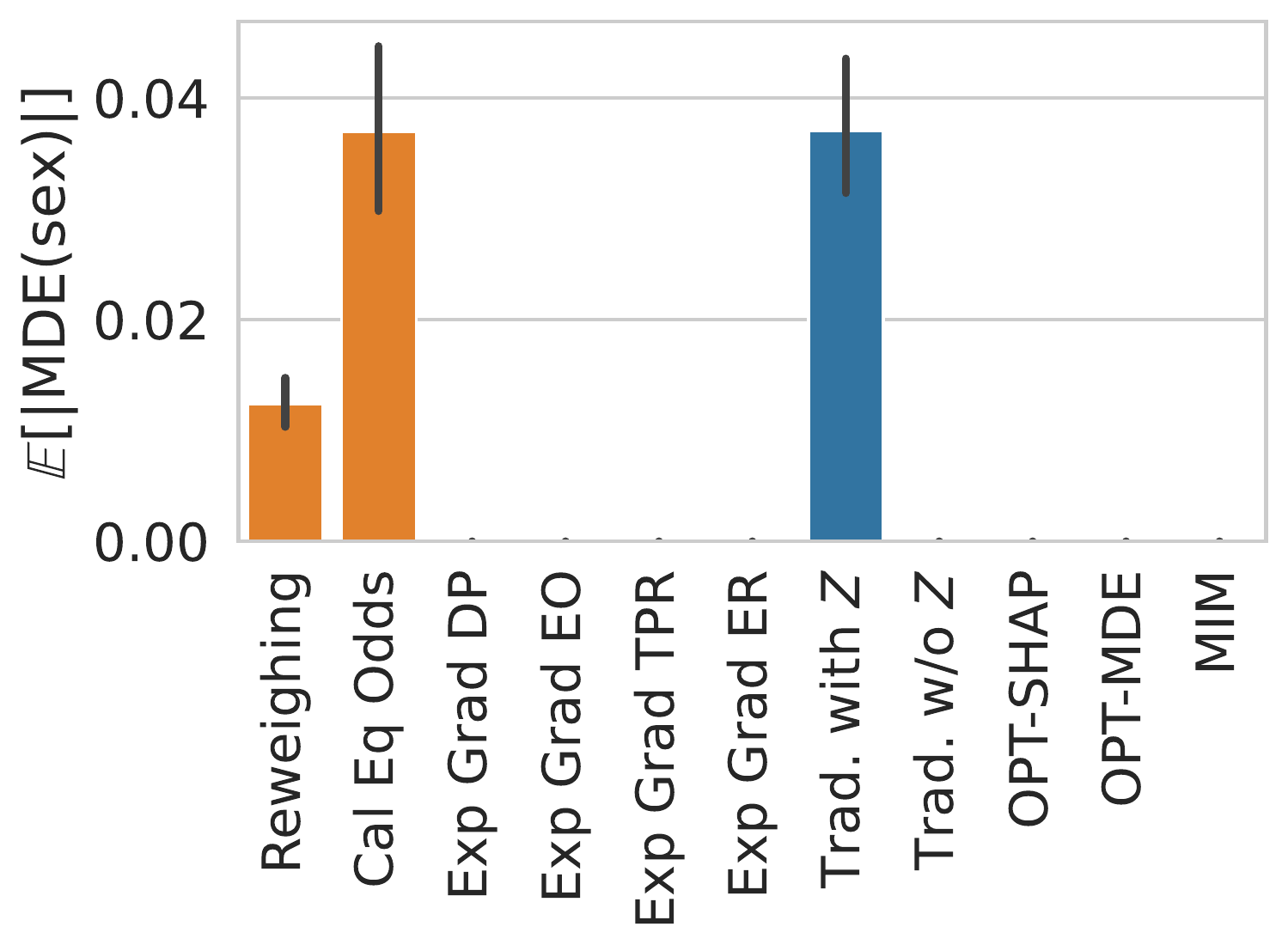}
\end{subfigure}
\begin{subfigure}{.32\textwidth}
  \centering
  \includegraphics[width=\linewidth]{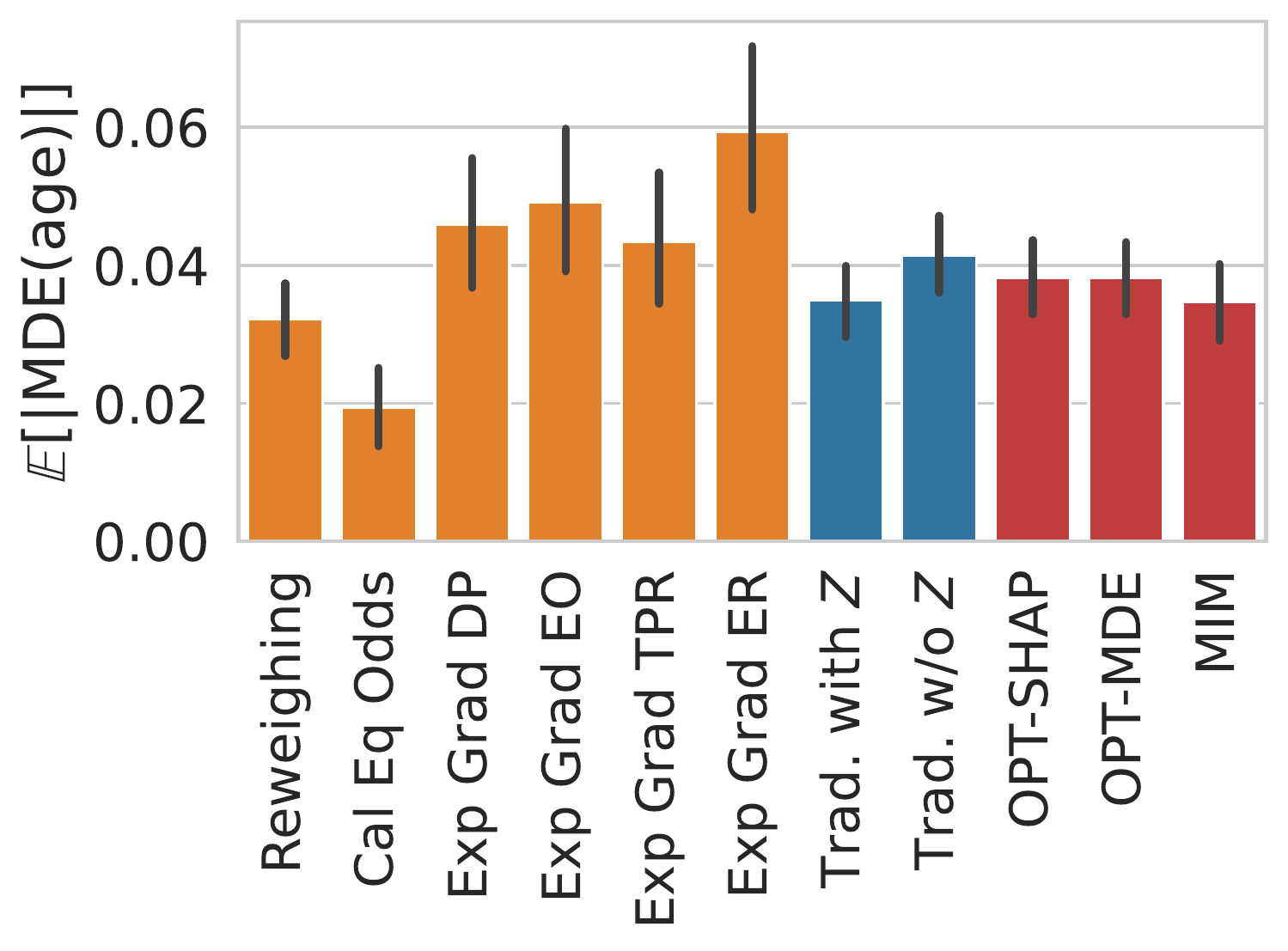}
\end{subfigure}
\begin{subfigure}{.32\textwidth}
  \centering
  \includegraphics[width=\linewidth]{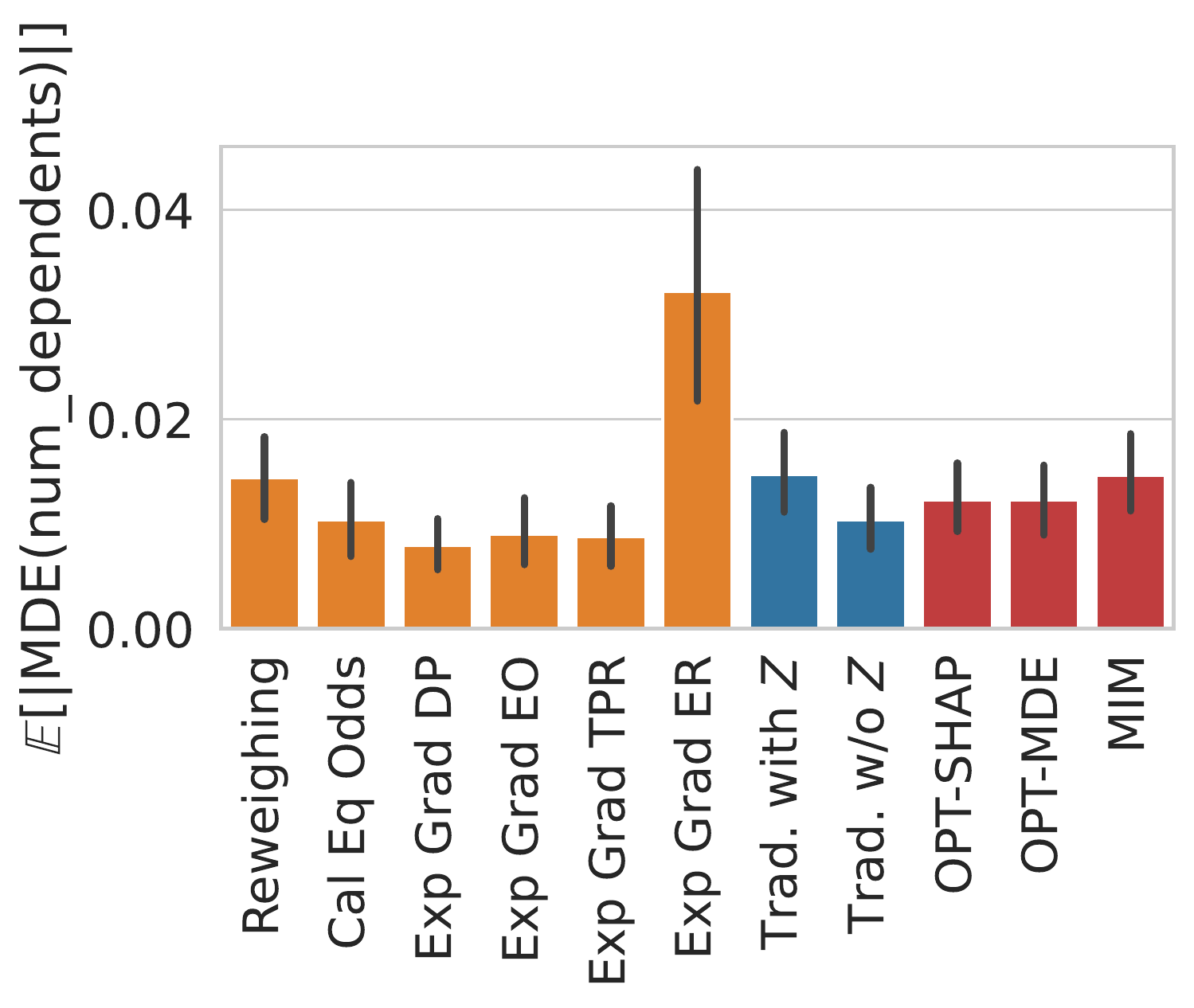}
\end{subfigure}

\caption{MDE for the protected attribute and two features most correlated with it for the evaluated models on the German Credit dataset. Error bars show 95\% confidence intervals.}
\label{fig:aifgermanate}
\end{figure*}

\begin{figure*}
\centering
\begin{subfigure}{.32\textwidth}
  \centering
  \includegraphics[width=\linewidth]{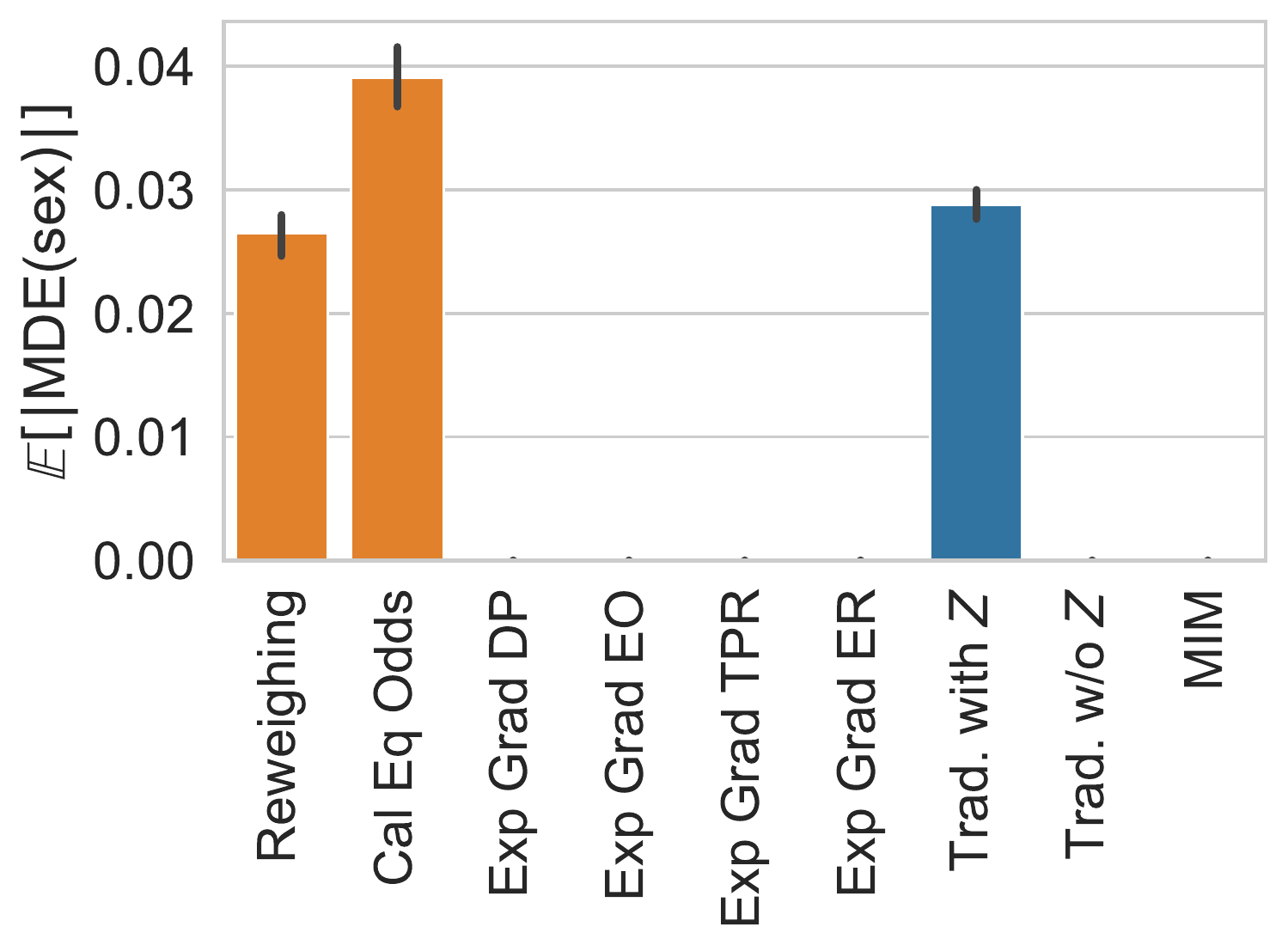}
\end{subfigure}
\begin{subfigure}{.32\textwidth}
  \centering
  \includegraphics[width=\linewidth]{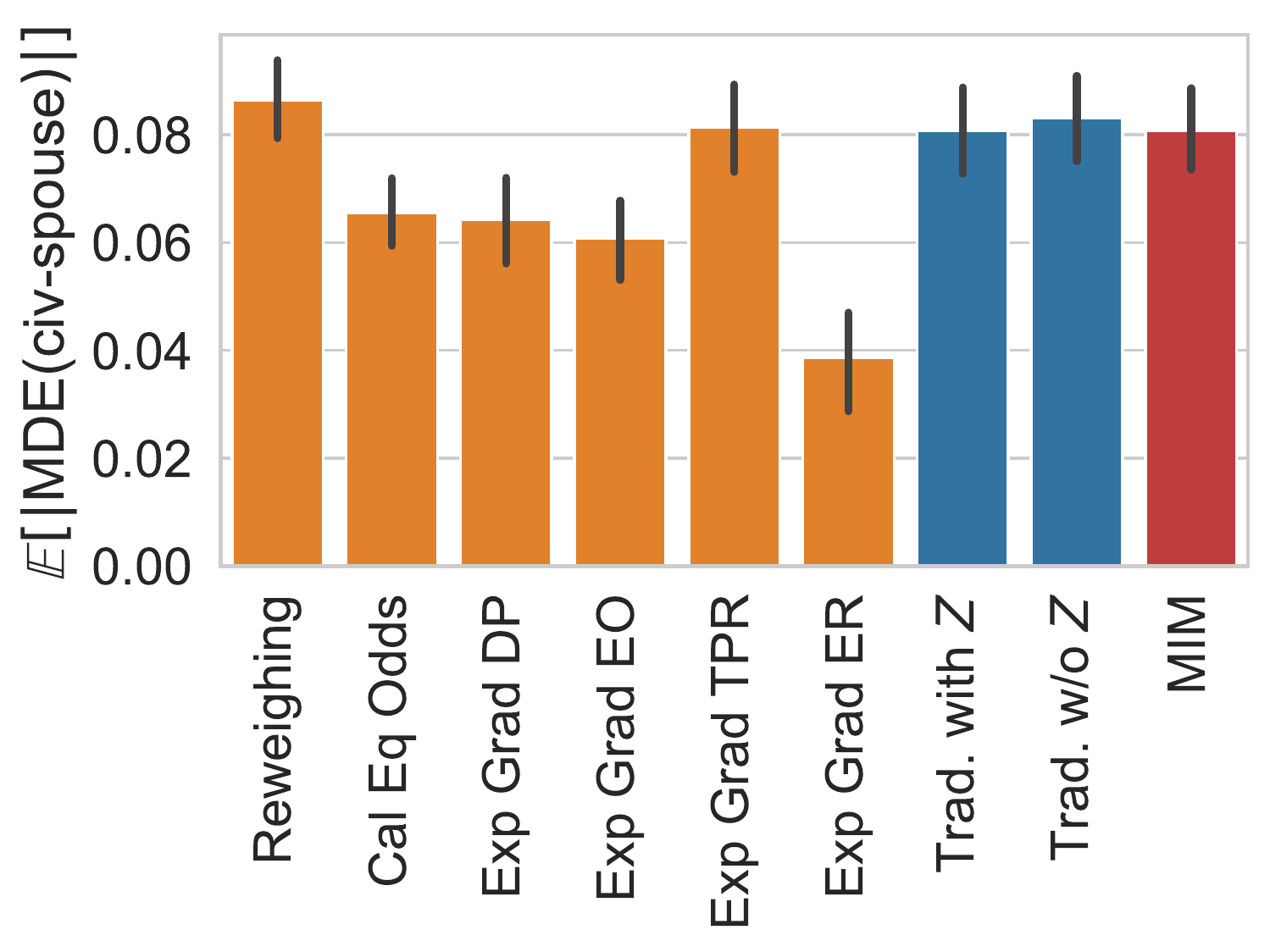}
\end{subfigure}
\begin{subfigure}{.32\textwidth}
  \centering
  \includegraphics[width=\linewidth]{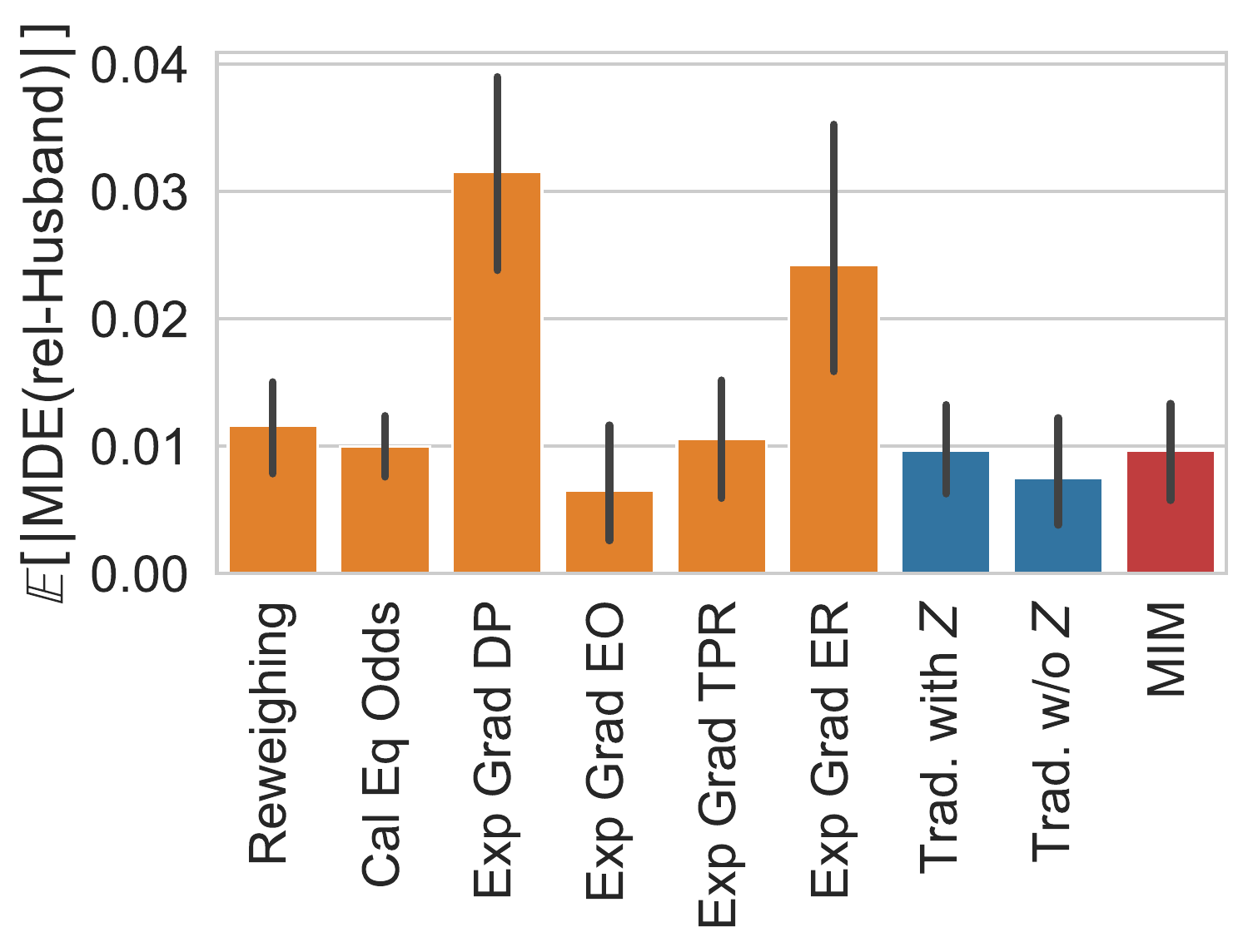}
\end{subfigure}

\caption{MDE for the protected attribute and two features most correlated with it for the evaluated models on the Adult Census Income dataset. Error bars show 95\% confidence intervals.}
\label{fig:aifgermanate}
\end{figure*}
\end{document}